\newcommand\MYhyperrefoptions{bookmarks=true,bookmarksnumbered=true,
pdfpagemode={UseOutlines},plainpages=false,pdfpagelabels=true,
colorlinks=true,linkcolor={magenta},citecolor={blue},pagecolor={black},
urlcolor={black},
pdftitle={A generalized risk approach to path inference based on hidden Markov models},
pdfsubject={Typesetting},
pdfauthor={J\"uri  Lember},
pdfkeywords={risk, HMM, hybrid, interpolation, MAP sequence, Viterbi
  algorithm, symbol-by-symbol, posterior decoding}}
\def\P{{\bf P}}
\def\R{{\bar R}}
\def\U{{\bar U}}
\begin{document}
\sloppy
\title{A generalized risk approach to path inference based on hidden Markov models}%
\author{\name J\"uri Lember
\email juri.lember@ut.ee \\
       \addr Institute of Mathematical Statistics\\
       Tartu University\\
       J. Liivi 2-507, Tartu, 50409, Estonia\\       
       \AND
       \name Alexey A. Koloydenko \email alexey.koloydenko@rhul.ac.uk \\
       \addr Department of Mathematics\\
       Royal Holloway, University of London\\
       TW20 0EX, UK}

\editor{}
\maketitle
\begin{abstract}
Motivated by the unceasing interest in hidden Markov
models (HMMs), this paper re-examines hidden path inference in these models, 
using primarily a risk-based framework.  
While the most common {\em maximum a posteriori} (MAP), 
or Viterbi, path estimator and the {\em minimum error}, or {\em Posterior Decoder} (PD) 
have long been around, other path estimators, or decoders, have been 
either only hinted at or applied more recently and in dedicated applications generally unfamiliar to
the statistical learning community. Over a decade ago, however,
a family of algorithmically defined decoders aiming
to hybridize the two standard ones was proposed \citep{Brushe1998}.      
The present paper gives a careful analysis of this hybridization approach, identifies several problems and issues with
it and other previously proposed approaches, and proposes practical resolutions of those.  
Furthermore, simple modifications of the classical 
criteria for hidden path recognition are shown to lead to a new class of 
decoders. Dynamic programming algorithms to compute these decoders in
the usual forward-backward manner are presented.
A particularly interesting subclass of such estimators can be also viewed as hybrids of 
the MAP and PD estimators. Similar
to previously proposed MAP-PD hybrids, the new class is parameterized by a small number
of tunable parameters.   Unlike their algorithmic predecessors, 
the new risk-based decoders are more clearly
interpretable, and, most importantly,  work ``out-of-the box'' in practice, which is demonstrated
on some real bioinformatics tasks and data. Some further generalizations and applications are
discussed in conclusion.
\end{abstract}
\begin{keywords}
Admissible path, HMM, hybrid, interpolation, MAP sequence, minimum error, optimal accuracy, symbol-by-symbol, 
posterior decoding, Viterbi algorithm. 
\end{keywords}

\section{Introduction}
\begin{sloppypar} 
Besides their classical and traditional applications in signal
processing and communications \citep{Viterbi, BahlJelinek1974,
HayesCover82, Brushe1998} (cf. also further references in
\citep{HMM}) and speech recognition \citep{vanaraamat, jelinek0,
jelinek, mitEM2,  philips, IBM2002, raamat, Rabiner86, mitEM1,
philips2,  strom}, hidden Markov models have recently become
indispensable in computational biology and bioinformatics
\citep{BurgeKarlin1997, BioHMM2, natHMM, BioHMM1, Brejova2008, MajorosOhler2007} as well as in
natural language modeling \citep{ManningStatNatLang, VogelStatTranslate} 
and  information security \citep{HMM4Crypto}.
\end{sloppypar}

At the same time, their spatial extensions, known as hidden Markov random field
models (HMRFM), have been immensely influential in spatial
statistics \citep{BesagGreen1993, SpatialHMM, HMRFGemanS, TitteringtonVarBayesHMMRF2009}, and
particularly in image analysis,  restoration, and segmentation
\citep{besag1986, gemans, gray2, Marroquin2003, Winkler2003}.
Indeed, hidden Markov models have been called `one of the most successful
statistical modeling ideas that have [emerged] in the last forty
years' \citep{HMM}.

HM(RF)Ms owe much of their success on the one hand to the
penetration of the Markov property from the hidden
layer to the posterior distribution, and  on the other, to the
richness of the observed system \citep{HMRFGemanS}. In
other words, in addition to the prior, the posterior distribution
of the hidden layer also possesses a Markov property (albeit
generally inhomogeneous even with homogeneous priors), whereas the
marginal law of the observed layer can still include global, i.e.
non-Markovian, dependence.

The Markov property of the posterior distribution and the
conditional independence of the observed variables given the hidden
ones, have naturally led to a number of computationally feasible
methods for inference about the hidden realizations as well as
model parameters.  HMMs are naturally a special case of {\it graphical 
models} \citep{lauritzen}, \citep[ch. 8]{Bishop2006}.

HMMs, or one dimensional HMRFMs, have been particularly popular not
least due to the fact that the linear order of the indexing set (usually
associated with time) makes exploration of hidden realizations
relatively straightforward from the computational viewpoint. In
contrast, higher dimensional HMRFMs generally require approximate,
possibly stochastic, techniques in order to compute optimal
configurations of the hidden field \citep{pickardfields1993,
jiali2006, Winkler2003, TitteringtonVarBayesHMMRF2009}.  In particular, a {\em maximum a posteriori}
(MAP) estimator of the hidden layer of an HMM is efficiently and
exactly computed by a dynamic programming algorithm bearing the name
of Viterbi, whereas a general higher dimensional HMRFM would
commonly employ a simulated annealing type method \citep{gemans,
Winkler2003} to produce approximate solutions to the same task.

There are also various useful extensions of the ordinary HMM, such as variable duration
semi-Markov models, and factorial HMMs, etc.  \citep[ch. 13]{Bishop2006}.  
All of the material in this paper is applicable to those extensions in 
a straightforward way. However, to simplify the exposition we focus below
on the ordinary HMM.

\subsection{Notation and main ingredients}
We adopt the machine and statistical learning convention, referring to the hidden and observed processes as $Y$ and $X$,
respectively, in effect reversing the convention that is more
commonly used in the HMM context. Thus, let $Y=\{Y_t\}_{t\ge 1}$ be
a Markov chain  with state space $S=\{1,\ldots,K\}$, $K>1$, and
initial probabilities $\pi_s=P(Y_1=s)$, $s\in S$.  
Although we include inhomogeneous chains in most of what follows, for
brevity we will still be suppressing the time index wherever this
does not cause ambiguity. Hence, we write
$\mathbb{P}=(p_{ij})_{i,j\in S}$ for all transition matrices. Let
$X=\{X_t\}_{t\geq 1}$ be a
process with the following properties. First, given $\{Y_t\}_{t\ge 1}$,  
the random variables $\{X_t\}_{t\ge 1}$ are conditionally
independent. Second, for each $t=1,2,\ldots$, the distribution of $X_t$ depends on 
$\{Y_t \}_{t\ge 1}$ (and $t$) only through $Y_t$. The process $X$ is sometimes called the 
{\it hidden Markov process} (HMP) and the pair $(Y,X)$ is referred to as a {\it
hidden Markov model} (HMM). The name is motivated by the assumption
that the process $Y$ (sometimes called a {\it regime}) is
generally non-observable. The conditional distribution 
 of $X_1$ given $Y_1=s$ 
is called an {\it emission distribution}, written as $P_s$, $s\in S$. 
We shall assume that the
emission distributions are defined on a measurable space 
$({\mathcal X},{\mathcal B})$, where ${\mathcal X}$ is usually $\mathbb{R}^d$
and ${\mathcal B}$ is the corresponding Borel $\sigma$-algebra. 
Without loss of
generality, we assume that the measures $P_s$ have densities
$f_s$ with respect to some reference measure $\lambda$, such 
as the counting or Lebesgue measure.

Given a set ${\mathcal A}$, integers $m$ and $n$, $m<n$, 
and a sequence $a_1,a_2,\ldots \in {\mathcal A}^\infty$,
we write $a_m^n$ for the subsequence $(a_m,\ldots,a_n)$. 
When $m=1$, it will be often suppressed. Thus,  $x^T:=(x_1,\ldots,x_T)$ and
$y^T:=(y_1,\ldots,y_T)$ stand for the fixed observed and unobserved
realizations, respectively, of the HMM $(X_t,Y_t)_{t\ge 1}$ up to
time $T\ge 1$. Any sequence $s^T\in S^T$ is called a \emph{path}.
This parallel notation (i.e. $s^T$ in addition to $y^T$) 
is necessitated largely by our forthcoming discussion of various loss functions, which
do require two arguments.   
We shall denote  the joint probability density
of $(x^T,y^T)$ by $p(x^T, y^T)$, i.e.
$$p(x^T, y^T):={\mathbf
P}(Y^T=y^T)\prod_{t=1}^Tf_{y_t}(x_t).$$ 
To make mathematical expressions more compact, we overload 
the notation when this causes no ambiguity. Thus,
$p(s^T)$ stands for the probability mass function ${\mathbf P}(Y^T=s^T)$ of 
path $s^T$, and $p(x^T)$ stands for the (unconditional) probability density 
function $\sum_{s^T\in S^T}p(x^T,s^T)$ of the observed data $x^T$. 
It is standard (see, e.g. \citep{HMP, HMM}, \citep[ch. 13]{Bishop2006}) in this context to define
the so-called {\it forward} and {\it backward} variables
\begin{eqnarray}\label{eqn:fwv}
\alpha_t(s):=p(x^t|Y_t=s)P(Y_t=s),&
\beta_t(s):=\left\{ \begin{array}{ll} 1, & \hbox{if $t=T$} \\
  p(x_{t+1}^T|Y_t=s), & \hbox{if $t<T$}
 \end{array}
 \right.,&
\end{eqnarray}
where $p(x^t|Y_t=s)$ and $p(x_{t+1}^T|Y_t=s)$ are the conditional densities
of the data segments $x^t$ and $x_{t+1}^T$, respectively, given $Y_t=s$.

\subsection{Path estimation}\label{sec:segment}
Our focus here is estimation of the hidden 
path $y^T$. This task can also be viewed as 
{\it segmentation} of the data sequence into regions with distinct
class labels \citep{intech}. 
Treating $y^T$ as missing data \citep{tutorial}, 
or parameters, a classical and by far the most popular solution to 
this task is to maximize  
$p(x^T, s^T)$ in $s^T\in S^T$. Often, especially in
the digital communication literature \citep{ErrorControCoding,
Brushe1998}, $p(x^T, s^T)$ is called the {\it likelihood function} 
which might become potentially problematic in the presence of any genuine model parameters. 
Such ``maximum likelihood'' paths are also called {\em Viterbi paths} or {\em Viterbi alignments} 
after the Viterbi algorithm \citep{Viterbi, tutorial} commonly used for their computation. 
If $p(s^T)_{s^T\in S^T}$ is thought of as the
prior distribution of $Y^{T}$, then the Viterbi path also maximizes
$p(s^T|x^T):={\mathbf P}(Y^T=s^T|X^T=x^T)$, the probability mass
function of the posterior distribution of $Y^T$, hence the term `{\em
maximum a posteriori (MAP) path}'.

In spite of its computational attractiveness, inference based on the Viterbi paths may
be unsatisfactory for a number of reasons, including its
sub-optimality with regard to the number of correctly estimated
states $y_t$. Also, using the language of information theory, there is no
reason to expect a Viterbi path to be
{typical} \citep{AVTproof}. Indeed, ``there might be many similar paths through   
the model with probabilities that add up to a higher 
probability than the single most probable path'' \citep{KallKrogh2005}. 
The fact that a MAP estimate need not be representative of the posterior distribution
has also been recently discussed in a more general context in
\citep{CentroidEstimators2008}.  Atypicality of Viterbi paths particularly concerns
situations when estimation of $y^T$ is  combined with inference about 
model parameters, such as the transition probabilities $p_{ij}$ \citep{AVTproof}. Even when
estimating, say, the probability of heads from independent tosses of
a biased coin, we naturally hope to observe a typical realization
and not the constant one of maximum probability.

An alternative and  very natural way to estimate $y^T$ is by
maximizing the posterior probability $p_t(s|x^T):={\mathbf P}\left(Y_t=s|X^T=x^T\right)$ 
of each individual hidden state $Y_t$, $1\le t\le T$ \citep{BahlJelinek1974}. We refer to the corresponding 
estimator as {\it pointwise maximum a posteriori (PMAP)}. 
PMAP is well-known to maximize the expected number of correctly estimated
states (Section \ref{sec:risk}), hence the characterization `{\it optimal accuracy}' 
\citep{HolmesDurbin98}.
In statistics,
especially spatial statistics and image analysis, this type of
estimation is known as {\em Marginal Posterior Mode}
\citep{Winkler2003} or {\em Maximum Posterior 
Marginals} \citep{RueLossMPM1995} (MPM) estimation.  
In computational biology, this is also known as the {\it posterior decoding} (PD) \citep{Brejova2008}
and has been reported to be particularly successful in pairwise sequence alignment
\citep{HolmesDurbin98} and when more than one path has its posterior probability as ``high'' or nearly as
``high'' as that of the Viterbi path \citep{natHMM}.
In the wider context of biological applications of
discrete high-dimensional probability models,
this has also been 
called {\it consensus} estimation, and in the absence of constraints, {\it centroid} estimation 
\citep{CentroidEstimators2008}. In communications
applications of HMMs, largely  influenced by the BCJR algorithm of 
\citep{BahlJelinek1974},  the terms `{\em optimal symbol-by-symbol
detection}' \citep{HayesCover82}, `{\em symbol-by-symbol MAP
estimation}' \citep{Robertson95optimaland}, and `{\em MAP state
estimation}' \citep{Brushe1998} have been used for this.  Remarkably,
even before observing the data, optimal accuracy (i.e. based on the prior
instead of the posterior distribution) decoding can still be more
accurate than the Viterbi decoding (see subsection \ref{sec:expsummary})! 

\subsubsection{How different are PMAP and MAP inferences 
and how much room is in between the two?}\label{sec:pathdiffers}
This is a natural question in both practice and theory, 
especially for anyone  interested in 
seeking to improve performance of applications based on these methods while
maintaining their computational attractiveness. 

The belief that the 
difference between PMAP and Viterbi inferences is negligible may 
in part be explained by the concluding remark made in \citep{BahlJelinek1974} in the
special context of linear codes: ``Even though Viterbi decoding is not
optimal in the sense of bit error rate, in most applications of
interest the performance of both [PMAP and Viterbi] algorithms would be effectively
identical.'' This conclusion may also be 
explained by the dominance of binary chains in the telecommunication 
applications, and the binary state space indeed leaves too little room for the two inferences to differ. 
However, as HMMs with larger state spaces gained more prominence, 
it became clear that there could be appreciable differences between the PMAP and Viterbi 
inferences.  In fact, already two decades later, \citep{Brushe1998} contemplate
hybridization of the PMAP and Viterbi decoders, writing 
``Indeed, there may be applications where a delicate performance 
dependence exists between [the Viterbi and PMAP] estimates.  In such
cases, the use of a hybrid scheme \ldots may result in performance gains.''
We return to their idea later in this paper.

We are not aware of any systematic comparisons of the PMAP and 
Viterbi decoders apart from the aforementioned observations. 
Soon after the first version of this article was posted on \texttt{arXiv}, however,   
\citep{YauHolmes2010arXiv} reported similar interests in the matter, supported
by real and simulated examples. 
Of course, 
it has long been well-known \citep{tutorial} that
despite being optimal in the sense of maximizing the expected number of
correctly estimated states, a PMAP path can at the same time
have very low, possibly zero, probability. Thus, 
on the logarithmic scale, the difference in path probabilities between 
the PMAP and Viterbi decoders can easily be {\it infinite}.  In subsection~\ref{sec:example},
we give a real data example with only six hidden states to show that 
besides the infinite difference in the log-probabilities,
the two decoders can differ by more than 13\% in accuracy (i.e. error rate). 

We first show (subsection \ref{sec:pvdmore}) that it is actually not difficult to constrain the PMAP decoder 
to {\it admissible} paths, where admissibility is defined relative to the posterior distribution. Specifically, given $x^T$, a path $y^T$ is called
{\it admissible if its posterior probability $p(y^T|x^T)$ is defined and positive}, i.e. if $p(x^T, y^T)>0$.
We then explain that constraining the PMAP decoder 
to the paths of positive prior probability, as already done by others (see more below), is  not sufficient 
(albeit necessary) for admissibility of the PMAP paths.
Note that in a slightly more general form
allowing for state aggregation,  \citep{KallKrogh2005} do exactly this, i.e. force PMAP paths to have
positive prior probability, referring to the result as ``a possible path through the model''.
Thus, \citep{KallKrogh2005} appear to ignore that having a positive prior probability 
is {\it not sufficient in general for a PMAP path to be ``a possible path through the model''}, unless,
of course, ``the model'' is to be understood as the hidden Markov chain only and not the whole HMM. 
We will refer to the PMAP decoder constrained to the
admissible paths   as the {\it admissibly constrained PMAP}, or, simply
 {\it constrained PMAP}. This also details and clarifies our earlier discussion of
admissibility in \citep[Section 2]{intech}, which, like \citep{tutorial, KallKrogh2005}, 
also ignored the distinction between {\it a priori} and {\it a posteriori} modes of admissibility.

A variation on the same idea of making PMAP paths admissible has  been 
applied in \citep{PMAP2005} for  prediction of membrane proteins,
giving rise to the \emph{posterior Viterbi decoding (PVD)} \citep{PMAP2005}.   
PVD, however, maximizes the product $\prod_{t=1}^Tp_t(s_t|x^T)$ \citep{PMAP2005} (and also 
\eqref{eq:remove0slog} below) and not the sum 
$\sum_{t=1}^Tp_t(s_t|x^T)$, whereas the two criteria 
are {\it no longer equivalent in the presence of path constraints} (subsection \ref{sec:pvdmore}).
While acknowledging this latter distinction between their decoder and PVD and
not distinguishing between the prior and posterior modes of admissibility, \citep{KallKrogh2005}
appear to be unaware of the other distinction between their decoder and PVD: PVD 
paths are guaranteed to be of not only positive prior probability
but also of positive posterior probability, i.e. admissible (in our sense of the term).
In \citep{HolmesDurbin98}, a PMAP decoder is proposed to obtain
optimal pairwise sequence alignments.  \citep{HolmesDurbin98} use the term 
``a legitimate alignment'' which suggests admissibility, but the description
of their algorithm \citep[Section 3.8]{HolmesDurbin98} appears to be insufficiently
detailed to verify if the output is guaranteed to be admissible, or only of positive prior
probability, or, if inadmissible solutions are altogether an issue in that context.

Our own experiments
(Section~\ref{sec:example}) show that both PVD and constrained PMAP
decoder can return paths of very low (posterior) probabilities.  
Moreover, in many applications, e.g. gene identification and protein secondary structure prediction, the pointwise (e.g. nucleotide level) 
error rate is not necessarily the main  measure of accuracy (see also subsection \ref{sec:moremotivation} below), hence the constrained PMAP 
need not be an ultimate answer in that respect either.  Together with the above problem of 
atypicality of MAP paths, this has been addressed by moving from single path inference
towards \emph{envelopes} \citep{HolmesDurbin98}.
Thus, for example, in computational biology a common approach would be to aggregate individual 
states into a smaller number of semantic labels (e.g. codon, intron, intergenic).  
In effect, this would realize the notion of path similarity 
by mapping many ``similar'' state paths to a single label path, or {\it annotation}
\citep{Krogh97twomethods, KallKrogh2005, PMAP2005, Brejova2008}.  However, since this
mapping would usually be many-to-one (what  \citep{Brejova2007} refer to as the ``multiple path problem''), 
the annotation of the Viterbi path would  generally be inferior to the optimal (i.e.~MAP) annotation.
On the other hand, to compute the MAP annotation in many
practically important HMMs can be NP-hard \citep{Brejova2007} (which is not surprizing given
that the coarsened hidden chain on the set of labels is generally no longer Markov). Unlike the 
Viterbi/MAP decoder, the
PMAP decoder, owing it to its symbol-by-symbol nature, handles annotations as easily as it does state paths,
including the enforcement of admissibility. Interpreting admissibility relative to the prior distribution, 
this was shown in \citep{KallKrogh2005}, and this paper extends their result to  
admissible (i.e. relative to the posterior probability) paths and 
indicates further extensions in Section \ref{sec:discussion}.

A number of alternative heuristic
approaches are also known in computational biology,
but none appears to be fully satisfactory \citep{Brejova2008}.   
Overall, although the original Viterbi decoder has still been the most popular paradigm in 
many applications, and in computational biology in particular, alternative approaches have often 
demonstrated significantly better performance, e.g., in predicting various biological features.
For example, \citep{Krogh97twomethods} suggested the {\it $1$-best} algorithm for optimal labeling.  
More recently, \citep{PMAP2005} have demonstrated PVD to be superior to the $1$-best
algorithm, and, not surprisingly, to the Viterbi and PMAP decoders, on tasks of predicting membrane proteins.

Thus, a starting point of this contribution was that restricting the PMAP decoder to admissible paths 
is but one of {\it numerous ways to combine the strong points of the MAP and PMAP path estimators}.
Indeed, the popular seminal tutorial \citep{tutorial}  briefly mentions maximization of 
the expected number of correctly decoded (overlapping) blocks of length two or three, 
rather than single states as a sensible remedy 
against vanishing probabilities (albeit leaving it unclear if 
prior or posterior probability is meant).  With $k\ge 1$ and $\widehat y^T(k)$ being the block length 
and corresponding path estimate, respectively, this approach 
yields Viterbi inference as $k$ increases to $T$ (with $\hat y^T(1)$ corresponding to 
PMAP). Therefore, this could be interpreted as discrete interpolation 
between the PMAP and Viterbi inferences.  Intuitively, following Rabiner's logic,
one might also expect  $p(x^T, \widehat y^T(k))$ to  
increase with $k$. However, {\it this is not true} and it is possible for 
the  decoder $\hat y^T(2)$ to  produce an inadmissible (with the prior probability
being also zero) path
while the PMAP path is admissible:
$p(x^T, \widehat y^T(2))=0=p(\widehat y^T(2))<p(x^T, \hat y^T(1))$.  {\it We are not aware 
of this observation being previously made in the literature.}
Moreover, our experiments in 
Section~\ref{sec:example} show that this situation is far from being uncommon. 

On a related note, concerned with the same deficiencies of
the MAP and PMAP inferences, \citep{YauHolmes2010arXiv} have most recently 
also used the decision-theoretic framework to allow for full asymmetry in the othewise symmetric 
pairwise loss (see \eqref{eq:dnew} with $k=2$ below) that underpins the  decoder $\hat y^T(2)$.
This is no doubt a very natural extension to provide to the end user, and (partially) 
asymmetric pairwise losses  had indeed been incorporated in a prominent webserver in the context of 
RNA secondary structure prediction  \citep{Sato01072009}.

Despite the possibility of $\hat y^T(2)$ or its asymetric syblings to return forbidden sequences, 
we find the idea of interpolation between the PMAP and
Viterbi inferences very interesting.  Besides \citep{YauHolmes2010arXiv} 
acknowledging the need for intermediate modes of inference, to the best of our knowledge, the only
published work that explicitly proposes such an interpolation is
\citep{Brushe1998}. Their approach  is algorithmic, which makes 
it difficult to interpret its paths in general and analyze their properties (e.g. asymptotic
behavior). More importantly, \citep{Brushe1998} claim that the family of their
interpolating decoders can work in practice, which, as we explain in detail in Section 
\ref{subsec:alg}, is not true apart from trivial situations.  
Despite these and other deficiencies of their approach, it raises some interesting
questions and inspires interesting modifications, which we also discuss in Section \ref{subsec:alg}.

\subsubsection{Further motivation}\label{sec:moremotivation}
One other motivation for considering new decoders is that unlike the error rate or path 
probability, analytic optimization of
other performance measures (e.g. Matthew's correlation 
\citep{protein_secondary_struct_predict_2006_borodovsky}, $Q_2$, $Q_{ok}$, SOV \citep{PMAP2005}, etc.) 
used in practice is difficult if at all possible. Having a large family of computationally 
efficient decoders, such as the new generalized hybrid decoders, and using some training data, 
one can select empirically a member from the family 
that optimizes the performance measure of interest. More generally, it seems
 advantageous for applications to be aware of the new choices of decoders and their properties.

Also, depending on the application, the emphasis sometimes shifts from  purely automatic
decoding with hard decisions to data exploration. It is then particularly valuable
to gain insights into the topology of the state space in the sense of 
identifying compartments of high concentration of the posterior distribution.
The significance of identifying clusters (of similar sequences) of high (total) posterior probability
in high-dimensional discrete spaces has been recently discussed in \citep{CentroidEstimators2008},
and a  thorough discussion of the advantages of topological and geometric approaches to analysis of 
complex data in general has more recently appeared in \citep{GCarlssonTopologyAndData2009}. 
In this context, it appears to be beneficial 
to output a family of related decodings instead of one or several (``$N$ best'') decodings that are 
optimal relative to a single criterion such as MAP. For instance, by ``smoothly'' varying the 
optimization criterion, saliency of detections of interesting features
can be assessed and a better understanding of a  neighborhood of solutions can be gained
(e.g. discerning between an ``archipelago'' and a ``continent''), all without having
to compute, or even  define explicitly, a path similarity measure (such as those based on BLAST scores).  
Used within this context, this relatively inexpensive type of ``neighborhood'' inference might become either 
alternative or complementary to sampling (from the posterior distribution); 
see also Section~\ref{sec:example} and Section~\ref{sec:discussion}.

\subsection{Organization of the rest of the paper}\label{sec:organ}
In this paper, we consider the path inference problem in the more
general framework of statistical learning. Namely, we consider sequence {\it classifier}
mappings 
$$g_t: {\mathcal X}^t \to S^t, \quad t=1,2,\ldots,$$ and optimality criteria for their selection.
When all $g_t$'s are obtained using the same decoding principle, or optimality criterion, 
we refer to them collectively as a {\it classification method}, or simply, {\it decoder}, $g$. 
Where this causes no ambiguity, we will simply be writing $g(x^t)$ instead of $g_t(x^t)$.
In Section \ref{sec:risk}, criteria for optimality of $g$ are naturally formulated 
in terms of risk minimization whereby $R(s^T|x^T)$, the {\it risk of} $s^T$, derives from a suitable
{\it loss function}.  In Section \ref{sec:combrisk}, we consider
families of risk functions which naturally
generalize those corresponding to the Viterbi and PMAP solutions 
(subsection \ref{subsec:viterbipmap}).  Furthermore, as shown in Section
\ref{sec:bridge}, these risk functions define a family of path
decoders parameterized by an integer $k$ with $k=1$ and $k\to\infty$ 
corresponding to the PMAP and Viterbi cases, respectively 
(Theorem \ref{k-block}). A  continuous mapping
via $k=1/(1-\alpha)$, $0\le \alpha\le 1$ compactifies this parameterization and 
further enriches the solution space by including fractional $k$. 
It is then discussed how the new family of decoders can be
embedded into yet a wider class with a principled criterion of optimality.
We also compare the
new family of decoders with the Rabiner $k$-block approach. 
Any decoder (classifier) would only be of theoretical
interest if it could not  be efficiently computed. In Section
\ref{sec:combrisk}, we show that all of the newly defined decoders can be
implemented efficiently  as a dynamic programming algorithm in the
usual forward-backward manner with essentially the same (computational as well
as memory) complexity as the PMAP or Viterbi decoders (Theorem
\ref{dyn}).  Recent advances in the asymptotic theory of some of the main decoders and risks presented in this paper 
is reviewed in Section \ref{sec:asy} together with sketches of how these may be relevant in practice. 
Various further extensions are discussed in the concluding Section \ref{sec:discussion}.

\subsection{Key contributions of the paper}\label{sec:contribution}
We review HMM-based decoding within the sound framework of statistical decision theory,
and do so notably more broadly than has been done before, e.g. in the prominent  work
of \citep{CentroidEstimators2008}.
We also investigate thoroughly previous work on combining the desirable properties
of the two most common HMM-based decoders, i.e. the Viterbi and optimal accuracy decoders. In doing so,
we discover several relevant claims and suggestions 
to be unjustified, misleading, or plainly incorrect.  We explain in detail those  
deficiencies, giving relevant counterexamples, and show how they can be resolved. 
Some such resolutions are naturally left within the native frameworks of the originals, 
whereas others are more naturally given within the general risk-based framework.
All of the resulting decoders are shown to be easily implementable within the usual forward-backward 
computational frameworks of the optimal accuracy and Viterbi decoders.
We argue that the richness, flexibility, and analytic interpretation of the resulting families of decoders 
offer new possibilities for applications and invite further theoretical analysis.

Specifically, this paper
\begin{enumerate}[label=\emph{\arabic*)},leftmargin=0cm,itemindent=.5cm,labelwidth=\itemindent,labelsep=0.1cm,align=left]
\item gives a clear definition of admissibility of hidden paths;
\item shows that optimal accuracy decoding when constrained to the paths of positive prior 
probability, can still return inadmissible paths;
 \item shows that  the suggestion of \citep{tutorial} to maximize the rate of correctly recognized
blocks does not work for blocks of size two and therefore can be misleading;
\item proposes suitable risk functions to ``repair'' the above suggestion, and thus 
designs new and rich families of computationally efficient decoders;
\item unifies virtually all of the key decoders within the same  risk-based
framework; 
\item establishes theoretical results regarding key properties of the new decoders, in particular
\item establishes a curious general result on convex decomposition of the key risk functionals for Markov chains;  
\item experimentally illustrates the newly proposed families of decoders
using real data;
\item explains how the idea of hybridization of the Viterbi and optimal accuracy
decoders given in \citep{Brushe1998} can fail when the Viterbi path is not unique;
\item establishes that the claims made in the same work regarding the implementation of  their algorithm  
to hybridize the Viterbi and optimal accuracy decoders are incorrect;
\item shows how the corresponding forward and backward variables given in the same work 
can be easily scaled to produce an operational decoding algorithm; 
\item shows that the resulting decoders are different from
the original hybrid decoders of \citep{Brushe1998};
\item proposes the more common power-transform to replace the somewhat idiosyncratic transform
of \citep{Brushe1998} to yield an immediately operational algorithm to hybridize the Viterbi and optimal 
accuracy decoders (at least when the Viterbi path is unique), allowing also for 
extrapolations ``beyond'' the optimal accuracy decoder;
\item indicates a number of further extensions of the new family of decoders.
\end{enumerate}

\section{Risk-based path inference}\label{sec:risk}
Given a sequence of observations $x^T$ with $p(x^T)>0$, we view the (posterior) {\it risk} 
as a function
$$R(\cdot|x^T):\quad S^T \mapsto [0,\infty].$$
Naturally,  we seek a state sequence with minimum risk: $g^*(x^T):=\arg\min_{s^T\in S^T}R\big(s^T|x^T\big)$.
In the {\em statistical decision and pattern recognition theories}, the
classifier $g^*$ is known as the {\it Bayes classifier}
(relative to risk $R$). Within the same framework, the risk is often
specified via a {\it loss-function} $$L: S^T\times
S^T \to [0,\infty],$$ interpreting $L(y^T,s^T)$ as the loss 
incurred by the decision to predict $s^T$ when the
actual state sequence was $y^T$. Therefore, for any
state sequence $s^T\in S^T$, the  risk is given by
\begin{equation*}\label{risk}
R(s^T|x^T):=E[L(Y^T,s^T)|X^T=x^T]=\sum_{y^T\in
S^T}L(y^T,s^T)p(y^T|x^T).
\end{equation*}
\subsection{Standard path inferences re-examined}\label{subsec:viterbipmap}
The most popular loss function is the so-called {\it symmetrical} or {\it zero-one} loss 
$L_{\infty}$ defined as follows:
\begin{equation*}\label{symm-n}
L_{\infty}(y^T,s^T)=\left\{
               \begin{array}{ll}
                 1, & \hbox{if $s^T\ne y^T$;} \\
                 0, & \hbox{if $s^T=y^T$.}
               \end{array}
             \right.
\end{equation*}
We shall denote the corresponding risk by $R_{\infty}$. With this
loss, clearly 
\begin{equation}\label{eqn:Rinf}
R_{\infty}(s^T|x^T)=\P(Y^T\ne s^T|X^T=x^T)=1-p(s^T|x^T),
\end{equation}
thus
 $R_{\infty}(\cdot|x^T)$ is minimized by a {Viterbi path}, i.e. a sequence of
maximum posterior probability. Let $v(\cdot ; \infty)$ stand
for the corresponding classifier, i.e.
$$v(x^T; \infty):=\arg\max\limits_{s^T\in S^T}p(s^T|x^T),$$ 
with a suitable tie-breaking rule.

Note that Viterbi paths also
minimize the following risk
\begin{equation}\label{loglikerisk}
\R_{\infty}(s^T|x^T):=-{1\over T}\log p(s^T|x^T).
\end{equation} It can actually be advantageous to use the logarithmic risk \eqref{loglikerisk}
since, as we shall see later,  this  leads to
various natural generalizations (Sections \ref{sec:combrisk} and
\ref{sec:bridge}).

When sequences are compared pointwise, it is common to 
use additive loss functions of the form 
\begin{equation}\label{point-loss}
L_{1}(y^T,s^T)={1\over T}\sum_{t=1}^Tl(y_t,s_t),
\end{equation} where $l(y_t,s_t)\geq 0$ is
the loss associated with classifying the $t$-th element $y_t$ as $s_t$. Typically,
for every state $s$, $l(s,s)=0$. It is not hard to see that, with
$L_{1}$ as in \eqref{point-loss}, the corresponding risk can be
represented as follows
\begin{equation*}
 R_1(s^T|x^T)={1\over T}\sum_{t=1}^T R_t(s_t|x^T),
\end{equation*}
where $R_t(s|x^T)=\sum_{y\in S} l(y,s)p_t(y|x^T)$. Most
commonly, $l$ is again symmetrical, or zero-one, i.e. $l(y,s)=\mathbb{I}_{\{s\ne y\}}$,
where  $\mathbb{I}_A$ stands for the indicator function of set $A$.
In this case, $L_1$ is naturally related to the {\it Hamming distance} \citep{CentroidEstimators2008}.
Then also $R_t(s_t|x^T)=1-p_t(s_t|x^T)$ so that the corresponding risk is
\begin{equation}\label{eqn:R1}
R_1(s^T|x^T):=1-{1\over T}\sum_{t=1}^T p_t(s_t|x^T).
\end{equation}
Let $v(\cdot; 1)$ stand for the Bayes classifier relative to the $R_1$-risk. 
It is easy to see from the above definition of $R_1$, that $v(\cdot; 1)$
delivers PMAP paths, which  minimize the expected number of misclassification errors. 
In addition to maximizing $\sum_{t=1}^T
p_t(s_t|x^T)$, $v(\cdot; 1)$ also maximizes 
$\prod_{t=1}^Tp_t(s_t|x^T)$, and therefore minimizes the following risk
\begin{eqnarray}\label{eqn:barr1}
\bar{R}_1(s^T|x^T):=-{1\over T}\sum_{t=1}^T\log p_t(s_t|x^T).
\end{eqnarray}
\subsection{Generalizations}\label{sec:generalizations}
\subsubsection{Admissible PMAP and Posterior Viterbi Decoders}\label{sec:pvdmore}
Recall (subsection \ref{sec:pathdiffers}) that PMAP paths can be 
{\it inadmissible}.  According to our definition of admissibility (subsection \ref{sec:pathdiffers}), 
a path is inadmissible if it is of zero posterior probability.  Although no explicit definition of
admissibility, or {\it validity}, is  given in
\citep{tutorial}, as an example of how a path can be ``not valid'' Rabiner refers to forbidden transitions,
i.e. zero prior probability (which, of course, also implies zero posterior probability); 
the possibility of a path to have a positive
prior probability but zero posterior probability is not discussed there.  As far as we are aware,
the first publication to formally write down an amended PMAP optimization problem to guarantee path validity, or
admissibility, is \citep{KallKrogh2005}. However, they too do not state explicitly 
if ``a possible path through the model'' means for them  positivity only of the prior probability or 
also of the posterior probability. If ``the model'' is to be understood
as the HMM in its entirety, then this would require positivity of the posterior probability. However, 
the optimization presented in \citep{KallKrogh2005} does not guarantee
positivity of the posterior probability, i.e. it only guarantees positivity of the prior probability.   
Perhaps, it does not happen very often in practice that the PMAP decoder constrained to return 
{\it a priori} possible paths  returns an inadmissible path (it does not happen
in our own experiments in Section  \ref{sec:example} as all of our emission 
probabilities are non-zero on the entire emission alphabet).  However, as the example 
in Appendix \ref{sec:appendixposprior0posterior} shows, this is indeed possible. 

Thus, to  enforce admissibility properly, 
$R_1$-risk needs to be minimized over the admissible paths 
($R_1$ minimization over the paths of positive prior probability
is revisited in subsection \ref{sec:beyondPVD} below):
\begin{equation}\label{eq:remove0s}
\min_{s^T: p(s^T|x^T)>0}R_1(s^T|x^T)\quad \Leftrightarrow \quad
\max_{s^T: p(s^T|x^T)>0}\sum_{t=1}^Tp_t(s_t|x^T).
\end{equation}
Assuming that $p_t(s|x^T)$, $1\le t\le T$, $s\in S$, have been
precomputed (e.g. by the classical forward-backward recursion \citep{tutorial}), a
solution to \eqref{eq:remove0s}  can be easily found by a
Viterbi-like recursion \eqref{rec1a}

\begin{eqnarray}\label{rec1a}
\delta_{1}(j)&:=&p_1(j|x^T),~~\forall~j\in S,\\
\nonumber \delta_{t+1}(j)&:=&\max_i \left(\delta_t(i) + \log
  r_t(i,j)\right)+p_{t+1}(j|x^T)
~\text{for}~t=1,2,\ldots,T-1,~\text{and}~\forall j\in S,
\end{eqnarray}
where 
$r_t(i,j):=\mathbb{I}_{\{p_{ij}f_j(x_{t+1})>0\}}$. {\it To the best of our
knowledge this has not been stated  in the literature before.} 
We will refer to this decoder as the {\it Constrained PMAP} decoder. 

Next note that in the presence of path constraints, minimization of the $R_1$-risk \eqref{eqn:R1}
is {\it no longer equivalent} to minimization of the $\R_1$-risk \eqref{eqn:barr1}. In particular, the
 problem \eqref{eq:remove0s} is not equivalent to the following 
 problem 
\begin{equation}\label{eq:remove0slog}
\min_{s^T: p(s^T|x^T)>0}\R_1(s^T|x^T)\quad \Leftrightarrow \quad
\max_{s^T: p(s^T|x^T)>0}\sum_{t=1}^T \log p_t(s_t|x^T).
\end{equation}
It is also important to note that the problem \eqref{eq:remove0slog} above {\it is equivalent}
to what has been termed the {\it posterior-Viterbi decoding}, or {\it PVD} \citep{PMAP2005}:
\begin{equation*}
\min_{s^T: p(s^T)>0}\R_1(s^T|x^T)\quad \Leftrightarrow \quad
\max_{s^T: p(s^T)>0}\sum_{t=1}^T \log p_t(s_t|x^T),
\end{equation*}
i.e. unlike in the case of $R_1(s^T|x^T)$ minimization, minimization of $\R_1(s^T|x^T)$
over the paths of positive prior probability is indeed sufficient to produce 
admissible paths. 

A solution to \eqref{eq:remove0slog} can  be computed by a related
recursion given in 
\eqref{rec3} below
\begin{eqnarray}\label{rec3}
\delta_{1}(j)&:=&\log p_1(j|x^T),~\forall j\in S,\\
\nonumber \delta_{t+1}(j)&:=&\max_i \big(\delta_t(i)+ \log
r_{ij}\big)+\log p_{t+1}(j|x^T),
~\text{for}~t=1,2,\ldots,T-1,~\forall j\in S,\end{eqnarray}
where 
$r_{ij}:=\mathbb{I}_{\{p_{ij}>0\}}$ (which for inhomogeneous chains will depend on $t$). 

\subsubsection{Beyond  PVD and {\it a priori} admissible PMAP}\label{sec:beyondPVD}
Although admissible minimizers  of $R_1$ and $\R_1$ risk are by
definition of positive probability, this probability can still be
very small. Indeed, in the above recursions, the weight
$r_{ij}$ is 1 even when $p_{ij}$ is very small. We next replace
$r_{ij}$  by the true transition probability
$p_{ij}$  in minimizing the $\R_1$-risk (i.e. maximization of
$\prod_{t=1}^Tp_t(s_t|x^T)$). Then the solutions remain admissible and
also tend to maximize the prior path probability. To bring the newly 
obtained optimization problem to a more elegant form \eqref{eq:rinftyprior0}, 
we pretend that $\delta_1(j)$ in \eqref{rec3} above was defined as 
$\delta_1(j):=\log p_1(j|x^T)+\log \mathbb{I}_{\{\pi_j>0\}}$ (which indeed does not change the results
of the recursion \eqref{rec3}) and replace the last term by $\log \pi_j$.  

Thus, with the above
replacements, 
the recursion \eqref{rec3} now solves the
following {\it seemingly unconstrained} optimization problem (see Theorem \ref{dyn})
\begin{equation}\label{eq:rinftyprior0}
\max_{s^T}\Big[\sum_{t=1}^T\log p_t(s_t|x^T)+\log p(s^T)\Big]\quad
\Leftrightarrow \quad \min_{s^T}\Big[\bar{R}_1(s^T|x^T)+h(s^T)\Big],
\end{equation}
where the penalty term
\begin{eqnarray}
  \label{eq:rinftyprior}
h(s^T)&=-{1\over T}\log p(s^T)&=:\bar R_{\infty}(s^T)
\end{eqnarray}
is the logarithmic risk based on the prior distribution~\footnote{More 
generally, the same type of risk (e.g. $\bar R_{\infty}$) can be based on the posterior ($p(s^T|x^T)$), 
joint ($p(s^T, x^T)$) or  prior ($p(s^T)$) distribution.  Compromising between notational accuracy on the one hand
 and notational simplicity and consistency on the other hand, 
throughout the paper we disambiguate these cases  solely by the argument.}, which does not involve the observed data.

The thereby modified recursions immediately generalize as
follows:
\begin{eqnarray}\label{rec4}
\delta_1(j)&:=&\log p_1(j|x^T)+C\log \pi_j,~\forall j\in S,\\
\nonumber \delta_{t+1}(j)&:=&\max_i \big(\delta_t(i)+
 C\log p_{ij}\big)+\log p_{t+1}(j|x^T)~\text{for}~t=1,2,\ldots,T-1,~\forall j\in S,
\end{eqnarray}
solving
\begin{equation}\label{gen-log-consensus}
\min_{s^T}\Big[\bar{R}_1(s^T|x^T)+Ch(s^T )\Big],
\end{equation}
where $C>0$ is a trade-off constant, which can also be
viewed as a regularization parameter.  
Indeed, Proposition \ref{prop:admit} below states that $C>0$ implies
admissibility of solutions to \eqref{gen-log-consensus}. In particular, PVD, i.e. the problem 
solved by the original 
recursion  \eqref{rec3}, can now be 
recovered by taking $C$ sufficiently small.
(Alternatively, the PVD problem can also be formally written in the
form \eqref{gen-log-consensus} with $C=\infty$ and $h(s^T )$ given,
for example, by $\mathbb{I}_{\{p(s^T)=0\}}$.)

What if the actual probabilities $p_{ij}$ ($\pi_j$) were also used in
the optimal accuracy/PMAP decoding? 
To motivate this, we re-consider the optimal accuracy/PMAP decoding imposing 
the positivity constraint not on the posterior but
on the prior path probability:
\begin{equation}\label{eq:remove0s2}
\min_{s^T: p(s^T)>0}R_1(s^T|x^T)\quad \Leftrightarrow \quad
\max_{s^T: p(s^T)>0}\sum_{t=1}^Tp_t(s_t|x^T).
\end{equation}
Solution to \eqref{eq:remove0s2}  can be easily found by yet another
Viterbi-like recursion given in  \eqref{rec1b} below
\begin{eqnarray}\label{rec1b}
\delta_{1}(j)&:=&p_1(j|x^T),~~\forall~j\in S,\\
\nonumber \delta_{t+1}(j)&:=&\max_i \left(\delta_t(i) + \log
  r_{ij}\right)+p_{t+1}(j|x^T)
~\text{for}~t=1,2,\ldots,T-1,~\text{and}~\forall j\in S.
\end{eqnarray}


We again replace the indicators $r_{ij}$ by the actual probabilities $p_{ij}$.
We  once more pretend that $\delta_1(j)$ in \eqref{rec1b} above was defined, this time, as 
$\delta_1(j):=p_1(j|x^T)+\log \mathbb{I}_{\{\pi_j>0\}}$. Replacing the last term
by  $\log \pi_j$ yields the following problem:
\begin{equation}\label{pre-gen-L1-pen}
\max_{s^T}\Big[\sum_{t=1}^T p_t(s_t|x_t)+\log p(s^T)\Big]\quad \Leftrightarrow
\quad \min_{s^T}\Big[R_1(s^T|x^T) + \R_{\infty}(s^T)\Big].
\end{equation}
A more general problem can be written in the form
\begin{equation}\label{gen-L1-pen}
 \min_{s^T}\Big[R_1(s^T|x^T)+Ch(s^T)\Big],
\end{equation}
where $h$ is some penalty function (independent of the data $x^T$). 
Thus, the problem \eqref{eq:remove0s2} of optimal accuracy/PMAP decoding   over the 
paths of positive prior probability
 is obtained by taking $C$ sufficiently small
and $h(s^T)=\R_{\infty}(s^T)$. (Setting $C\times h(s^T )=\infty\times \mathbb{I}_{\{p(s^T)=0\}}$ 
also reduces the  problem \eqref{gen-L1-pen}  back to \eqref{eq:remove0s}.)

Clearly, if instead of \eqref{eq:remove0s2} we 
started off with \eqref{eq:remove0s} ($R_1(s^T|x^T)$ minimization over the admissible
paths), we would arrive at  $\R_{\infty}(s^T|x^T)$ in place of $\R_{\infty}(s^T)$ in 
\eqref{pre-gen-L1-pen} above.  Inclusion of $\R_{\infty}(s^T|x^T)$ more generally 
is treated next in Section \ref{sec:combrisk}.

\section{Combined risks}\label{sec:combrisk}
Motivated by the previous section, we consider the following 
general problem
\begin{equation}\label{gen-problem}
\min_{s^T}\Big[C_1\bar{R}_1(s^T|x^T)+C_2\bar{R}_{\infty}(s^T|x^T)+C_3\bar{R}_1(s^T)+C_4\bar{R}_{\infty}(s^T)\Big],
\end{equation}
where $C_i\ge 0$, $i=1,2,3,4$, $\sum_{i=1}^4 C_i>0$\footnote{For uniqueness of 
representation, one may want to additionally require $\sum_{i=1}^4 C_i=1$.}. 
This is also equivalent to
\begin{equation}\label{gen-problem1}
\min_{s^T}\Big[C_1\bar{R}_1(s^T|x^T)+C_2\bar{R}_{\infty}(s^T, x^T)+C_3\bar{R}_1(s^T)+C_4\bar{R}_{\infty}(s^T)\Big],
\end{equation} 
\begin{align}\textrm{where, recalling}~\eqref{eqn:barr1},\quad
\bar{R}_1(s^T|x^T)&=-{1\over T}\sum_{t=1}^T\log p_t(s_t|x^T),
\nonumber\\
\R_{\infty}(s^T, x^T)&:=-{1\over T}\log p(x^T,s^T) \nonumber \\
            &=-{1\over T}[\log p(s^T)+\sum_{t=1}^T \log
f_{s_t}(x_t)]\nonumber\\
&=-{1\over T}[\log \pi_{s_1}+ \sum_{t=1}^{T-1}\log
p_{s_t s_{t+1}}+\sum_{t=1}^T \log f_{s_t}(x_t)],\nonumber \\
\quad\text{recalling}~\eqref{loglikerisk},\quad\bar{R}_{\infty}(s^T|x^T)&=-{1\over T}\log p(s^T|x^T),\nonumber \\
                         &=\R_{\infty}(s^T,x^T)+{1\over T}\log p(x^T),\nonumber \\
\bar{R}_1(s^T)&:=-{1\over T}\sum_{t=1}^T\log p_t(s_t),\label{eqn:barR1prior}\\
\bar{R}_{\infty}(s^T)&=-{1\over T}\log p(s^T),\quad\text{recalling}~\eqref{eq:rinftyprior},\nonumber \\
                     &=-{1\over T}[\log \pi_{s_1}+
\sum_{t=1}^{T-1}\log p_{s_t s_{t+1}}].
\end{align}
The newly introduced risk $\bar{R}_{1}(s^T)$  involves only 
the prior marginals. Note that the combination $C_1=C_3=C_4=0$
corresponds to the MAP/Viterbi decoding; the combination $C_2=C_3=C_4=0$ yields
the PMAP case, whereas the combinations $C_1=C_2=C_3=0$ and
$C_1=C_2=C_4=0$ give the {\em maximum a priori} decoding
and {\em marginal prior mode} decoding, respectively. The case
$C_2=C_3=0$ subsumes \eqref{gen-log-consensus} and the case
$C_1=C_3=0$ is the problem
\begin{equation}\label{gen-viterbi}\min_{s^T}\Big[\bar{R}_{\infty}(s^T|x^T)+C\bar{R}_{\infty}(s^T)\Big].
\end{equation}
Thus, a solution to \eqref{gen-viterbi} is a generalization of the
Viterbi decoding that allows one to suppress ($C>0$) contribution of
the data. 
\begin{remark}\label{rem:admissible}If $C_2>0$, then every solution
of \eqref{gen-problem} is admissible and the minimized risk is finite.
 \end{remark}
{\it No less important and perhaps a little less obvious is that $C_1, C_4>0$ also guarantees admissibility of the solutions}, as stated in 
Proposition \ref{prop:admit} below.
\begin{proposition}\label{prop:admit}Let
$C_1, C_4>0$. Then, the minimized risk \eqref{gen-problem} is finite and
any minimizer $s^T$ is admissible. 
\end{proposition}
\begin{proof}Without loss of generality, assume $C_2=C_3=0$. 
Since $p(x^T)>0$ (assumed in the beginning of Section \ref{sec:risk}), there exists
some admissible path $s^T$. Clearly, the combined risk of this path is finite, hence
so is the minimum risk.  Now, suppose $s^T$ is a minimizer of the combined risk and
suppose further that $s^T$ is inadmissible, i.e. $p(s^T|x^T)=0$. 
Since the minimized risk \eqref{gen-problem} is finite, we must 
have $p(s^T)>0$. Therefore, it must be that 
$p(x^T|s^T)=0$, and therefore we must have some $t$, $1\le t\le T$, such $f_{s_t}(x_t)=0$.
This would imply that any path through $(t,s_t)$ is inadmissible, hence $p_t(s_t|x^T)$, 
the sum of the posterior probabilities of all such paths, is zero. This implies  $\bar R_1(s^T|x^T)=\infty$,
contradicting optimality of $s^T$.
\end{proof}
\begin{remark}\label{rem:pvd}
Note that for any $x^T$, the Posterior-Viterbi decoding \citep{PMAP2005} (Problem \eqref{eq:remove0slog}
above) can be obtained by 
setting $C_3=C_4=0$ and taking $0<C_2\ll C_1$. Also, PVD can be obtained {\it almost surely} 
by setting $C_2=C_3=0$ and taking $0<C_4\ll C_1$.
\end{remark} It is fairly intuitive that PVD can be realized as a solution to \eqref{gen-problem}
for some $0<C_1, C_2$, when $C_3=C_4=0$.  Nonetheless, let us prove this formally.
\begin{proof}Assume $C_3=C_4=0$. For each $C_1, C_2>0$, let  $\hat y^T_{C_1,C_2}\in S^T$ 
be a solution to \eqref{gen-problem}. Thus, we have
\begin{eqnarray*}
C_1\bar{R}_1(\hat y^T_{C_1,C_2}|x^T)+C_2\bar{R}_{\infty}(\hat y^T_{C_1,C_2}|x^T)&\le &
C_1\bar{R}_1(\hat y_{PVD}^T|x^T)+C_2\bar{R}_{\infty}(\hat y_{PVD}^T|x^T).
\end{eqnarray*}
 Then
$$0\le C_1(\bar{R}_1(\hat y^T_{C_1,C_2}|x^T)-\bar{R}_1(\hat y_{PVD}^T|x^T))\le 
C_2(\bar{R}_{\infty}(\hat y_{PVD}^T|x^T)-\bar{R}_{\infty}(\hat y^T_{C_1,C_2}|x^T)) $$
holds for any $C_1,C_2>0$. 
Since $\bar{R}_{\infty}(\hat y_{PVD}^T|x^T)-\bar{R}_{\infty}(\hat y^T_{C_1,C_2}|x^T)$ is 
clearly bounded (and $S^T$ is finite), by allowing $C_2$ to be arbitrarily small, we obtain 
$\bar{R}_1(\hat y^T_{C_1,C_2}|x^T)=\bar{R}_1(\hat y_{PVD}^T|x^T)$ for some sufficiently small $C_2$.
Since $C_2>0$,  all $\hat y^T_{C_1,C_2}$ are 
admissible (Remark \ref{rem:admissible} above), therefore for such sufficiently small $C_2$, 
$\hat y^T_{C_1,C_2}$ is also a solution to the PVD Problem \eqref{eq:remove0slog}.

The second statement is proved similarly, recalling Proposition \eqref{prop:admit} 
to establish admissibility of $\hat y^T_{C_1,C_4}$ {\it almost surely}.
\end{proof}

If the smoothing probabilities $p_t(s|x^T)$,
$t=1,\ldots, T$ and $s\in S$, have been already computed, 
a solution to \eqref{gen-problem} can be found also by a standard dynamic
programming algorithm. Let us first introduce more notation. For
every $t \in 1,\ldots,T$ and $j\in S$, let
$$\gamma_t(j):=C_1\log p_{t}(j|x^T)+C_2\log f_{j}(x_{t})+C_3\log
p_t(j).$$ Note that the function $\gamma_t$ depends on the entire data
$x^T$. Next, let us also define the following scores
\begin{eqnarray}\label{gen-rec}
\delta_1(j)&:=&(C_2+C_4)\log \pi_j+\gamma_1(j), ~\forall j\in S, \nonumber \\ 
\label{ind} \delta_{t}(j)&:=&\max_i
\big(\delta_{t-1}(i)+ (C_2+C_4)\log p_{ij}\big)+\gamma_{t}(j)\\ \nonumber
&&  ~\text{for}~t=2,3\ldots,T,~\text{and}~\forall j\in S.
\end{eqnarray}
Using the above scores $\delta_t(j)$ and a suitable tie-breaking rule, 
below we define the back-pointers $i_t(j)$, terminal state $i_T$,  and the optimal 
path $\hat s^{T}(i_T)$.
\begin{align}\label{arg}
i_t(j)&:=  
           \arg\max_{i\in S}[\delta_t(i)+(C_2+C_4)\log p_{ij}], \quad \hbox{when $t=1,\ldots, T-1$;} \nonumber  \\
 i_T&:=          \arg\max_{i\in S}\delta_T(i).\\
\label{st}
\hat s^{t}(j)&:=\left\{
                \begin{array}{ll}
                  i_1(j), & \hbox{when $t=1$;} \\
                  \big(\hat s^{t-1}(i_{t-1}(j)),j \big) & \hbox{when $t=2,\ldots,T$.}
                \end{array}
              \right.
\end{align}
The following theorem formalizes the dynamic programming argument; 
its proof is standard and we state it below for completeness only.
\begin{theorem}\label{dyn}
Any solution to \eqref{gen-problem} can be represented in the form $\hat s^T(i_T)$
provided the ties in \eqref{arg} are broken accordingly.
\end{theorem}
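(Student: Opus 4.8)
The plan is to recast the minimization \eqref{gen-problem}, in its equivalent form \eqref{gen-problem1} (the two differ only by the $s^T$-independent constant $\tfrac{C_2}{T}\log p(x^T)$ and hence share the same minimizers), as the maximization of an additive score along the path, and then invoke the principle of optimality. First I would multiply the objective by $-T$, turning $\min$ into $\max$, and substitute the explicit forms of the four risks displayed before Proposition \ref{prop:admit}. The terms that are local in $t$, namely $C_1\log p_t(s_t|x^T)$, $C_2\log f_{s_t}(x_t)$ and $C_3\log p_t(s_t)$, collect into $g_t(s_t)$, while the prior quantities $\log\pi_{s_1}$ and $\log p_{s_ts_{t+1}}$ (which occur in both $\bar R_\infty(s^T,x^T)$ and $\bar R_\infty(s^T)$) collect with the common coefficient $C_2+C_4$. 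Thus \eqref{gen-problem} becomes the maximization of
$$F(s^T)=\sum_{t=1}^T g_t(s_t)+(C_2+C_4)\Big[\log\pi_{s_1}+\sum_{t=1}^{T-1}\log p_{s_ts_{t+1}}\Big].$$
Since $p_1(j)=\pi_j$, the initial score in \eqref{gen-rec} is exactly $g_1(j)+(C_2+C_4)\log\pi_j$, which is the $t=1$ value of the quantity the dynamic program is meant to accumulate.

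Next I would carry out the standard Bellman argument. Define $\delta_t(j)$ to be the maximum, over all prefixes $s_1,\dots,s_{t-1}$ with $s_t=j$ held fixed, of the partial score
$$(C_2+C_4)\log\pi_{s_1}+(C_2+C_4)\sum_{\tau=1}^{t-1}\log p_{s_\tau s_{\tau+1}}+\sum_{\tau=1}^{t}g_\tau(s_\tau).$$
Splitting off the last transition from $i=s_t$ into $j=s_{t+1}$ gives $\delta_{t+1}(j)=\max_i\big[\delta_t(i)+(C_2+C_4)\log p_{ij}\big]+g_{t+1}(j)$, which is precisely the recursion in \eqref{gen-rec}; an easy induction on $t$ shows that the recursively computed $\delta_t(j)$ indeed equals the stated maximum, the decomposition being exact because each summand of $F$ attaches either to a single index $t$ or to a single consecutive pair $(t,t+1)$. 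Evaluating at $t=T$ yields $\max_{s^T}F(s^T)=\max_j\delta_T(j)=\delta_T(i_T)$, and tracing the back-pointers $i_t(\cdot)$ through \eqref{arg}--\eqref{st} reconstructs a maximizer of $F$, equivalently a minimizer of \eqref{gen-problem}.

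Two points deserve care beyond the routine induction. The first is the bookkeeping of infinities: zero probabilities render some summands of $F$ equal to $-\infty$ (equivalently, some risks $+\infty$), and with the conventions $\log 0=-\infty$ and ``$\max$ over an all-$(-\infty)$ set $=-\infty$'' the recursion still returns the correct, possibly infinite, optimum; finiteness in the regimes of interest is guaranteed by Proposition \ref{prop:admit}. The second, and the genuine obstacle, is the clause ``provided the ties in \eqref{arg} are broken accordingly,'' which upgrades ``\emph{some} maximizer is recovered'' to ``\emph{every} minimizer is representable as $\hat s^T(i_T)$.'' Here I would argue that if $s^{*T}$ is any optimal path, then for each $t$ the predecessor $s^*_t$ attains the maximum defining the back-pointer at $s^*_{t+1}$, and $s^*_T$ attains $\max_j\delta_T(j)$; hence $s^{*T}$ is a legitimate outcome of \eqref{arg}--\eqref{st} under a tie-breaking rule selecting these particular maximizers. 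Conversely every backtracked path is optimal by construction, so the representable paths are exactly the minimizers, which is the assertion of the theorem.
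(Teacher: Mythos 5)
Your proposal is correct and follows essentially the same route as the paper's proof: multiply by $-T$ to convert \eqref{gen-problem} into maximization of the additive score (the paper's $U(s^T)$, your $F(s^T)$, identical under the convention $p_{0s}=\pi_s$), prove $\delta_t(j)=\max_{s^t:\,s_t=j}U(s^t)$ by induction, and recover every optimizer by backtracking with suitable tie-breaking. Your explicit remarks on the $\log 0=-\infty$ bookkeeping and on why \emph{every} minimizer (not just some) is representable are slightly more detailed than the paper's, but the argument is the same.
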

\begin{proof}
With a slight abuse of notation,  for every $s^t\in S^t$, let 
$$U(s^t)=\sum_{u=1}^t\left[ \gamma_u(s_u)+(C_2+C_4)\log
p_{s_{u-1}s_u}\right],$$ where $s_0:=0$ and $p_{0s}:=\pi_s$. Hence,
$$-T[C_1\bar{R}_1(s^T|x^T)+C_2\bar{R}_{\infty}(s^T, x^T)+C_3\bar{R}_1(s^T)+C_4\bar{R}_{\infty}(s^T)]=U(s^T)$$
and any maximizer  of $U(s^T)$ is clearly a solution to \eqref{gen-problem} and \eqref{gen-problem1}.

Next, let $U(j):=\delta_1(j)$ for all $j\in S$, and let
$$U(s^{t+1})=U(s^t)+(C_2+C_4)\log p_{s_t s_{t+1}}+\gamma_{t+1}(s_{t+1}),$$
for $t=1,2,\ldots,T-1$ and also $s^t\in S^t$. By induction on $t$, these
yield
$$\delta_t(j)=\max_{s^t: s_t=j}U(s^t)$$ 
for every $t=1,2,\ldots,T$ and for all $j\in S$.
Clearly, every maximizer $\hat s^T$ of $U(s^T)$ over the set $S^T$
must satisfy  $\hat s_T=i_T$, or, more precisely $\hat s_T \in \arg\max_{j\in S}\delta_T(j)$,
allowing for  non-uniqueness. Continuing to interpret  $\arg\max$ as a set, 
recursion \eqref{ind} implies 
recursions \eqref{arg} and \eqref{st}, hence 
any maximizer $\hat s^T$ can indeed be computed in the form $\hat s^T(\hat s_T)$ via
the
{\it forward} (recursion \eqref{arg})-{\it backward} (recursion \eqref{st})
procedure.
\end{proof}
\vskip 1\baselineskip\noindent
Similarly to the generalized risk minimization of
\eqref{gen-problem}, the generalized problem of accuracy optimization  \eqref{gen-L1-pen}
can also be further generalized as follows:
\begin{equation}\label{L1-gen-problem}
\min_{s^T}\Big[C_1{R}_1(s^T|x^T)+C_2\bar{R}_{\infty}(
s^T|x^T)+C_3{R}_1(s^T)+ C_4\bar{R}_{\infty}(s^T)\Big],
\end{equation}
where risk
\begin{equation}\label{eqn:R1prior}
{R}_1(s^T):=\frac{1}{T}\sum_{t=1}^T\P(Y_t\ne s_t)=1-\frac{1}{T}\sum_{t=1}^Tp_t(s_t)
\end{equation}
is the error rate relative to the prior distribution. 
This problem  can be solved by the following recursion
\begin{eqnarray}\label{L1-gen-rec}
\delta_1(j)&:=&(C_2+ C_4)\log \pi_j+\gamma_1(j), ~\forall j\in S, \nonumber\\ 
\delta_{t}(j)&:=&\max_i \big(\delta_{t-1}(i)+ (C_2+C_4)\log
p_{ij}\big)+\gamma_{t}(j),\\ \nonumber
&&  ~\text{for}~t=2,3\ldots,T,~\text{and}~\forall j\in S,
\end{eqnarray}
where now
$$\gamma_t(j)=C_1p_t(j|x^T)+C_2\log f_j(x_{t})+C_3p_t(j).$$
The following remarks compare this generalized Problem with the generalized Problem 
\eqref{gen-problem} (Remarks \ref{rem:admissible} and \ref{rem:pvd}, Proposition \ref{prop:admit}).
\begin{remark}\label{rem:ConstrPMAP}
\begin{enumerate}
 \item As in the generalized {\it posterior-Viterbi decoding} \eqref{gen-problem}, here $C_2>0$ also
implies admissibility of the optimal paths.
\item Now, $C_4>0$ implies that the minimized risk is
finite for any $x^T$, but unlike in \eqref{gen-problem},
$C_1, C_4>0$ is not sufficient to guarantee  admissibility {\it almost surely} of the solutions to the
problem \eqref{L1-gen-problem}.  
\item Taking $C_3=C_4=0$, the constrained PMAP problem  \citep{KallKrogh2005} (Problem \eqref{eq:remove0s} above) 
is obtained for some $C_1$, $C_2$ such that $0<C_2\ll C_1$.
\end{enumerate}
\end{remark}
We refer to a decoder  solving the generalized risk minimization Problem \eqref{gen-problem}
as a {\it generalized posterior-Viterbi hybrid decoder}.
 Similarly, a decoder solving  the generalized optimal accuracy 
Problem \eqref{L1-gen-problem} is referred to as a {\it generalized PMAP 
hybrid decoder} to distinguish  the product-based risk $\R_1(s^T|x^T)$ 
in the former case from the sum-based risk $R_1(s^T|x^T)$ in the latter case. 
Both the generalized families, however, naturally extend the 
PMAP/optimal accuracy/posterior decoder (Section \ref{subsec:viterbipmap}).  

To further characterize  the solutions to these generalized problems, we next state a simple
general result.  
\begin{lemma}\label{lemma:optimization}
Let $F$ and $G$ be functions from a set $A$ to the extended reals $\mathbb{\bar R}=\mathbb{R}\cup\{\pm \infty\}$.
Let $\alpha_1,\alpha_2\in [0,1]$ be such that $\alpha_1\le \alpha_2$. Suppose $a_1, a_2 \in A$ are such that
\begin{eqnarray*}
\alpha_i F(a_i)+(1-\alpha_i)G(a_i)&\le&  \alpha_i F(x)+(1-\alpha_i)G(x)\quad i=1,2~\text{for all}~x\in A.
\end{eqnarray*}
Then $F(a_1)\ge F(a_2)$ and $G(a_1)\le G(a_2)$.
\end{lemma} Although the result is obvious, below we state its proof for completeness.
\begin{proof} Write $a$, $b$, $c$, and $d$ for $F(a_1)$, $G(a_1)$, $F(a_2)$, and $G(a_2)$, respectively.
 Then we have 
\begin{eqnarray*}
\alpha_1(a-c)\le (1-\alpha_1)(d-b)\\
\alpha_2(a-c)\ge (1-\alpha_2)(d-b)
\end{eqnarray*}
and therefore
\begin{eqnarray*}
\alpha_2\alpha_1(a-c)\le \alpha_2(1-\alpha_1)(d-b)\\
\alpha_1\alpha_2(a-c)\ge \alpha_1(1-\alpha_2)(d-b),
\end{eqnarray*}
which gives $\alpha_1(1-\alpha_2)(d-b)\le \alpha_2(1-\alpha_1)(d-b)$.
Since $\alpha_1(1-\alpha_2)\le \alpha_2(1-\alpha_1)$, it follows that $d\ge b$, i.e. $G(a_2)\ge G(a_1)$.
The fact that $F(a_1)\ge F(a_2)$ is obtained similarly.
\end{proof}
\begin{corollary}\label{corollary:optimization}
\begin{enumerate}
 \item Let $\hat y$ and $\hat y'$ be solutions to Problem \eqref{gen-problem} with $C_1\in [0,1]$ and $C_2=1-C_1$, $C_3=C_4=0$
and $C'_1\in [0,1]$ and $C'_2=1-C'_1$, $C'_3=C'_4=0$, respectively.  Assume $C_1\le C_1'$.
Then $\bar R_1(\hat y|x^T) \ge \bar R_1(\hat y'|x^T)$ and 
$\bar R_\infty(\hat y|x^T) \le \bar R_\infty(\hat y'|x^T)$.
\item Let $\hat y$ and $\hat y'$ be solutions to Problem \eqref{gen-problem} with $C_3\in [0,1]$ and $C_4=1-C_3$, $C_1=C_2=0$
and $C'_3\in [0,1]$ and $C'_4=1-C'_3$, $C'_1=C'_2=0$, respectively. Assume $C_3\le C_3'$. Then $\bar R_1(\hat y) \ge \bar R_1(\hat y')$ and 
$\bar R_\infty(\hat y) \le \bar R_\infty(\hat y')$.
\item Let $\hat y$ and $\hat y'$ be solutions to Problem \eqref{L1-gen-problem}
 with $C_1\in [0,1]$ and $C_2=1-C_1$, $C_3=C_4=0$ and $C'_1\in [0,1]$ and $C'_2=1-C'_1$, $C'_3=C'_4=0$, respectively.
Assume $C_1\le C_1'$. Then $R_1(\hat y|x^T) \ge R_1(\hat y'|x^T)$ and 
$\bar R_\infty(\hat y|x^T) \le \bar R_\infty(\hat y'|x^T)$.
\item Let $\hat y$ and $\hat y'$ be solutions to Problem \eqref{L1-gen-problem} with $C_3\in [0,1]$ and $C_4=1-C_3$, $C_1=C_2=0$
and $C'_3\in [0,1]$ and $C'_4=1-C'_3$, $C'_1=C'_2=0$. Assume $C_3\le C_3'$. Then $R_1(\hat y) \ge R_1(\hat y')$ and 
$\bar R_\infty(\hat y) \le \bar R_\infty(\hat y')$.
\end{enumerate}
\end{corollary}
\begin{proof}
 A straightforward application of Lemma \ref{lemma:optimization}.
\end{proof}

\section{The $k$-block Posterior-Viterbi decoding}\label{sec:bridge}
The next approach 
provides a 
surprisingly different insight into what otherwise has already been formulated as the generalized Problem \eqref{gen-problem}.
This, first of all, helps better understand how the generalized Problem \eqref{gen-problem} resolves
the drawback of Rabiner's suggestion (introduced in the last paragraph of Subsection \ref{sec:pathdiffers} above).  
Secondly, 
the same approach gives an elegant relationship (Theorem \ref{k-block}, Corollary \ref{corollary0}) between 
the main types of risk,
which surprisingly amounts to, as far as we know, a novel property of ordinary Markov chains 
(equation \eqref{k-blockrec2},  and Proposition
\ref{prop:generalMC} of the concluding Section \ref{sec:discussion}).  

Recall (subsection \ref{sec:segment}) that Rabiner's compromise between MAP and PMAP is
to maximize the expected number of correctly
decoded pairs or triples of (adjacent) states. 
With $k$ being the length of the overlapping
block ($k=2,3,\ldots$) this means to minimize the conditional risk
\begin{equation}\label{Rabiner-k}
R_k(s^T|x^T):=1-\frac{1}{T-k+1}\sum_{t=1}^{T-k+1}p(s_t^{t+k-1}|x^T)\end{equation}
which derives from the following loss function:
\begin{equation}\label{eq:dnew}
L_k(y^T,s^T):=\frac{1}{T-k+1}\sum_{t=1}^{T-k+1} \mathbb{I}_{\{ s_{t}^{t+k-1}\neq y_{t}^{t+k-1}\}}.
\end{equation}
Obviously, for $k=1$ this gives the usual $R_1$ maximization, i.e. the PMAP
decoding, which is known to fault by allowing inadmissible paths.
It is natural to think that minimizers of  $R_k(s^T|x^T)$  ``move''
towards Viterbi paths ``monotonically'' as $k$  increases to $T$.
Indeed, when $k=T$, minimization of $R_k(s^T|x^T)$ \eqref{Rabiner-k}
is equivalent to minimization of $\bar R_\infty(s^T|x^T)$ achieved
by  the Viterbi decoding. However, as the experiments in Section \ref{sec:example}
below show, minimizers of \eqref{Rabiner-k} are not guaranteed to 
be admissible (even if admissibility were defined relative to the prior distribution) 
for $k>1$. Also, as we already pointed out in subsection \ref{sec:pathdiffers},
this approach does not give monotonicity, i.e. allows the optimal path for $k=2$
to have lower (prior and posterior) probabilities than those of the PMAP path (i.e. $k=1$).
Another drawback of using the loss $L_k$ \eqref{eq:dnew} is that,
 unlike the generalized PVD and PMAP hybrid decoders, 
the computational complexity of the Rabiner approach grows with the block length $k$.
We now show how these drawbacks go away when the sum in
\eqref{Rabiner-k} is replaced by a product, eventually arriving
at a subfamily of the generalized posterior Viterbi decoders. Certainly, replacing
the sum by the product alters the problem, and it does so in a way that makes
the block-wise coding idea work well. Namely, the longer the
block, the larger the resulting path probability, which is also now guaranteed to be
positive already for $k=2$. Moreover, this gives another interpretation
of the risks $\R_1(s^T|x^T)+C\R_{\infty}(s^T|x^T)$ (see also Remark \ref{rem:pvd} above), 
the prior risks $\R_1(s^T)+C\R_{\infty}(s^T)$, and consequently the generalized Problem \eqref{gen-problem}. 

Let $k$ be a positive integer. For the time being, let $p$ represent any first order Markov chain on $S^T$, 
and let us define
$$\U_k(s^T):=\prod_{j=1-k}^{T-1}p\big(s_{\max(j+1,1)}^{\min(j+k, T)}\big),\quad \R_k(s^T):=-{1\over T}\ln \U_k(s^T).$$ 

Thus
$$\U_k(s^T)=U^k_1\cdot U^k_2\cdot U^k_3,$$
where
\begin{align*}
U^k_1&:=p(s_1)\cdots p(s_1^{k-2})p(s_1^{k-1})\\
U^k_2&:=p(s_1^k)p(s_2^{k+1})\cdots
p(s_{T-k}^{T-1})p(s_{T-k+1}^T)\\
U^k_3&:=p(s_{T-k+2}^T)p(s_{T-k+3}^T)\cdots p(s_T).
\end{align*}
Thus, $\R_k$ is a natural generalization of $\R_1$ (introduced first for the posterior
distribution in \eqref{eqn:barr1}) since when $k=1$,  $\R_k=\R_1$.
\begin{theorem}\label{k-block}
Let $k$ be such that $T\geq k>1$. Then the following recursion holds
\begin{equation*}
\R_k(s^T)=\R_{\infty}(s^T)+\R_{k-1}(s^T),\quad \forall
s^T\in S^T.\end{equation*}\end{theorem}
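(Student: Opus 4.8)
The plan is to collapse the claimed additive recursion into a single \emph{multiplicative} identity,
\begin{equation*}
\U_k(s^T)=p(s^T)\,\U_{k-1}(s^T),
\end{equation*}
from which the theorem follows at once by applying $-\tfrac1T\ln$ to both sides and recalling that $\R_\infty(s^T)=-\tfrac1T\log p(s^T)$ (the logarithms being of a common base). Where some block probability vanishes, both sides of the multiplicative identity are $0$ and the associated risks are $+\infty$, so the additive form persists under the usual conventions on the extended reals. All the work therefore goes into the multiplicative identity, which I would prove by comparing the two products factor by factor.

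First I would peel off the one ``extra'' factor carried by $\U_k$. The product index $j$ ranges over $1-k,\dots,T-1$ in $\U_k$ but only over $2-k,\dots,T-1$ in $\U_{k-1}$, so $\U_k$ has exactly one additional factor, at $j=1-k$; there $(j+1)\vee1=1$ and $(j+k)\wedge T=1$, so this factor is just $p(s_1)$. It then suffices to show that on the common range $j=2-k,\dots,T-1$ the ratio of the two products equals $p(s^T)/p(s_1)=\prod_{m=2}^{T}p(s_m\mid s_{m-1})$, the last equality being nothing but the first-order Markov factorization of $p(s^T)$.

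The key structural observation is that, for each fixed $j$ on the common range, the two blocks share the \emph{same} left endpoint $L:=(j+1)\vee1$, while their right endpoints are $(j+k)\wedge T$ and $(j+k-1)\wedge T$. I would then split on whether the right end is clipped by $T$. For $j\le T-k$ these endpoints are $j+k$ and $j+k-1$, so the Markov property gives $p(s_L^{\,j+k})/p(s_L^{\,j+k-1})=p(s_{j+k}\mid s_{j+k-1})$, the block $s_L^{\,j+k-1}$ being nonempty throughout this range. For $j\ge T-k+1$ both right endpoints equal $T$, the two factors coincide, and the ratio is $1$. Since $m=j+k$ runs over $2,\dots,T$ as $j$ runs over $2-k,\dots,T-k$, multiplying these ratios reproduces exactly $\prod_{m=2}^{T}p(s_m\mid s_{m-1})$, which closes the argument.

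The step demanding the most care is the boundary bookkeeping: one must check that the left clipping $\vee1$ (which generates the growing blocks of $U_1^k$) and the right clipping $\wedge T$ (which generates the shrinking blocks of $U_3^k$) conspire so that precisely the indices $j\le T-k$ contribute a transition probability while every index $j\ge T-k+1$ cancels. In particular I would verify the two endpoints $j=2-k$, where $s_L^{\,j+k-1}$ degenerates to the single state $s_1$, and $j=T-k+1$, the first fully right-clipped index, and confirm that the reasoning survives the extreme case $k=T$, in which the middle product $U_2^k$ shrinks to the single block $p(s_1^T)$.
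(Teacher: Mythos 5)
Your proof is correct and follows essentially the same route as the paper's: both reduce the theorem to the multiplicative identity $\U_k(s^T)=p(s^T)\,\U_{k-1}(s^T)$, established by peeling one transition probability $p(s_{j+k}\mid s_{j+k-1})$ off each unclipped block via the first-order Markov property, and then take logarithms. The only differences are bookkeeping (the paper groups the factors into the three sub-products $U^k_1,U^k_2,U^k_3$ and shifts them, whereas you pair factors at a common index $j$ and track the clipping operators directly) and your explicit handling of vanishing block probabilities, which the paper omits.
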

\begin{proof}
Note that 
$$U^k_1=U^{k-1}_1p(s_1^{k-1}),\quad
U^k_3=p(s_{T-k+2}^T)U^{k-1}_3.$$
Next, for all $j$ such that $j+k\le T$, the Markov property gives 
$$p(s_{j+1}^{j+k})=p(s_{j+k}|s_{j+k-1})p(s^{j+k-1}_{j+1})$$
and
\begin{align*}
&U^k_2p(s_{T-k+2}^T)=p(s_1^k)p(s_2^{k+1})\cdots p(s_{T-k+1}^T)p(s_{T-k+2}^T)=\\
&p(s_k|s_{k-1})p(s_1^{k-1})p(s_{k+1}|s_k)p(s_2^{k})\cdots
p(s_T|s_{T-1})p(s_{T-k+1}^{T-1})p(s_{T-k+2}^T)=\\
&p(s_{k}|s_{k-1})p(s_{k+1}|s_k)\cdots p(s_T|s_{T-1})
p(s_1^{k-1})\cdots
p(s_{T-k+1}^{T-1})p(s_{T-k+2}^T)=\\
&p(s_{k}|s_{k-1})\cdots p(s_T|s_{T-1})U_2^{k-1}.\end{align*}
Hence,
\begin{align*}
\U_k(s^T)&=U^{k-1}_1p(s_1^{k-1})p(s_{k}|s_{k-1})\cdots
p(s_T|s_{T-1})U_2^{k-1}U_3^{k-1}\\
&=p(s_1^{T})U^{k-1}_1U_2^{k-1}U_3^{k-1}=p(s^T)\U_{k-1}(s^T).
\end{align*}
The second equality above also follows from the Markov property.
Taking logarithms on both sides and dividing by $-T$ completes the proof. 
\end{proof}
Now, we specialize this result to our HMM context, and, thus, $p(s^T)$ and 
$p(s^T|x^T)$ are again the prior and posterior hidden path distributions. 
\begin{corollary}\label{corollary0}Let $k$ be such that $T\geq k>1$. For all paths 
$s^T\in S^T$ the prior risks $\R_k$ and $\R_\infty$
satisfy \eqref{recursion1}. For every $x^T\in {\mathcal X}^T$
and for all paths $s^T\in S^T$, the posterior risks $\R_k$ and $\R_\infty$ 
satisfy \eqref{recursion}.
\begin{eqnarray}
\R_k(s^T)&=&\R_{\infty}(s^T)+\R_{k-1}(s^T), \label{recursion1}\\
\R_k(s^T|x^T)&=&\R_{\infty}(s^T|x^T)+\R_{k-1}(s^T|x^T). \label{recursion}
\end{eqnarray}
\end{corollary}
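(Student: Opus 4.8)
The plan is to obtain both identities as immediate specializations of Theorem~\ref{k-block}, whose statement and proof were deliberately phrased for an \emph{arbitrary} first-order Markov chain $p$ on $S^T$. Consequently the only work left is to exhibit two such chains and to read off the correspondence $\R_\infty(s^T)=-{1\over T}\log p(s^T)$ in each case.

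For \eqref{recursion1} I would take $p(s^T)=\mathbf{P}(Y^T=s^T)$. Since $Y=\{Y_t\}$ is by hypothesis a (possibly inhomogeneous) Markov chain with initial law $\pi$ and transition matrices $\mathbb{P}=(p_{ij})$, its finite-dimensional law $p(s^T)=\pi_{s_1}\prod_{t=1}^{T-1}p_{s_t s_{t+1}}$ is exactly the law of a first-order Markov chain. Theorem~\ref{k-block} then applies verbatim and delivers \eqref{recursion1}, with $\R_\infty(s^T)=-{1\over T}\log p(s^T)$ as in \eqref{eq:rinftyprior}.

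For \eqref{recursion} I would fix an arbitrary $x^T\in\mathcal{X}^T$ and set $p(\cdot):=p(\cdot\,|x^T)$, the posterior path distribution. The key step---and the only nontrivial one---is to verify that $p(\cdot\,|x^T)$ is again a first-order Markov chain on $S^T$. I would establish this from the factorization
$$p(s^T|x^T)=\frac{p(x^T,s^T)}{p(x^T)}=\frac{1}{p(x^T)}\,\pi_{s_1}f_{s_1}(x_1)\prod_{t=1}^{T-1}p_{s_t s_{t+1}}f_{s_{t+1}}(x_{t+1}),$$
which displays $p(s^T|x^T)$ as a product of nonnegative weights, each depending only on a single pair $(s_t,s_{t+1})$ (and on the fixed data). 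A standard forward-backward argument---or, equivalently, the general fact that such pairwise-factorizable distributions are Markov---then yields $p(s_{t+1}|s_1^t,x^T)=p(s_{t+1}|s_t,x^T)$ for every $t$, i.e. the posterior inherits the first-order Markov property (inhomogeneous in general, even when the prior is homogeneous). This is precisely the property invoked in the Introduction.

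Once the Markov property of $p(\cdot\,|x^T)$ is in hand, Theorem~\ref{k-block} applies to this chain for each fixed $x^T$ and produces \eqref{recursion} for all $s^T\in S^T$, noting that here $\R_\infty(s^T)=-{1\over T}\log p(s^T|x^T)=\R_\infty(s^T|x^T)$ in the notation of \eqref{loglikerisk}. I expect the main obstacle to be purely notational bookkeeping: confirming that the generic symbols $p(s_{j+1}^{j+k})$ and $p(s_{j+k}|s_{j+k-1})$ appearing inside the proof of Theorem~\ref{k-block} are to be read, in the posterior case, as the posterior marginal $p(s_{j+1}^{j+k}|x^T)$ and the posterior one-step conditional $p(s_{j+k}|s_{j+k-1},x^T)$, so that the Markov factorization step of that proof carries over unchanged. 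No computation beyond Theorem~\ref{k-block} itself is needed.
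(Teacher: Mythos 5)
Your proposal is correct and follows exactly the paper's own route: the paper likewise proves the corollary by noting that the prior law of $Y^T$ is a first-order Markov chain and that, conditioned on $x^T$, $Y^T$ remains a (generally inhomogeneous) first-order Markov chain, so Theorem~\ref{k-block} applies to both. Your additional verification of the posterior Markov property via the pairwise factorization is a detail the paper simply asserts, but it does not change the argument.
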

\begin{proof}Clearly, conditioned on the data $x^T$, $Y^T$ remains a first
order Markov chain (generally inhomogeneous even if it was   homogeneous {\it a priori}).
Hence, Theorem \ref{k-block} applies.\end{proof}

Below, {\it we focus on the posterior distribution and risks, but the discussion readily
extends to any first order Markov chain.}

Let $v(x^T; k)$ be a decoder that minimizes $\R_k(s^T|x^T)$,
\begin{equation}\label{eq:khybrids}
v(x^T;k)=\arg\max_{s^T\in S^T}\U_k(s^T|x^T)=\arg\min_{s^T\in S^T}\R_k(s^T|x^T).
\end{equation} 
Corollary \eqref{corollary} below states how $\R_k(s^T|x^T)$ minimization is a special 
case of the generalized Problem \eqref{gen-problem}. 
We refer to the generalized posterior-Viterbi hybrid decoders $v(x^T;k)$ as {\it k-block} PVD
and summarize their properties in Corollary \eqref{corollary}.
\begin{corollary}\label{corollary}
For every $x^T\in {\mathcal X}^T$, and for every $s^T\in S^T$, we have
\begin{align}\label{k-blockrec2}
&\R_k(s^T|x^T)=(k-1)\R_{\infty}(s^T|x^T)+\R_{1}(s^T|x^T)&\forall k~\text{such that}~1\le k \le T.\\
&v(x^T; k)~\text{is admissible}& \forall k~\text{such that}~k>1. \label{eqn:admissible}\\ 
\label{eqn:increase}
&\R_{\infty}(v(x^T; k)|x^T)\leq \R_{\infty}(v(x^T; k-1)|x^T)& \forall k~\text{such that}~1<k \le T.\\
\label{eqn:increase2}
&\R_{1}(v(x^T; k)|x^T)\geq \R_{1}(v(x^T; k-1)|x^T)&\forall k~\text{such that}~1<k \le T.
\end{align}
\end{corollary}
Equation \eqref{k-blockrec2}
is also of practical significance showing that $v(x^T; k)$ 
is a solution to \eqref{gen-problem} with $C_1=1$,
$C_2=k-1$, $C_3=C_4=0$, and as such can be computed in the same
fashion for all $k$, $1\le k\le T$ (see Theorem \ref{dyn} above). 

Inequality \eqref{eqn:increase} means that
the posterior path probability $p(v(x^T; k)|x^T)$ increases with $k$. 
At the same time, increasing $k$ also increases $\R_1$-risk, i.e. decreases the
product of the (posterior) marginal probabilities of states along the path 
$v(x^T; k)$. 
Inequalities \eqref{eqn:increase} and \eqref{eqn:increase2} clearly show that as
$k$ increases, $v(\cdot; k)$ monotonically moves from $v(\cdot; 1)$ (PMAP) 
towards the Viterbi decoder, i.e. $v(\cdot; \infty)$.  
However, the maximum block length is $k=T$. 

A natural way to complete this bridging of PMAP with MAP is by embedding 
the $\R_k$ risks into the family $\R_\alpha$ via
$\alpha=\frac{k-1}{k}\in [0,1]$. Thus,  \eqref{k-blockrec2} extends to
\begin{equation}\label{k-blockrec22}
\R_\alpha(s^T|x^T):=\alpha\R_{\infty}(s^T|x^T)+(1-\alpha) \R_{1}(s^T|x^T)
\end{equation}
with $\alpha=0$ and $\alpha=1$ corresponding to the PMAP and Viterbi cases, respectively. 
This embedding is clearly still within the generalized Problem \eqref{gen-problem}
via $C_1=1-\alpha$, $C_2=\alpha$, $C_3=C_4=0$. In particular, $v(x^T; k(\alpha))$
can be computed by using the same dynamic programming algorithm of Theorem \ref{dyn}
for all $k\in [1,\infty]$ (i.e. all $\alpha\in[0,1]$), and inequalities \eqref{eqn:increase} 
and \eqref{eqn:increase2} are special cases of Corollary \ref{corollary:optimization} (part 1) to Lemma \ref{lemma:optimization}.

Recalling Remark \ref{rem:pvd}, we note
that on the lower end of $0\le \alpha\le 1$, before reaching PMAP ($\alpha=0$) 
we encounter PVD for some sufficiently small $\alpha\approx 0$. 
Note also that in \eqref{eqn:admissible} $k$ need not be integer either, i.e. 
Remark \ref{rem:admissible} establishes admissibility of $v(x^T; k(\alpha))$, $k(\alpha)=1/(1-\alpha)$,  
for all $\alpha \in (0,1]$ (i.e. all $k\in (1,\infty]$).  
\begin{proof}
Equation \eqref{k-blockrec2} follows immediately from equation \eqref{recursion} of 
Corollary \ref{corollary0}. Admissibility of $v(x^T; k)$ for $k>1$ in \eqref{eqn:admissible}
becomes obvious recalling Remark \ref{rem:admissible}. Inequalities \eqref{eqn:increase} and
\eqref{eqn:increase2} are established by Corollary \ref{corollary:optimization}.
 \end{proof}


Given $x^T$ and a sufficiently  large $k$ (equivalently, $\alpha\approx 1$), 
$v(x^T; k)$, the minimizer of $\R_\alpha(s^T|x^T)$ \eqref{k-blockrec22} 
(and \eqref{k-blockrec2}) would produce a Viterbi path $v(x^T; \infty)$
(since $S^T$ is finite).  However, such $\alpha$ (and $k$) would generally 
depend on $x^T$, and in particular $k$ may need to be larger than $T$, i.e.
$v(x^T; T)$ may be different from $v(x^T; \infty)$.

At the same time, for $k>1$ we  have 
\begin{equation}\label{vorratused}
\R_{\infty}(v(x^T; \infty)|x^T)\leq \R_{\infty}(v(x^T; k)|x^T)\leq
\R_{\infty}(v(x^T; \infty)|x^T) +{\R_1(v(x^T; \infty)|x^T)\over k-1},
\end{equation}
on which we comment more in Section \ref{sec:asy} below.
The first inequality of \eqref{vorratused} above follows
immediately from the definition of the Viterbi decoder. To obtain the second inequality, 
apply \eqref{k-blockrec2} to both $v(x^T; k)$ and $v(x^T; \infty)$ and
subtract one equation from the other.  Dividing the resulting terms by $k-1$, noticing that 
$\R_k(v(x^T; \infty)|x^T)\ge \R_k(v(x^T; k)|x^T)$ and 
$\R_1(v(x^T; k)|x^T)\ge 0$, and rearranging the other terms yields the result. 

Considering the prior chain $Y^T$ and risks in \eqref{recursion1}, we immediately
obtain statements analogous to \eqref{k-blockrec2}-\eqref{eqn:increase2}, 
extending these new interpretations to the entire generalized Problem \eqref{gen-problem}.

\section{Algorithmic approaches}\label{subsec:alg}
It is also possible (at least when the Viterbi path is unique) to hybridize MAP and PMAP inferences 
without introduction of risk/loss functions.
We discuss such approaches mainly because one such approach was taken by \citep{Brushe1998}, the only publication 
dedicated to the theme of hybridization of the MAP and PMAP inferences in HMMs.

First note that the hybridization can be achieved by a suitable transformation of the  forward and
 backward variables $\alpha_t(i)$ and $\beta_t(i)$ defined in \eqref{eqn:fwv}. To make this 
concrete, consider the recursively applied power transformations with $\mu>0$ given in \eqref{eq:abp} below
\begin{eqnarray}\label{eq:abp}
\alpha_1(i; \mu)& := &\alpha_1(i)   \\
\alpha_{t}(i; \mu)& :=& \left[\sum_{j=1}^K\left(\alpha_{t-1}(j; \mu)p_{ji}\right)^\mu\right]^\frac{1}{\mu}f_i(x_{t}),\quad t=2,3\ldots, T \nonumber\\
\beta_T(i; \mu)&:=&\beta_T(i)=1\nonumber\\
\beta_{t}(i; \mu)& :=& \left[\sum_{j=1}^K
 \left(p_{ij}f_j(x_{t+1})\beta_{t+1}(j; \mu)\right)^\mu \right]^\frac{1}{\mu},\quad t=T-1,T-2,\ldots, 1, \nonumber
\end{eqnarray}
for all $i\in S$.  Clearly, $\alpha_t(i; 1)=\alpha_t(i)$ and $\beta_t(i;
1)=\beta_t(i)$, for all $i\in S$ and all $t=1,2,\ldots, T$. Thus, 
$\mu=1$ leads to the PMAP decoding, i.e. at time $t$ returning 
\begin{equation}\label{eqn:pmaphybrids}
v_t=\arg\max_{i\in S}\{\alpha_t(i;1)\beta_t(i;1)\}, 
\end{equation}
provided some tie-breaking rule. 

Using induction on $t$ and continuity of the power transform, it can also
be seen that the following limits exist and are finite for all $i\in
S$ and all $t=1,2,\ldots, T$: $\lim_{\mu\to \infty}\alpha_t(i;
\mu)=:\alpha_t(i,\infty)$ and $\lim_{\mu\to \infty}\beta_t(i;
\mu)=:\beta_t(i; \infty)$, where 
\begin{align}\label{eqn:alphamuinf}
\alpha_t(i;\infty)=&\max_{ s^{t}: s_t=i}p(x^t, s^t),\quad t=1,2,\ldots, T,\\
                  =&\max_{j\in S}\left(\alpha_{t-1}(j;\infty)p_{ji}\right)f_i(x_t),\quad t=2,3,\ldots,T,\nonumber\\
\nonumber \beta_t(i; \infty)=&
\max_{s_{t+1}^T\in S^{T-t}}p(x_{t+1}^T, s_{t+1}^T|Y_t=i),\quad t=T-1,T-2,\ldots,1, \quad \text{and}~
\beta_T(i; \infty)=1,\\
\nonumber&\max_{j\in S}\left(p_{ij}f_j(x_{t+1})\beta_{t+1}(j; \infty)\right).
\end{align}
The above convergence follows from the following trivial observation, which
we nonetheless prove below for reasons to become clear later on in the context of
equation \eqref{eq:brushe}.  
\begin{proposition}\label{eqn:thermodynlim} 
Let $a_j(\mu)$, $j=1,2,\ldots, K$, be non-negative as functions of $\mu\in (0,\infty)$.
Assume that $a_j(\mu)$ converges to some (finite) limit $a_j$ as $\mu\to\infty$. Assume further that
for any $\mu$, at least some of the $a_j(\mu)$ are positive. Then we have
\begin{eqnarray*}
\lim_{\mu\to \infty} \left( \sum_{j=1}^K a_j(\mu)^\mu\right)^{\frac{1}{\mu}}=
\max_{1\le j\le K}\{a_j\}.
\end{eqnarray*}
\end{proposition}
\begin{proof}
Let $M(\mu)=\max_{1\le j\le K}\{a_j(\mu)\}$, and let $M=\max_{1\le j\le K}\{a_j\}$.
Write $\left( \sum_{j=1}^K a_j(\mu)^\mu\right)^{\frac{1}{\mu}}=
M(\mu)\left( \sum_{j=1}^K \left(\frac{a_j(\mu)}{M(\mu)}\right)^\mu\right)^{\frac{1}{\mu}}$ and 
note that as $\mu\to\infty$, $M(\mu)$ converges to $M$.  Also, we have
$$1\le \left(\sum_{j=1}^K\left(\frac{a_j(\mu)}{M(\mu)}\right)^\mu \right)^{\frac{1}{\mu}}\le K^{\frac{1}{\mu}}.$$
Since  $K^{\frac{1}{\mu}}\to 1$, 
by the Sandwich Theorem the middle term also converges to 1, yielding the proposed result. 
\end{proof}

Returning to \eqref{eqn:alphamuinf}, we note that
any Viterbi path $v(x^T; \infty)=(v_1,\ldots,v_T)$ satisfies the
following property: 
\begin{equation}\label{eqn:viterbi-pointwise}
v_t=\arg\max_{i\in S}\{\alpha_t(i;\infty)\beta_t(i;
\infty)\}. 
\end{equation}
The above property \eqref{eqn:viterbi-pointwise} has already been pointed out by
\citep{Brushe1998}. 
The main motivation of \citep{Brushe1998}, however,
seems to be the case of continuous emission distributions $P_s$,
which might explain why the authors do not consider the fact that
{\it not every path that satisfies \eqref{eqn:viterbi-pointwise} is necessarily
Viterbi, i.e. MAP.}  Thus, ignoring  potential non-uniqueness 
of the Viterbi paths, 
\citep{Brushe1998} state, based on \eqref{eqn:viterbi-pointwise}, 
that the Viterbi path can be found {\em symbol-by-symbol}.
As the following simple example shows, 
{\it when the Viterbi path is not unique, the attempt to implement
the Viterbi decoding in the symbol-by-symbol fashion (based on \eqref{eqn:viterbi-pointwise}) can produce
suboptimal (i.e. in the sense of MAP), or even inadmissible, paths}.
\begin{example}\label{example:BrusheMAP}
Let $S=\{1,2,3\}$ and let $\{A,B,C,D\}$ be the emission alphabet. Let the initial
distribution $\pi$, transition probability matrix $\mathbb{P}$, and the emission 
distributions $f_s$, $s\in S$, be defined as follows: 
\begin{eqnarray*}
\pi=\begin{pmatrix}
     0.4 \\0.54 \\ 0.06
    \end{pmatrix}
&  \mathbb{P}=\begin{pmatrix}
              0.6  &  0.4  &   0\\
    0.1   & 0.1 &   0.8\\
         0   & 0.02 &   0.98\end{pmatrix}
& 
\begin{array}{lllll}
&A   & B     & C     & D \\
f_1(\cdot) &0.3 & 0.15  & 0.25  & 0.3 \\
f_2(\cdot)&0.2 &0.3  &0.3  &0.2 \\
f_3(\cdot)&0.1(6) & 0.1(6) & 0.1(6) & 0.5
\end{array}.
\end{eqnarray*}
Suppose the sequence $x^2=(A,B)$ has been observed. The (posterior) probabilities of
all the nine paths $(i,j)$ are then summarized in the matrix $PP=(P(Y^2=(i,j)|AB))$ below:
$$PP=\begin{pmatrix}
      0.0108  &  0.0144 &        0\\
    0.0016  &  0.0032 &   0.0144\\
         0  &  0.0001 &   0.0016
\end{pmatrix},$$
hence there are two Viterbi paths in this case, namely $(1,2)$ and $(2,3)$.
Now, $\alpha_1(i;\infty)=\pi_i f_i(A)$, $i\in S$, and 
$\beta_1(i;\infty)= \max_{j\in S} P(X_2=B,Y_2=j|Y_1=i)= \max_{j\in S} f_j(B)p_{ij}$, or,
in the vector form:
$$\begin{pmatrix}\alpha_1(1;\infty) \\ \alpha_1(2;\infty)\\ \alpha_1(3;\infty) \end{pmatrix}
=\begin{pmatrix} 
  0.12\\
    0.108\\
    0.01
 \end{pmatrix},\quad \begin{pmatrix}\beta_1(1;\infty) \\ \beta_1(2;\infty)\\ \beta_1(3;\infty) \end{pmatrix}
=\begin{pmatrix} 
  0.12\\
    0.1(3)\\
    0.16(3)
 \end{pmatrix},\quad \begin{pmatrix}\alpha_1(1;\infty) \beta_1(1;\infty) \\ 
\alpha_1(2;\infty) \beta_1(2;\infty)\\ \alpha_1(3;\infty) \beta_1(3;\infty) \end{pmatrix}=
\begin{pmatrix} 0.0144 \\   0.0144\\   0.0016(3)
 \end{pmatrix},
$$
so we have $v_1=1$ or $v_1=2$.  On the other hand,
$\alpha_2(i;\infty)=\max_{j\in S} P(X^2=(A,B),Y^2=(j,i))$, and $\beta_2(i,\infty)=1$ for all $i\in S$.
Therefore, 
$$\begin{pmatrix}\alpha_2(1;\infty) \\ \alpha_2(2;\infty)\\ \alpha_2(3;\infty) \end{pmatrix}
=
\begin{pmatrix}\alpha_2(1;\infty) \beta_2(1;\infty)\\ \alpha_2(2;\infty)\beta_2(2;\infty)\\ 
\alpha_2(3;\infty)\beta_2(3;\infty) \end{pmatrix}=
\begin{pmatrix} \max\{0.0108,0.0016,0\} \\ \max\{0.0144,0.0032,0.0001\}\\ \max\{0, 0.0144, 0.0016\} \end{pmatrix}
=\begin{pmatrix} 0.0108 \\ 0.0144 \\ 0.0144 \end{pmatrix}.$$  Therefore, $v_2=2$ or $v_2=3$.  
However, the symbol-by-symbol decoding is not aware that gluing $v_1=1$ and $v_2=3$ is not only 
suboptimal, but is actually forbidden, i.e. results in the inadmissible path $(1,3)$.
\end{example}  
 
In contrast to Viterbi, the PMAP inference 
(in the absence of constraints)
is by definition {\it point-wise}, i.e. symbol-by-symbol, hence not sensitive to the non-uniqueness issue. 

All in all, the main idea of \citep{Brushe1998} is to consider ``hybrid'' decoders 
that use intermediate values of the interpolation parameter $\mu$.  That is, the hybrid decoder 
with parameter $\mu$ is defined as a decoder that at time $t$ returns 
\begin{equation}\label{eqn:hybrids}
v_t=\arg\max_{i\in S}\{\alpha_t(i;\mu)\beta_t(i;\mu)\}, 
\end{equation}
provided some tie-breaking rule. 

Note also that in their attempt to hybridize PMAP with Viterbi in this manner, \citep{Brushe1998} 
instead of \eqref{eq:abp} use different transformations that are based on the 
following $(0,\infty)\to \mathbb{R}$ composite mapping
\begin{equation}\label{eq:brushe}
  F(\mu, d_1(\mu), d_2(\mu), \ldots, d_N(\mu)):=\frac{1+\left(N-1\right)\exp(-\mu)}{\mu}\log\left(\frac{1}{N}\sum_{j=1}^N\exp\left(\mu d_j\left(\mu\right)\right)
  \right),
\end{equation}
where $N=K$ (in our notation) and functions $d_j\left(\mu\right)$ are  continuous on $[0,\infty)$ with
finite limits $d_j(\infty)$ as $\mu\to \infty$. It is then not hard to verify that as $\mu\to 0$, 
the function 
\eqref{eq:brushe}
converges to $\sum_{j=1}^N d_j(0)$ (based on \citep[Proposition 1a]{Brushe1998}). 
At the same time, as $\mu\to \infty$ the same function converges to 
$\max_{1\le j\le N}\{d_j(\infty)\}$ (based on \citep[Proposition 1b]{Brushe1998}).  To establish the latter 
convergence, \citep{Brushe1998} refer to the Varadhan-Laplace Lemma, although the result is immediately seen
with just basic calculus, e.g. by using continuity of the logarithmic function, 
taking logarithm inside the limit in Proposition \ref{eqn:thermodynlim}, 
and identifying $a_j(\mu)$ with $e^{d_j(\mu)}$. 

This mapping is then applied recursively to $\alpha_{t}(i; \mu)$ and $\beta_{t}(i; \mu)$, the 
analogs of the forward and backward variables ($\kappa_{t}^\mu(i)$ and $\tau_{t}^\mu(i)$, respectively, 
in the notation of \citep{Brushe1998}),
to produce the correct end points/limits, i.e. PMAP and Viterbi/MAP (when the latter is unique). Specifically,
the transformed forward and backward variables would be re-defined as follows:
\begin{eqnarray}\label{eqn:brushesalphas}
\alpha_1(i; \mu)& := &\alpha_1(i);   \\
\alpha_{t}(i; \mu)& :=& 
\frac{1+\left(N-1\right)e^{-\mu}}{\mu}\log\left(\frac{1}{N}
\sum_{j=1}^Ne^{\mu \alpha_{t-1}\left(j;\mu\right)p_{ji}}\right)f_i\left(x_{t}\right),
~t=2,3\ldots, T; \nonumber\\
\beta_T(i; \mu)&:=&\beta_T(i)=1;\nonumber\\
\beta_{t}(i; \mu)& :=& 
\frac{1+\left(N-1\right)e^{-\mu}}{\mu}\log\left(\frac{1}{N}
\sum_{j=1}^Ne^{\mu \beta_{t+1}\left(j;\mu\right)p_{ij}f_j\left(x_{t+1}\right)}\right),
~t=T-1,T-2\ldots, 1. \nonumber
\end{eqnarray}
Above, we took the liberty to correct $\kappa_{1}^\mu(i)=\pi(i)$  ($\alpha_{1}(i; \mu)=\pi_i$ in our notation), 
which appears in \citep{Brushe1998} as 
equation (22) and also in the proofs of parts (a) and (b) of their Lemma 1.  
Clearly, in order for $\kappa_{1}^\mu(i)$ ($\alpha_1(i; \mu)$ in our notation) to match 
$\alpha_1(i)=P(Y_1=i,X_1=x_1)$ (as claimed in their Lemma 1), $\kappa_{1}^\mu(i)$ has to equal 
$\pi(i)b_i(O_1)$ (which is $\pi_i f_i(x_1)$ in our notation).  Note that 
equation (15) in \citep{Brushe1998} leaves $\alpha_1(i)$ undefined, but instead introduces $\alpha_0(i)$,
which is defined to be $\pi(i)$.  If that was an implicit intention to introduce a 
``silent'' state at $t=0$, then their equation (22) and the relevant parts of the proof of 
Lemma 1 would also have to start with $t=0$ and not with $t=1$. If, on the other hand, 
$t=0$ in equation (15) was simply a typing error and the intention was to have $t=1$, then 
the would-be definition of $\alpha_1(i)=\pi(i)$ contradicts an earlier equation just below 
their equation (14), which gives $\alpha_1(i)=P(O_1, q_1=S_i)=\pi(i)b_1(O_1)$ (i.e.  $P(Y_1=i,X_1=x_1)
=\pi_1 f_1(x_1)$ in our notation).  

Returning to the essence of the approach, note that the only reason stated in
\citep{Brushe1998} for choosing \eqref{eqn:brushesalphas} as the family of interpolating
transformations is the attainment 
of the required limits (i.e. PMAP when $\mu\to 0$, and Viterbi when $\mu\to \infty$). 
It is therefore not clear if \citep{Brushe1998}
realized that besides \eqref{eqn:brushesalphas}, there are 
other (single parameter) families of transformations, such as \eqref{eq:abp}, 
with the same limiting behavior. Naturally, the resulting interpolation generally depends on the 
choice of the transformations used.  In the absence of any special reason to use \eqref{eqn:brushesalphas},
\eqref{eq:abp} has an appeal of being more commonly used and looking simpler, should one really  
wish to pursue the idea of algorithmic hybridization.  Moreover,  we explain next (subsection \ref{sec:Brusheproblem}) 
{\it why the hybrid decoder defined by \eqref{eqn:hybrids} and the transformations \eqref{eqn:brushesalphas} does
not work in practice except with trivial examples}, and we also show (subsection \ref{sec:brushenotscales}) 
{\it how this decoder can be modified to become operational}. 
In contrast to this, we will show (subsection \ref{sec:powerscales}) that {\it the hybrid decoder  based on 
the transformations \eqref{eq:abp} becomes operational by modifying just the algorithm 
used for its computation, and not the  decoder}. This makes the transformations \eqref{eq:abp}
even more attractive as an alternative to \eqref{eqn:brushesalphas}.

\subsection{The hybrid decoder based on the transformations \eqref{eqn:brushesalphas} does
not work in practice except with trivial examples}\label{sec:Brusheproblem}
The key point is that the transform-based algorithmic hybridization attempts to compute
quantities which, at least for $\mu\approx 0$, are the same order of magnitude as the forward 
and backward probabilities $\alpha_t(i)=P(X^t=x^t,Y_t=i)$ and $\beta_t(i)=P(x_{t+1}^T|Y_t=i)$. 
These are well-known to vanish exponentially 
fast with $T$, see, for example, \citep[13.2.4]{Bishop2006} who also note that ``[f]or moderate
lengths of chain (say 100 or so), the calculation of the [$\alpha_t(j)$] will soon exceed the dynamic
range of the computer, even if double precision floating point is used.'' The situation clearly gets
worse as $\mu$ increases.  Indeed, recall \eqref{eqn:alphamuinf}, and note that  
$\max_{ s^{t}: s_t=i}p(x^t, s^t)=\alpha_t(i; \infty)\le  \sum_{ s^{t}: s_t=i}p(x^t, s^t)=\alpha_t(i)$ 
(which is also $\alpha_t(j; 1)$ in \eqref{eq:abp} and $\alpha_t(j; 0)$ in \eqref{eqn:brushesalphas}). 
This easily leads to a collapse of computations already with chains 
as short as $T=10$ (which indeed happens using the data and model from our experiments of
 Section \ref{sec:example} below).

It appears that the authors of \citep{Brushe1998} do not fully understand  the nature
of the above numerical problems when they
divert the reader's attention to the computation of the \texttt{logsumexp} function 
used in their transforms \eqref{eq:brushe}, \eqref{eqn:brushesalphas}. This is misleading as 
the $\log(e^a+e^b)=\max\{a,b\}+\log\left(1+e^{-|a-b|}\right)$ trick (alluded to  
by \citep{Brushe1998} in their Remark below equation (25))
is relevant to the problem of underflow only of the {\it intermediate values} (i.e. $e^a+e^b$ when
$a$ or $b$ is negative of a large magnitude, such as the logarithm of a very small probability).  
In the case of the transform \eqref{eq:brushe}, however, computations of the transformed, say, forward
variable $\alpha_{t+1}(i;\mu)$ \eqref{eqn:brushesalphas}, do require 
$\mu d_j(\mu)=\mu \alpha_t(j;\mu)p_{ji}$ and not its logarithm.  Thus, eventually (i.e. for some $t$) underflow occurs for some 
$\mu \alpha_t(i;\mu)p_{ji}$, and
then (i.e. for some possibly larger $t$) for all  $\mu \alpha_t(i;\mu)p_{ji}$. In terms of 
the \texttt{logsumexp} function, this means that  both $e^a$ and $e^b$ become 1 (and not zero!)
but the logarithm of their average (the core of  the transform \eqref{eq:brushe}) becomes 0, transferring
the underflow to the next generation, i.e. $\alpha_{t+1}(i;\mu)$.  Thus,  storing 
$\alpha_t(i;\mu)$ in the log-domain is irrelevant here since the transforms \eqref{eq:brushe}, 
\eqref{eqn:brushesalphas} with or
without the \texttt{logsumexp} trick, do
require the actual value of $\alpha_t(i;\mu)$.  Of course, one can conceivably introduce the \texttt{loglogsumexpexp}
function to operate on $\log(\alpha_t(i;\mu))$ and resolve this problem in that way, 
but it is not clear if the goal is worth the effort. 

Furthermore, insisting on that ``[t]he computational complexity and numerical implementation issues 
associated with the hybrid algorithm can be overcome using the Jacobian logarithm'', \citep[p. 3133]{Brushe1998} 
repeatedly refer to another paper, which proposes to compute the \texttt{logsumexp} function
$\log(\sum_k \exp(a_k))$ via recursive application of $\log(e^a+e^b)=\max\{a,b\}+\log\left(1+e^{-|a-b|}\right)$.
Although this recursive implementation should indeed be 
generally more accurate (albeit also computationally more expensive) than the commonly used
single-shift implementation
$\log(\sum_k \exp(a_k))=M+\log\left(\exp(a_k-M)\right)$ ($M=\max_k\{a_k\}$), 
as we just explained above, it is irrelevant to the real problem of computing the transformed 
forward and backward variables $\alpha_t(i,\mu)$, $\beta_t(i,\mu)$ ($\kappa^\mu_t(i)$, 
$\tau^\mu_t(i)$, respectively, in \citep{Brushe1998}). Thus, despite their claims, 
the approach of \citep{Brushe1998} {\it does not immediately provide an operational
decoding algorithm} except for trivially short chains. For example, using the two-state
HMM from the Example \ref{example:Brushe_norm} and the 64-bit MATLAB \citep{matlab_R2011b} 
(but without The Symbolic Math Toolbox)
installation on a (64-bit) Linux machine, the hybrid decoder based on \eqref{eqn:brushesalphas}
with $\mu=1$ already fails for $T=40$ (with or without the \texttt{logsumexp} trick).  
For comparison,  the hybrid decoder based on the power transform \eqref{eq:abp} ($\mu=1$) 
survives an order of magnitude longer.

A natural question is then whether the transform-based algorithmic hybridization approach
(using \eqref{eqn:brushesalphas} or \eqref{eq:abp}, or the like) can at all work in practice.   
The fact that no  such example has been given by \citep{Brushe1998}, or anyone else
uptodate, casts some doubt.  Below we 
give reassuring answers, which have been verified to work on several realistic examples.  

Indeed, it is well-known that in practice, to decode the $t$-th symbol the PMAP
decoder  uses the posterior probabilities $p_t(i|x^T)$ and not
the vanishing joint probabilities $p_t(i|x^T)p(X^T=x^T)=P(x^T,Y_t=i)=\alpha_t(i)\beta_t(i)$.
The posterior probabilities $p_t(i|x^T)$ are computed as 
$\tilde \alpha_t(i)\tilde \beta_t(i)$,
where $\tilde \alpha_t(i)=P(Y_t=i|x^t)$ and $\tilde \beta_t(i)=P(x_{t+1}^T|Y_t=i)/p(x_{t+1}^T|x^t)$
are the scaled analogs of the forward and backward probabilities  $\alpha_t(i)$ and
$\beta_t(i)$ \citep[13.2.4]{Bishop2006}.  This allows PMAP to bypass the 
aforementioned problem of numerical underflow. 

\subsection{The hybrid decoder \eqref{eqn:hybrids} is invariant 
to rescaling of the power-transformed \eqref{eq:abp} forward and backward variables $\alpha(\cdot; \mu)$,  
$\beta(\cdot; \mu)$.} \label{sec:powerscales}
Let us apply the same normalization approach to the transformed forward and backward variables,
first, using the power transform \eqref{eq:abp} and then \eqref{eqn:brushesalphas}. 
First, recall (e.g. \citep[13.2.4]{Bishop2006}) that $\tilde \alpha_t(i)$
are obtained 
by replacing the recursive definition
$$\alpha_{t}(i)=f_i(x_{t}) \sum_{j=1}^K \alpha_{t-1}(j)p_{ji}, \quad i=1,2,\ldots, K,$$
by the two-step self-normalized  definition  
\begin{eqnarray*}
p(x_{t}|x^{t-1})\tilde \alpha_{t}(i)&=&f_i(x_{t}) \sum_{j=1}^K \tilde \alpha_{t-1}(j)p_{ji},\quad i=1,2,\ldots, K,\nonumber \\ \nonumber
\tilde \alpha_{t}(i) &=& \frac{p(x_{t}|x^{t-1})\tilde \alpha_{t}(i)}{\sum_{s=1}^K p(x_{t}|x^{t-1})\tilde \alpha_{t}(s)}, \quad \text{for}
~t=2,\ldots,T,\\  \text{where} \quad \tilde\alpha_1(i)&=&\alpha_1(i)/c_1,~\text{and}~c_1:=p(x_1)=\sum_{s=1}^K\alpha_1(s).
\end{eqnarray*}
Thus, for all $t=2,3 \ldots T$, and for all $i=1,2,\ldots, K$, 
\begin{eqnarray*}\label{eqn:scaledalphas}
\tilde \alpha_{t}(i)&=&
 \frac{f_i(x_{t}) \sum_{j=1}^K \tilde \alpha_{t-1}(j)p_{ji}}{c_{t}},\quad\text{where, also according to \cite[(13.56)]{Bishop2006},}\\ \nonumber 
c_{t}&:=&p(x_{t}|x^{t-1})=\sum_{s=1}^K f_s(x_{t}) \sum_{j=1}^K \tilde \alpha_{t-1}(j)p_{js}. 
\end{eqnarray*}
Similarly, the rescaled backward variables are given by 
\begin{eqnarray*}
\tilde \beta_T(i)&:=&1;\nonumber\\
\tilde \beta_{t}(i)& :=&\frac{\sum_{j=1}^K
 p_{ij}f_j(x_{t+1})\tilde \beta_{t+1}(j)}
{c_{t+1}},\quad t=T-1,T-2,\ldots,1. \nonumber
\end{eqnarray*}

In the same manner, we  normalize the $\alpha_t(i; \mu)$ and $\beta_t(i;\mu)$ 
(defined by equations \eqref{eq:abp}) for any $\mu>0$
as follows:
\begin{eqnarray}\label{eq:abpp}
\tilde \alpha_1(i; \mu)& := &\alpha_1(i)/c_1(\mu)=\tilde \alpha_1(i),\quad\text{where}~c_1(\mu):=c_1~\text{for all}~\mu;  \\
\tilde \alpha_{t}(i; \mu)& :=& \frac{\left[\sum_{j=1}^K\left(\tilde \alpha_{t-1}(j; \mu)p_{ji}\right)^\mu\right]^\frac{1}{\mu }f_i(x_{t})}
{c_{t}(\mu)},\quad t=2,3,\ldots T; \nonumber\\
\tilde \beta_T(i; \mu )&:=&\beta_T(i)=1;\nonumber\\
\tilde \beta_{t}(i; \mu)& :=&\frac{\left[\sum_{j=1}^K
 \left(p_{ij}f_j(x_{t+1})\tilde \beta_{t+1}(j;
  \mu)\right)^\mu\right]^\frac{1}{\mu}}
{c_{t+1}(\mu)},\quad t=T-1,T-2,\ldots 1, \nonumber
\end{eqnarray}
where 
\begin{eqnarray*}
c_{t}(\mu)&:=&{\sum_{s=1}^K\left[\sum_{j=1}^K\left(\tilde \alpha_{t-1}(j; \mu)p_{js}\right)^\mu \right]^\frac{1}{\mu}f_s(x_{t})},
\quad t=2,3\ldots T. \nonumber\\
\end{eqnarray*}
Thus, $c_t(1)=c_t$ for all $t=1,2,\ldots,T$. Also note that, using induction on $t$ and \eqref{eqn:alphamuinf},
$\lim_{\mu\to 1} c_t(\mu)=c_t(1)$, and the limits $c_t(\infty):=\lim_{\mu\to \infty } c_t(\mu)$ exist and are finite for all $t=1,2,\ldots, T$.  
\begin{proposition}\label{prop:powertransformworks}
For  any $i\in S$,  we have 
\begin{enumerate}[label=\emph{\arabic*)}, leftmargin=0cm,itemindent=.5cm,labelwidth=\itemindent,labelsep=0.1cm,align=left]
 \item $\tilde \alpha_t(i; \mu)=\frac{\alpha_t(i; \mu)}{\sum_{s=1}^K \alpha_t(s; \mu)}=\frac{\alpha_t(i; \mu)}{\prod_{m=1}^t c_m(\mu)}$ for all $t=1,2,\ldots T$, 
and $\tilde \beta_t(i; \mu)=\frac{\beta_t(i; \mu)}{\prod_{m=t+1}^T c_m(\mu)}$ for all $t=1,2,\ldots T-1$ and 
for all $\mu>0$;
\item $\lim_{\mu\to 1}\tilde \alpha_t(i; \mu)=\tilde \alpha_t(i)$,
$\lim_{\mu\to 1}\tilde \beta_t(i; \mu)=\tilde \beta_t(i)$ for all $t=1,2,\ldots T$;
\item $\lim_{\mu\to \infty }\tilde \alpha_t(i; \mu)=\tilde \alpha_t(i;\infty):=\frac{\alpha_t(i;\infty)}{\sum_{s=1}^K\alpha_t(s,\infty)}$,
for all $t=1,2,\ldots T$, and $\lim_{\mu\to \infty }\tilde \beta_t(i; \mu)=:\tilde \beta_t(i;\infty)=\frac{\beta_t(i;\infty)}{\prod_{m=t+1}^T c_m(\infty)}$,
for all $t=1,2,\ldots T-1$, and, finally, $\lim_{\mu\to \infty }\tilde \beta_T(i; \mu)=:
\tilde \beta_T(i;\infty)=1$ trivially;
\item The hybrid decoder \eqref{eqn:hybrids} based on the transformations \eqref{eq:abp} and 
the hybrid decoder \eqref{eqn:hybrids} based on the transformations \eqref{eq:abpp} are 
one and the same decoder, provided that both use the same tie-breaking rule.
\end{enumerate}
\end{proposition}
\begin{proof}
The first claim concerning the $\tilde \alpha_t$ is trivially true for $t=1$ by definition of $\alpha_1(i; \mu)$, i.e. \eqref{eq:abp}.
Now, using induction on $t$, assume that the claim is true for $t-1$.
Write $a_{t-1}(\mu)$ for $(\sum_{s=1}^K \alpha_{t-1}(s; \mu))^{-1}$ so that $a_{t-1}(\mu)\alpha_{t-1}(j; \mu)=\tilde\alpha_{t-1}(j; \mu)$
and $a_{t-1}(\mu)=(\prod_{m=1}^{t-1} c_m(\mu))^{-1}$.   
Then, using \eqref{eq:abpp}, we get
 $$\tilde\alpha_{t}(i; \mu)=\frac{\left( \sum_{j=1}^K \left( a_{t-1}(\mu) \alpha_{t-1}(j; \mu) p_{ji}\right)^\mu\right)^{\frac{1}{\mu}}f_i(x_{t})}
{\sum_{s=1}^K\left( \sum_{j=1}^K \left( a_{t-1}(\mu) \alpha_{t-1}(j; \mu) p_{js}\right)^\mu\right)^{\frac{1}{\mu}}f_s(x_{t})},$$
which, upon cancellation of the $a_{t-1}(\mu)$, yields the required result 
$$
\frac{\left( \sum_{j=1}^K \left(  \alpha_{t-1}(j; \mu) p_{ji}\right)^\mu\right)^{\frac{1}{\mu}}f_i(x_{t})}
{\sum_{s=1}^K\left( \sum_{j=1}^K \left(  \alpha_{t-1}(j; \mu) p_{js}\right)^\mu\right)^{\frac{1}{\mu}}f_s(x_{t})}=\frac{\alpha_{t}(i; \mu)}{\sum_{s=1}^K 
\alpha_{t}(s; \mu)}.$$
To see that $\tilde \alpha_t(i; \mu)$ also equals $\frac{\alpha_t(i; \mu)}{\prod_{m=1}^t c_m(\mu)}$, write  
$$\tilde\alpha_{t}(i; \mu)=\frac{\left( \sum_{j=1}^K \left( a_{t-1}(\mu) \alpha_{t-1}(j; \mu) p_{ji}\right)^\mu\right)^{\frac{1}{\mu}}f_i(x_{t})}
{c_t(\mu)}=\frac{\left( \sum_{j=1}^K  \left(\alpha_{t-1}(j; \mu) p_{ji}\right)^\mu\right)^{\frac{1}{\mu}}f_i(x_{t})}
{(\prod_{m=1}^{t-1} c_m(\mu))c_t(\mu)},$$ which, recalling the original (unscaled) $\alpha_t(i;\mu)$  recursion, yields the result.

The $\beta$ variables are handled analogously.

The second claim is then a straightforward consequence of the first claim and the continuity 
(with respect to $\mu$, and in particular at $\mu=1$) of the power transform; for example, to establish
the result for the $\tilde \beta_t(i;\mu)$, observe that
$\prod_{m=t+1}^T c_m(\mu)\to \prod_{m=t+1}^T c_m(1)$ 
when $\mu\to 1$. 
The third claim also immediately follows from the first one and Proposition \ref{eqn:thermodynlim}, also noticing that
$\prod_{m=t+1}^T c_m(\mu)\to \prod_{m=t+1}^T c_m(\infty)$ as $\mu\to \infty$.
The fourth claim also immediately follows from the first claim  as  $v_t$ maximizes 
$\alpha_{t}(i; \mu)\beta_t(i; \mu)$ if and only if it maximizes $\tilde \alpha_{t}(i; \mu)\tilde \beta_t(i; \mu)$.
\end{proof}
In particular, we arrive at the following  characterization of the Viterbi
paths $v^T$, which is now possible to compute in practice for a wide range of models 
and parameters in contrast to the condition \eqref{eqn:viterbi-pointwise}:
\begin{corollary}\label{corollaryBrusheViterbi}
For any $t=1,2,\ldots, T$, 
$v_t=\arg\max_{i\in S}\{\tilde \alpha_t(i;\infty)\tilde \beta_t(i;\infty)\}$. 
\end{corollary}
Recall \eqref{eqn:pmaphybrids}, and thus note that the PMAP decoder also maximizes 
$\tilde \alpha_t(i;1)\tilde \beta_t(i;1)$. As a side note, consider also the following
decoder $v^T(x^T)$ that extrapolates the normalized power-transformed decoder to $\mu\to 0$, i.e.``beyond'' 
the PMAP decoding. Namely, for any $t=1,2,\ldots, T$, let 
$v_t=\arg\max_{i\in S}\{\tilde \alpha_t(i;0)\tilde \beta_t(i;0)\}$, where
for any $i\in S$, 
\begin{eqnarray}\label{eq:abpp0}
\tilde \alpha_1(i; 0)& := &\alpha_1(i)/c_1=\tilde \alpha_1(i);  \\
\tilde \alpha_{t}(i; 0)& :=& \frac{\left[\prod\limits_{j\in S_t(i)}\tilde \alpha_{t-1}(j; 0)p_{ji}\right]^{\frac{1}{K_t(i)}}f_i(x_{t})}
{\sum_{s=1}^K\left[\prod\limits_{j\in S_t(s)}\tilde \alpha_{t-1}(j; 0)p_{js}\right]^\frac{1}{K_t(s)}f_s(x_{t})},\quad
 t=2,3,\ldots T, \nonumber\\
\text{where}~S_t(i)&:=&\{j\in S:~ \tilde \alpha_{t-1}(j; 0)p_{ji}>0\}~\text{and}~K_t(i):=|S_t(i)|\nonumber \\ 
\tilde \beta_T(i; 0 )&:=&\beta_T(i)=1;\nonumber\\
\tilde \beta_{t}(i; 0)& :=&\frac{\left[\prod\limits_{j\in S^*_t(i)}
p_{ij}f_j(x_{t+1})\tilde \beta_{t+1}(j; 0)\right]^\frac{1}{K^*_t(i)}}
{\sum_{s=1}^K\left[\prod_{j\in S_{t+1}(s)}\tilde \alpha_{t}(j; 0)p_{js}\right]^\frac{1}{K_{t+1}(s)}f_s(x_{t+1})},\quad t=T-1,T-2,\ldots,1, \nonumber\\
\text{where}~S^*_t(i)&:=&\{j\in S:~ p_{ij}f_j(x_{t+1})\tilde \beta_{t+1}(j; 0)>0\}~\text{and}~K^*_t(i):=|S^*_t(i)|\nonumber.
\end{eqnarray}

\begin{corollary}\label{corollary:mu0}
Assume that $\lim_{\mu\to 0}\tilde \alpha_t(i;\mu)>0$ and
$\lim_{\mu\to 0}\tilde \beta_t(i;\mu)>0$ for all $i\in S$ and all $t=1,2,\ldots,T$.
Then $\tilde \alpha_{t}(i; 0)=\lim_{\mu\to 0}\tilde \alpha_t(i;\mu)$ and $\lim_{\mu\to 0}\tilde \beta_t(i;\mu)=\tilde \beta_{t}(i; 0)$ 
for all $i\in S$ and all $t=1,2,\ldots,T$, i.e. the decoder \eqref{eqn:hybrids} based on the transformations \eqref{eq:abpp} 
converges (upto the tie-breaking rule) to the decoder defined by \eqref{eq:abpp0} above.
\end{corollary}
\begin{proof}
This is a straightforward exercise in calculus, i.e. using continuity of the exponential function 
and invoking \citep[Proposition 1a]{Brushe1998}, with the positivity assumption making all $K_t(i)$ and $K^*_t(i)$ equal to $K$.
\end{proof}
Note also that the hybrid 
decoder \eqref{eqn:hybrids} based on the original, i.e. unnormalized variables \eqref{eq:abp},
generally does not have a limit as $\mu\to 0$. 

\subsection{Rescaling of the forward and backward variables $\alpha(\cdot; \mu)$ and $\beta(\cdot; \mu)$
defined by \eqref{eqn:brushesalphas} alters the hybrid decoder \eqref{eqn:hybrids}.} 
\label{sec:brushenotscales}
In the same manner as in \eqref{eq:abpp} above, we now normalize the 
$\alpha(\cdot; \mu)$ and $\beta(\cdot; \mu)$ variables transformed according to \eqref{eqn:brushesalphas}.
Thus, for any $\mu>0$ and for any $i\in S$, let 
\begin{eqnarray}\label{eq:brusherenormed}
\check \alpha_1(i; \mu)& := &\alpha_1(i)/\sum_{s=1}^K\alpha_1(s)=\tilde \alpha_1(i);  \\
\check \alpha_{t}(i; \mu)& :=& \frac{\log\left[\frac{1}{K}\sum_{j=1}^Ke^{\mu \check \alpha_{t-1}(j; \mu)p_{ji}}\right]f_i(x_{t})}
{\sum_{s=1}^K\log\left[\frac{1}{K}\sum_{j=1}^Ke^{\mu \check \alpha_{t-1}(j; \mu)p_{js}} \right]f_s(x_{t})},\quad t=2,3,\ldots T; \nonumber\\
\check \beta_T(i; \mu )&:=&\beta_T(i)=1,\quad  t=T-1,T-2,\ldots,1;\nonumber\\
\check \beta_{t}(i; \mu)& :=&\frac{\log\left[\frac{1}{K}\sum_{j=1}^K
 e^{\mu p_{ij}f_j(x_{t+1})\check \beta_{t+1}(j; \mu)}\right]}
{\sum_{s=1}^K\log\left[\frac{1}{K}\sum_{j=1}^Ke^{\mu \check \alpha_{t}(j; \mu)p_{js}} \right]f_s(x_{t+1})},\quad t=T-1,T-2,\ldots 1. \nonumber
\end{eqnarray}

\begin{proposition}\label{prop:brushetransformworks}
For  any $i\in S$,  we have 
\begin{enumerate}[label=\emph{\arabic*)},leftmargin=0cm,itemindent=.5cm,labelwidth=\itemindent,labelsep=0.1cm,align=left]
\item $\lim_{\mu\to 0}\check \alpha_t(i; \mu)=\tilde \alpha_t(i)$,
$\lim_{\mu\to 0}\check \beta_t(i; \mu)=\tilde \beta_t(i)$ for all $t=1,2,\ldots T$;
\item $\lim_{\mu\to \infty }\check \alpha_t(i; \mu)=\tilde \alpha_t(i;\infty)$
and $\lim_{\mu\to \infty }\check \beta_t(i; \mu)=\tilde \beta_t(i;\infty)$,
for all $t=1,2,\ldots T$.
\item The hybrid decoder \eqref{eqn:hybrids} based on the transformations \eqref{eqn:brushesalphas} and 
the hybrid decoder \eqref{eqn:hybrids} based on the transformations \eqref{eq:brusherenormed} are 
generally different, even if both use the same tie-breaking rule.
\end{enumerate}
\end{proposition}
\begin{proof} The first two claims are straightforward extensions of Lemmas 1 and 2 of 
\citep{Brushe1998}.
To see this, first  restore the previously reduced factor $\frac{1+(K-1)e^{-\mu}}{\mu}$
in both the numerator and denominator of the expressions for $\check \alpha_{t}(i; \mu)$ and
$\check \beta_{t}(i; \mu)$. Then apply induction on $t$  
(first in the forward manner for the $\alpha$ variables
and then backward for the $\beta$ variables).  For example, assume  that
$\lim_{\mu\to \infty }\check \beta_{t+1}(i; \mu)=\tilde \beta_{t+1}(i;\infty)$.
Then, as $\mu\to \infty$, 
\begin{eqnarray*}
 \frac{1+(K-1)e^{-\mu}}{\mu}\log\left[\frac{1}{K}\sum_{j=1}^K e^{\mu p_{ij}f_j(x_{t+1})\check \beta_{t+1}(j; \mu)}\right]
&\to &\max_{j\in S}\left(p_{ij}f_j(x_{t+1})\tilde \beta_{t+1}(j; \infty)\right),
\end{eqnarray*}
which is, according to claim 3 of Proposition \ref{prop:powertransformworks},
\begin{eqnarray*}
\max_{j\in S}\left(p_{ij}f_j(x_{t+1})\beta_{t+1}(j; \infty)/\prod\limits_{m=t+2}^T c_m(\infty)\right)
&=&\max_{j\in S}\left(p_{ij}f_j(x_{t+1})\beta_{t+1}(j; \infty)\right)/\prod\limits_{m=t+2}^T c_m(\infty)
\end{eqnarray*}
Next, recalling \eqref{eqn:alphamuinf}, we get that the numerator in the expression for  $\lim_{\mu\to \infty }\check \beta_t(i; \mu)$
is given by 
$\beta_{t}(i; \infty)/\prod_{m=t+2}^T c_m(\infty)$.  Observing that the denominator is given by
\begin{eqnarray*}
\lim\limits_{\mu\to\infty}
\frac{1+(K-1)e^{-\mu}}{\mu}\sum_{s=1}^K\log\left[\frac{1}{K}\sum_{j=1}^Ke^{\mu \check \alpha_{t}(j; \mu)p_{js}} \right]f_s(x_{t+1})
&=&\sum_{s=1}^K \max_{j\in S} \left(\tilde\alpha_t(j;\infty)p_{js}\right) f_s(x_{t+1}),
\end{eqnarray*}
which is just $c_{t+1}(\infty)$, finally
gives $\lim_{\mu\to \infty }\check \beta_t(i; \mu)=\beta_{t}(i; \infty)/\prod_{m=t+1}^T c_m(\infty)=\tilde\beta_t(i; \infty)$,
as required.

As a counter-example proving the last claim, consider the simple HMM
from \citep[p.1840]{matlab_stat}. 
\begin{example}\label{example:Brushe_norm}
Let $S=\{1,2\}$ and let $\{1,2,\ldots,6\}$ be the emission alphabet. Let the initial
distribution $\pi$, transition probability matrix $\mathbb{P}$, and the emission 
distributions $f_s$, $s\in S$, be defined as follows: 
\begin{eqnarray*}
\pi=\begin{pmatrix}
     2/3 \\1/3
    \end{pmatrix},
&  \mathbb{P}=\begin{pmatrix}
              0.95  &  0.05 \\
    0.1   & 0.9\end{pmatrix},~\pi^t \mathbb{P}=\pi^t,~
& 
\begin{array}{lllllll}
&1   & 2     & 3     & 4 & 5 & 6 \\
f_1(\cdot) &1/6 & 1/6  & 1/6 & 1/6 & 1/6 & 1/6 \\
f_2(\cdot)&0.1 &0.1  &0.1  &0.1 & 0.1 &0.2 
\end{array}.
\end{eqnarray*}
Suppose $x^{5}=(2,6,6,4,1)$ has been observed. Take $\mu=7$. Tables \ref{tab:Brushe} show
outputs of the original (top) and  normalized (bottom)
transformed decoders, respectively.  Clearly, the decoders return different paths.

\begin{table}[htb]
 \centering
 \begin{center}
\begin{tabular}{|l|llllrr|} \hline
t& $\alpha_t(1;\mu)$&$\beta_t(1;\mu)$ & $\alpha_t(2;\mu)$ & $\beta_t(2;\mu)$ & $\alpha_t(1;\mu)\beta_t(1;\mu)$ & 
$\alpha_t(2;\mu)\beta_t(2;\mu)$ \\        \hline \hline
 &             &             &              &            &   $10^{-6}$       &   $10^{-6}$  \\
1&      0.11111&   6.6968e-05&     0.033333 &  0.00019826&   \bf{7.4409} &  6.6088     \\
2&     0.010576&   0.00071029&    0.0091583 &  0.00085352&   7.5121      &\bf{7.8168}     \\
3&    0.0009266&    0.0083987&    0.0022209 &    0.003471&   \bf{7.7823} &  7.7088           \\
4&    9.201e-05&      0.10141&   0.00010268 &    0.058041&   \bf{9.3311} &  5.9598             \\
5&   8.1481e-06&            1&   4.8559e-06 &           1&   \bf{8.1481} &  4.8559     \\ \hline
\end{tabular} \\                                                              
 \begin{tabular}{|l|llllrr|} \hline
t& $\check\alpha_t(1;\mu)$&$\check\beta_t(1;\mu)$ & $\check\alpha_t(2;\mu)$ & $\check\beta_t(2;\mu)$ & $\check\alpha_t(1;\mu)\check\beta_t(1;\mu)$ & 
$\check\alpha_t(2;\mu)\check\beta_t(2;\mu)$ \\        \hline \hline
1&       0.76923 &     0.30879&      0.23077 &     0.97296 &     \bf{0.23753} &     0.22453          \\
2&       0.58963 &     0.55137&      0.41037 &     0.55227 &     \bf{0.32510} &     0.22664           \\
3&       0.35383 &     1.15172&      0.64617 &     0.39942 &     \bf{0.40751} &     0.25809          \\
4&       0.46886 &     1.03712&      0.53114 &     0.59356 &      \bf{0.48626} &    0.31526           \\
5&       0.60611 &           1&      0.39389 &           1 &     \bf{0.60611} &     0.39389          \\   \hline
\end{tabular}                                                               
 \end{center}
\caption{$\mu=7$. Top: Output from the original (unnormalized) transformed decoder based on the transformations \eqref{eqn:brushesalphas};
the optimal path is $(1,2,1,1,1)$.  Bottom: Output from the normalized transformed decoder based on the transformations \eqref{eq:brusherenormed};
the optimal path is $(1,1,1,1,1)$.} \label{tab:Brushe}
\end{table}
\end{example}
\end{proof}
Note that unlike the normalized hybrid decoder based on the power-transform, this normalized hybrid
decoder generally does not satisfy the first claim of Proposition \ref{prop:powertransformworks}.
(Indeed, satisfying these conditions would contradict the third claim of the latter
Proposition \ref{prop:brushetransformworks}.)


We have also experimented with these normalized hybrid decoders
using a subset of real data (and a realistic HMM with $K=6$ states) from our experimental 
Section \ref{sec:example} and can indeed
confirm convergence of the hybrid decoder based \eqref{eq:brusherenormed} to the PMAP decoder 
with $\mu=0.001$ and to the Viterbi decoder with $\mu=10000$ 
for sequences of length $T=100$. Naturally, the above range of $\mu$ values would generally need to increase 
significantly with $T$.

Below, we summarize our views on the idea of purely algorithmic hybridization of MAP and PMAP.
\begin{enumerate}[leftmargin=0cm,itemindent=.5cm,labelwidth=\itemindent,labelsep=0.1cm,align=left]
\item The method presented in \citep{Brushe1998} need not work, i.e. 
can fail to converge to the Viterbi path, when 
the Viterbi path is not unique, cf. Example \ref{example:BrusheMAP} above.
\item Since the method depends on the transformation used,
more work may be needed to understand which (if any) particular 
transformation/interpolation could be suitable for a specific application;
the choice of \eqref{eqn:brushesalphas} made in \citep{Brushe1998} seems to be rather arbitrary.
\item Also, the choice of \eqref{eqn:brushesalphas} does not work in practice 
except with trivially short sequences; the underlying transformations
can be normalized but this alters the decoder (Proposition \ref{prop:brushetransformworks}).
The choice of \eqref{eq:abp} is better in several aspects, mainly for its
rescaling property (subsection \ref{sec:powerscales}), i.e. the decoder is  indeed
ready to work in practice. 
\item 
Algorithmically defined estimators are notoriously hard to analyze
analytically \citep[pp. 25, 129-131]{Winkler2003}. Indeed, it is not 
clear if the general members of the above
interpolating families (regardless of the transformation used) satisfy any 
explicit optimality criteria; 
this makes it difficult to interpret such decoders.
This may also discourage the use of such decoders in
more complex inference cycles (i.e. when any genuine model parameters are
to be estimated as well, e.g. Viterbi Training \citep{koski, AVT4, AVTproof}).
\item The point-wise hybridization scheme \eqref{eqn:hybrids}
can itself be altered. For example, other recursion schemes (for
  example, cf. \citep[pp. 272-273]{koski} for Derin's formula) can also be
  applied for this purpose.  However, now more than a decade after the appearance of \citep{Brushe1998}, we are not aware
of any practical application of the idea of algorithmic hybridization of the MAP-PMAP inferences.
Besides the plausible reasons already discussed in Subsection \ref{sec:pathdiffers}
(that actually extend to any type of MAP-PMAP hybridization),
we suspect that this particular type of hybridization has not seen application
mostly because of the lack of interpretation of its solutions and the aforementioned
flaws in the original work \citep{Brushe1998} introducing the idea.
\end{enumerate}

\section{Experiments}\label{sec:example}
We illustrate the performance of the Viterbi, PMAP, and some of the other known and new decoders 
on the task
of predicting protein secondary structure in single amino-acid sequences. 
For this illustration purpose, our decoders are based entirely on the ordinary first order HMM. In particular,
when decoding an amino-acid sequence, they do not use cues from decoded homologous sequences
(other than by allowing homologous sequences to be part of the training set for estimation 
of the model parameters).
Certainly, successful predictors in practice are significantly more elaborate, in particular, 
do exploit intensively information from decoded homologs, and also
include interactions at ranges considerably longer than that of the first order HMM 
\citep{protein_secondary_struct_predict_2006_borodovsky}.  However, 
our current goal is not to attain the absolute record on the task (which, not so long ago, was 
reported to be about 70\% \citep{protein_secondary_struct_predict_2006_borodovsky}), but to merely emphasize
the following two points.  First, the difference in performance between the Viterbi and PMAP decoders can be 
appreciable in practice already with the ordinary first order HMMs having as few as six hidden states. Secondly,
using the new family of decoders (i.e. solutions to the generalized risk minimization 
\eqref{gen-problem} and  \eqref{gen-L1-pen})
gives a potentially useful additional flexibility by exercising
trade-offs between  principled performance measures (subsection \ref{sec:moremotivation}). 

Our data are a non-redundant subset 
of the {\it Protein Data Bank} \citep{Berman01012000}.  Specifically, the secondary
structural elements have been found from their atomic coordinates using  \texttt{SSENVID} \citep{SSENVID}
and the resulting data can be freely downloaded from 
\href{http://personal.rhul.ac.uk/utah/113/VA/env_seqssnr.txt}
{http://personal.rhul.ac.uk/utah/113/VA/env$\_$seqssnr.txt}.
The data contain $N=25713$ realizations $(x^{T_n}(n),y^{T_n}(n))$, $n=1,2,\ldots,N$,  
with three original hidden states $\{a,b,c\}$, representing
$\alpha-$helix, $\beta-$strand, and coil, respectively. The average length
$\bar T$ of a realization is 167 positions. The observations $x^{T_n}(n)$
come from a 20 symbol
emission alphabet of amino-acids $$\mathcal{X}=\{A,C,D,E,F,G,H,I,K,L,M,N,P,Q,R,S,T,V,W,Y\}.$$ 
We further distinguish
four subclasses of the $\alpha$-helix class $a$. The definition and enumeration of the
final six classes are as follows: Class one consists of the short, up to seven $a$ long, $\alpha$-helices.
Classes two and three consist of the $\beta$-strands (any number of $b$'s) and coil sequences (any number
of $c$'s), respectively. Classes four, five, and six derive from the $a$'s that 
comprise an $\alpha$-helix of length at least eight, thereafter referred to as long. Specifically, class four 
is the so-called $N$-end, which is the first four $a$'s of a long $\alpha$-helix.
Similarly, class six is the so called $C$-end, which is the last four $a$'s of a long $\alpha$-helix.
Any $a$'s in the middle of a long $\alpha$-helix are class five. Refining the original classification 
has been known to improve prediction of protein secondary structure \citep{Salamov199511}. For simplicity,
here we only sub-divide the $\alpha$-helix class (whereas \citep{Salamov199511} go further) 
given the limited goals of these experiments.    

The (maximum likelihood estimates of the) transition and emission distribution matrices as well 
as the vector of the initial probabilities computed from 
{\it all of the realizations} are given in Appendix \ref{sec:appendix1}.  

The following experiments emulate a typical practical situation
by re-estimating these parameters from $N-1$ sequences and
using the re-estimated values to decode a remaining sequence. 
We repeat the process $N$ times in the leave-one(sequence)-out fashion. 
We do not  impose stationarity in these
experiments as we did not have any prior evidence of stationarity. 
Indeed,  the (estimated) initial distribution 
$\hat \pi$ appears to be very different from the stationary one ($\hat \pi_{inv}$, see Appendix \ref{sec:appendix1}) 
and many sequences in the dataset are quite short. 

Figure \ref{fig:Case877} displays case 877, which is 149 positions long and is split into two pieces 
at position $t=75$ (shown in both images).  The ground truth is shown by the top (0) row.
This case is typical in several senses.  First, in this case the PMAP decoder (row 2) shows the median gain in accuracy (of
about 11\%) over the Viterbi decoder (row 1); see subsequent subsections for a discussion of
performance measures.  Secondly, the PMAP, i.e. optimal accuracy output, is inadmissible in this case, which 
is evident from, e.g. the isolated state five (yellow) island (transitions between states three and
five are forbidden).  Rows 3 through 5 are outputs from 
the PVD, Constrained PMAP, and Rabiner $k=2$ decoders, respectively. It is typical of the PVD and Constrained PMAP
decoders to tie. Outputs from other members of the generalized posterior Viterbi 
\eqref{gen-problem} and PMAP \eqref{L1-gen-problem} hybrid decoders are given in rows  6-18, and 19-31, respectively.
Table \ref{tab:case877} gives a detailed legend for interpreting the outputs.   The monotonicity 
of the generalized PVD hybrid inference (Corollary \ref{corollary:optimization}, part 1, and Corollary \ref{corollary}, inequalities
\eqref{eqn:increase} and \eqref{eqn:increase2}) is illustrated by following 
the posterior risk columns $\bar R_\infty$ and $\bar R_1$   across
rows 2 (PMAP), then 6 through 17, and finally 1 (Viterbi); 
PVD (row 3) is attained when $\alpha\approx 0$ (rows 6-9) 
and here is also indistinguishable from Constrained PMAP (row 4).  
The monotonicity 
of the generalized PMAP hybrid inference (Corollary \ref{corollary:optimization}, part 3) is illustrated by following 
the $\bar R_\infty$ and $R_1$ columns across
rows 2 (PMAP), then 19 through 30, and finally 1 (Viterbi); 
Constrained PMAP (row 4) is attained when $\alpha\approx 0$ (rows 19-20) 
and here is also indistinguishable from PVD (row 3).  

Note how the decoder in row 16 (Figure \ref{fig:Case877}) differs from its neighbors, 
specifically, how it completely misses the terminal activity, which is to a variable
extent captured by both  ``more accurate'' (row 15) and  ``more probable'' (row 17) neighbors of this decoder.
In practice, application specific performance measures would likely to be of more interest than 
the simple measures used here for illustration of the ideas (see also Section \ref{sec:discussion}).

Rows 18 and 31 are the ``data blind'' {\it maximum a priori}  and {\it pointwise maximum a priori} 
decodings, which are members of both the generalized hybrid families. 
These decodings tie not only in this  but in all the other cases as well;  see the structure of
the (overall) transition matrix $\mathbb{P}$ in Appendix \ref{sec:appendix1} to understand the overwhelming
dominance of class 3 (``coil'') in the absence of the amino-acid information.

\begin{figure}[ht!]
 \centering
  \includegraphics[trim = 0 -19mm 0 0, clip, height=0.351\textheight]{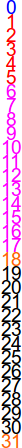}\includegraphics[width=\textwidth]{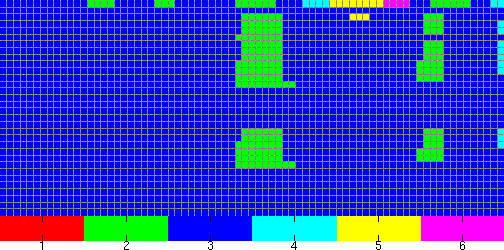}\includegraphics[trim = 0 -19mm 0 0, clip, height=0.351\textheight]{enumeration.png}\\
\includegraphics[height=0.3\textheight]{enumeration.png}\includegraphics[trim = 0 0 0 0.047\textheight, clip, width=\textwidth]{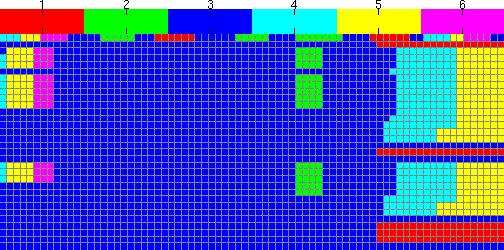}\includegraphics[height=0.3\textheight]{enumeration.png}
 \caption{Performance of some of the well-known and new decoders on Case 877. For legend, see Table \ref{tab:case877}.}
 \label{fig:Case877}
\end{figure}

\begin{table}
\hspace*{-2cm}\begin{tabular}{|l||l|l|l|l|l|l|l|l|l|l|l|}\hline 
R & \multicolumn{7}{|c|}{Output $\hat y^{149}(877)$}                    & Empir.  &  \multicolumn{3}{|c|}{posterior}     \\
 \cline{2-8}
o    & \multicolumn{2}{|c|}{Generalized}&Alias& $C_1$        & $C_2$& $C_3$& $C_4$&error & \multicolumn{3}{|c|}{ risks}    \\ \cline{2-3}\cline{10-12}
w    & PVD &  PMAP &  & & & & &rate(\%) &$\overline{R}_\infty$ &$\overline{R}_1$ & $R_1$(\%)\\     \hline\hline
0   & \multicolumn{7}{|c|}{Truth}                                                       &  0                           & 0.4907  &  1.1311 & 59.2173  \\\hline
1   &+&+&Viterbi& $1-\alpha=\frac{1}{k}=0$       & $\alpha=1$       &  0   &  0         & \textcolor{magenta}{56.3758} & \textcolor{blue}{0.1604} &  \textcolor{red}{0.8296} & \textcolor{red}{50.3368}          \\ \hline
2   &+&+&PMAP   & $1-\alpha=\frac{1}{k}=1$       & $\alpha=0$       &  0   &  0         & \textcolor{blue}{45.6376}    & \textcolor{red}{$\infty$}&  \textcolor{blue}{0.6905} &\textcolor{blue}{46.7752}           \\ \hline
3   &+& &PVD   & $1-\alpha=\frac{1}{k}\approx 1$& $\alpha\approx 0$&  0   &  0          & 46.9799                      &  \textcolor{magenta}{0.2486} & \textcolor{cyan}{0.6961}  &\textcolor{cyan}{46.9188}            \\
    &+& &      & $\approx 1$                    &0                 &  0   & $\approx 0$ &                              &          &         &            \\\hline
4   & & +&Constr.& $\approx 1$                    &$\approx 0$       &0     &  0        & 46.9799                      &  \textcolor{magenta}{0.2468} & \textcolor{cyan}{0.6961}  &\textcolor{cyan}{46.9188}             \\
    & &  &  PMAP&                                &                  &      &            &                              &          &         &                  \\ \hline
5   &+&  &Rabiner   &  n/a               & n/a                &n/a    & n/a             & 53.0201                      &   0.1823& 0.7118  &47.4429              \\
    & &  & $k=2$    &      &                  &      &                                  &                              &          &         &           \\\hline
6   &+&  &   & $1-\alpha=\frac{1}{k}=0.999$ & $\alpha=0.001 $&  0   &  0   &    46.9799                                &  \textcolor{magenta}{0.2486} & \textcolor{cyan}{0.6961}  &\textcolor{cyan}{46.9188}              \\    
7   &+&  &   & $1-\alpha=\frac{1}{k}=0.995$ & $\alpha=0.005 $&  0   &  0   &    46.9799                                &  \textcolor{magenta}{0.2486} & \textcolor{cyan}{0.6961}  &\textcolor{cyan}{46.9188}             \\    
8   &+&  &   & $1-\alpha=\frac{1}{k}=0.990$ & $\alpha=0.010 $&  0   &  0   &    46.9799                                &  \textcolor{magenta}{0.2486} & \textcolor{cyan}{0.6961}  &\textcolor{cyan}{46.9188}              \\    
9   &+&  &   & $1-\alpha=\frac{1}{k}=0.950$ & $\alpha=0.050 $&  0   &  0   &    46.9799                                &  0.2352 & 0.6964  &46.9322             \\    
10  &+&  &   & $1-\alpha=\frac{1}{k}=0.900$ & $\alpha=0.100 $&  0   &  0   &    46.9799                                &  0.2352 & 0.6964  &46.9322             \\    
11  &+&  &   & $1-\alpha=\frac{1}{k}=0.(6)$ & $\alpha=0.(3) $&  0   &  0   &    53.0201                                &  0.1897 & 0.7065  &47.2499             \\    
12  &+&  &   & $1-\alpha=\frac{1}{k}=0.500$ & $\alpha=0.500 $&  0   &  0   &    54.3624                                &  0.1791 & 0.7142  &47.5372             \\    
13  &+&  &   & $1-\alpha=\frac{1}{k}=0.(3)$ & $\alpha=0.(6) $&  0   &  0   &    \textcolor{magenta}{56.3758}           &  0.1700 & 0.7277  &48.0356            \\    
14  &+&  &   & $1-\alpha=\frac{1}{k}=0.250$ & $\alpha=0.750 $&  0   &  0   & \textcolor{red}{57.0470}                  &  0.1680 & 0.7331  &48.1738            \\    
15  &+&  &   & $1-\alpha=\frac{1}{k}=0.200$ & $\alpha=0.800 $&  0   &  0   & \textcolor{red}{57.0470}                  &  0.1680 & 0.7331  &48.1738            \\    
16  &+&  &   & $1-\alpha=\frac{1}{k}=0.100$ & $\alpha=0.900 $&  0   &  0   &\textcolor{red}{57.0470}                   &  \textcolor{cyan}{0.1645} & \textcolor{magenta}{0.7637}  &\textcolor{magenta}{48.9620}           \\    
17  &+&  &   & $1-\alpha=\frac{1}{k}=0.010$ & $\alpha=0.990 $&  0   &  0   &    \textcolor{magenta}{56.3758}           &  \textcolor{blue}{0.1604} & \textcolor{red}{0.8296}  &\textcolor{red}{50.3368}           \\ 
\hline   
18  &+&+& MA-   &      0                       &      0         &  0    & 1& \textcolor{red}{57.0470}                  &  \textcolor{cyan}{0.1645} & \textcolor{magenta}{0.7637}  &\textcolor{magenta}{48.9620}           \\
  &&& Prior   &                          &             &     &    &                                                    &          &         &             \\ \hline   
19  & &+&   & 0.999                        &      0.001     &  0   &  0   &    46.9799                                 & \textcolor{magenta}{0.2486}  & \textcolor{cyan}{0.6961}  &\textcolor{cyan}{46.9188}         \\            
20  & &+&   & 0.995                        &      0.005     &  0   &  0   &    46.9799                                 &  \textcolor{magenta}{0.2486} & \textcolor{cyan}{0.6961}  &\textcolor{cyan}{46.9188}            \\            
21  & &+&   & 0.990                        &      0.010     &  0   &  0   &    \textcolor{cyan}{46.3087}               &  0.2417 & 0.6962  &46.9245          \\            
22  & &+&   & 0.950                        &      0.050     &  0   &  0   &    50.3356                                 &  0.2009 & 0.7021  &47.0773              \\            
23  & &+&   & 0.900                        &      0.100     &  0   &  0   &    50.3356                                 &  0.2009 & 0.7021  &47.0773             \\            
24  & &+&   & 0.(6)                        &      0.(3)      &  0   &  0   &   54.3624                                 &  0.1776 & 0.7165  &47.6139             \\            
25  & &+&   & 0.500                        &      0.500     &  0   &  0   &    \textcolor{red}{57.0470}                &  0.1680 & 0.7331  &48.1738             \\            
26  & &+&   & 0.(3)                        &      0.(6)      &  0   &  0   &   \textcolor{red}{57.0470}                &  0.1680 & 0.7331  &48.1738            \\            
27  & &+&   & 0.250                        &      0.750     &  0   &  0   &    \textcolor{red}{57.0470}                &  \textcolor{cyan}{0.1645} & \textcolor{magenta}{0.7637}  &\textcolor{magenta}{48.9620}           \\            
28  & &+&   & 0.200                        &      0.800     &  0   &  0   &    \textcolor{magenta}{56.3758}            &  \textcolor{blue}{0.1604} & \textcolor{red}{0.8296}  &\textcolor{red}{50.3368}         \\            
29  & &+&   & 0.100                        &      0.900     &  0   &  0   &    \textcolor{magenta}{56.3758}            &  \textcolor{blue}{0.1604} & \textcolor{red}{0.8296}  &\textcolor{red}{50.3368}            \\            
30  & &+&   & 0.010                        &      0.990     &  0   &  0   &    \textcolor{magenta}{56.3758}            &  \textcolor{blue}{0.1604} & \textcolor{red}{0.8296}  &\textcolor{red}{50.3368}             \\ \hline           
31  &+&+&PMA-&   0                          &       0        &  1    & 0  &    \textcolor{red}{57.0470}                &  \textcolor{cyan}{0.1645} & \textcolor{magenta}{0.7637}  &\textcolor{magenta}{48.9620}            \\ 
  &&&   Prior&                             &             &     &  &                                                    &          &         &            \\   \hline         
\end{tabular}                                                                                                    
\caption{Case 877. Performance of the well-known and some of
the new decoders.  Worst, second worst, best and second best entries in each 
category are highlighted
in red, magenta, blue and cyan respectively. \label{tab:case877}}
\end{table}

Additionally, instead of using the
actual data, we simulate synthetic datasets each of which having the same number  $N=25713$
of sequences,  in the following way. 
Let $\{\hat \pi_{sn}\}_{s\in S}$, $\widehat{\mathbb{P}}_n$, $\{\widehat{P}_{sn}, s\in S\}$ be the estimates
of the  HMM parameters (initial, transition, and emission distributions, respectively) 
obtained from $(x^{T_n}(n),y^{T_n}(n))$, the $n$-th actual realization. Then the $n$-th 
simulated realization is a sample 
of length $T_n$ from the (first order homogeneous) HMM with these parameters (note
that the initial distributions $\{\hat \pi_{sn}\}_{s\in S}$ are necessarily degenerate).  

\subsection{Performance measures and their estimation}
Given a classification method $g$, our principal performance measures are the $R_1(g)$ 
risk $\mathbb{E}R_1(g(X^T)|X^T)$ 
(see \eqref{eqn:R1}) and the $\R_\infty$ risk
$\mathbb{E}\R_\infty(g(X^T)|X^T)$, \eqref{loglikerisk} (note that 
it is not practical to operate with $R_\infty$ \eqref{eqn:Rinf} since it is virtually 1
for reasonably long realizations).  For the $\R_\infty$ results, see Subsection 
\ref{sec:pmapvsViterbibyprob}.

The $R_1$ risk is simply the point-wise error rate $\frac{1}{T}\sum_{t=1}^TP(\hat Y_t\neq Y_t)$,
where $\hat Y$ stands for $g(X^T)$. This assumes $T$ to be non-random; more generally, 
$T$ is  random and the $R_1$ risk is then given by
$\mathbb{E}_T\left[\frac{1}{T}\sum_{t=1}^TP\left(\hat Y_t\neq Y_t|T\right)\right]$. We refer to $1-R_1$ 
as {\it accuracy} when comparing our decoders (e.g. Subsection 
\ref{sec:pmapvsViterbi} below).  Note that given a decoder $g$, $R_1(g)$, 
is simply a parameter of the underlying population of all $(T,x^T,y^T)$ that could
potentially be observed.
If the current hidden Markov model were not too crude for this population, we would compute such risks 
if not analytically, then at least by using Monte-Carlo simulations, for any $g$ of interest.
In reality, however, we need to estimate them. The situation is further complicated by the
fact that the classification method $g$ is specified only up to the model parameters, which are  unknown 
and are estimated from the data. 

All in all, we use the usual cross-validation (CV) estimation. Specifically, to decode
$x^{T_n}(n)$, $g$ uses the estimates of the parameters obtained from the remaining
$N-1$ sequences.  Thus, if $g$ outputs $\hat y^{T_n}$, then we take the empirical point-wise error rate
\begin{eqnarray}\label{eq:en}
 \hat e_n=\frac{1}{T_n} \sum_{t=1}^{T_n} \mathbb{I}_{\{\hat y_t\neq y_t(n)\}}
\end{eqnarray}
to be an estimate of $R_1(g)$. Clearly, if $g$ used the same fixed parameters as 
used in the definition of $R_1(g)$, then 
$\mathbb{E}[\hat e_n]=R_1(g)$, i.e. $\hat e_n$ would be unbiased for $R_1(g)$, and
so would be the average 
\begin{eqnarray}\label{eq:ecv}
\hat e_{CV}=\frac{1}{N} \sum\limits_{n=1}^N \hat e_n.
\end{eqnarray}
Obviously, in reality $\hat e_{CV}$ is likely to be biased. For this reason we also look
at the model-based CV estimate of $R_1$ given by
\begin{eqnarray}\label{eqn:cvestimate}
\hat R_1=\frac{1}{N} \sum\limits_{n=1}^N R_1(\hat y^{T_n}|x^{T_n}(n)).
\end{eqnarray}
Computation of $R_1(\cdot |x^T)$ indeed relies on the model being correct, hence
$\hat R_1$ is also likely to be biased. 
We also report approximate 95\% confidence intervals which are based on 
the usual normal approximation disregarding, among others, any effects of the 
variability in the realization length $T$. 

If the variation in $T$ were merely an observational artifact, then instead of
the above cross-validation averages \eqref{eqn:cvestimate}, we would focus on the total error rate for 
the entire dataset given by \eqref{eqn:overall} below.
\begin{eqnarray}\label{eqn:overall}
 \hat e&=& \frac{\sum\limits_{n=1}^{N}\sum\limits_{t=1}^{T_n} \mathbb{I}_{\{\hat y_t(n)\neq y_t(n)\}}}
{\sum\limits_{n=1}^{N}T_n}=\sum\limits_{n=1}^{N}w(n)\hat e_n,
\quad\text{where}~w(n)=\frac{T_n}{\sum\limits_{n=1}^NT_n}.
\end{eqnarray}
However, to obtain sensible confidence intervals in this setting, we need to estimate the 
variance of $\hat e$.  Bootstrapping is a possibility, but we instead simulate
several (specifically, 15) synthetic datasets in the same fashion as described above,
i.e. re-sampling individual realizations $(x^{T_n}(n),y^{T_n}(n))$
from the HMM with parameters
$\{\hat \pi_{sn}\}_{s\in S}$, $\widehat{\mathbb{P}}_n$, $\{\widehat{P}_{sn}, s\in S\}$, $n=1,2,\ldots,N$. 
We then use the $t$-distribution (on 14
degrees of freedom) to obtain the 95\% margins of error. 

\subsection{Comparison of the accuracy of the Viterbi and PMAP decoders}\label{sec:pmapvsViterbi}
 A histogram of the difference $\hat e(\mathrm{Viterbi},n)-\hat e(\mathrm{PMAP},n)$ between the empirical 
errors \eqref{eq:en} of the Viterbi  and PMAP decoders
is plotted in black in Figure~\ref{fig:histErrVitLessErrPMAP}.
We also observe that in 85.35\% of the CV
rounds the PMAP decoder is more accurate, and in 10.67\%  -- less accurate, than the Viterbi decoder 
(in 3.98\% of the cases the two methods show the same accuracy).   
\begin{figure}[h!]
\begin{center}
 \includegraphics[width=1.1\textwidth]{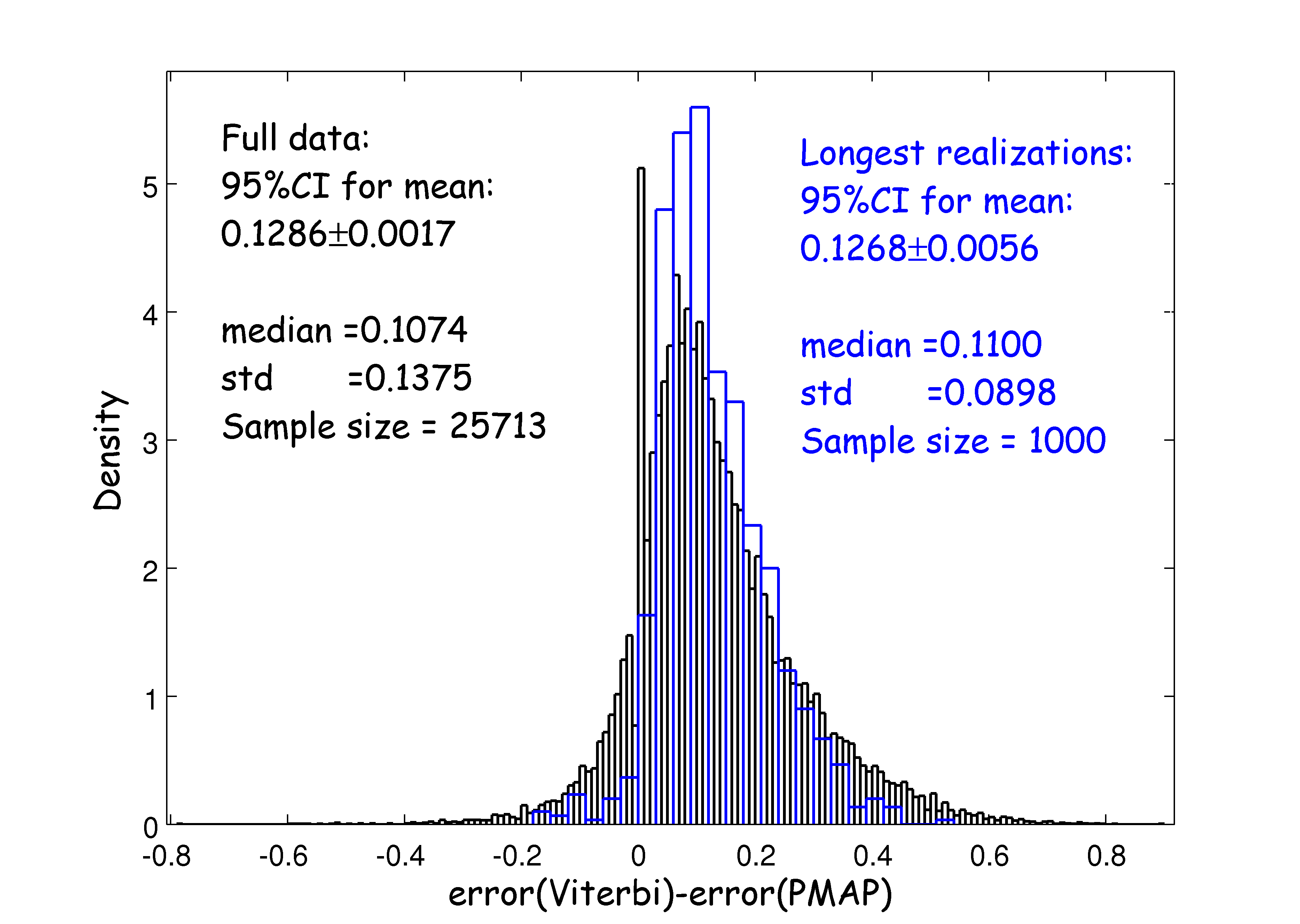}
\end{center}
\caption{A histogram of the difference between the empirical error rates $\hat e(\mathrm{Viterbi},n)
-\hat e(\mathrm{PMAP},n)$ obtained from the full data (black) and the subsample consisting of 
1000 longest realizations (blue). Although in 3.98\% of the entire dataset the two methods show 
the same accuracy (spike at 0), overall their performance appears to be notably different.  
The Viterbi decoder is more accurate in 10.67\% of all the cases, and the PMAP
decoder is more accurate in 85.35\% of all the cases. The extreme differences 
($\min=-78.69\%$,$\max=89.74\%$) tend to be observed on short sequences (136 positions and shorter), 
but the subsample of the 1000 longest realizations (450-2060 positions) 
confirms the effect of the PMAP decoder being more accurate. In particular, on the 
longest sequences, the PMAP decoder can be $52.62\%$ more accurate than the Viterbi
decoder, whereas the latter can be at most $16.75\%$ more accurate than the former.   
}\label{fig:histErrVitLessErrPMAP}
\end{figure} 
To examine sensitivity of these results to the variation in the realization length, 
we superimpose in the same Figure ~\ref{fig:histErrVitLessErrPMAP} a histogram of the 
subsample consisting of the 1000 longest realizations. Although the subsample spans a
less extreme range ($-16.75\%,52.62\%$) than  that of the entire sample,
the locations of the two histograms are very similar, suggesting the average {\it gain of accuracy of 
about 12\% when replacing the Viterbi decoder by the PMAP one}.  

We also compare the performance of 
the Viterbi and PMAP decoders by 
examining their $R_1(\cdot |x^{T(n)}(n))$ risks \eqref{eqn:R1}, see Figure~\ref{fig:histErrVitAndPMAP}.
Note that the difference $\hat R_1(\mathrm{Viterbi}) -\hat R_1(\mathrm{PMAP})$ is 9\% on average,
and is largely unchanged (apart from a minor increase) when
recomputed on the subsample of the 1000 longest realizations (450-2060 positions).
\begin{figure}[h!]
 \centering
 \includegraphics[width=1.1\textwidth,keepaspectratio=true]{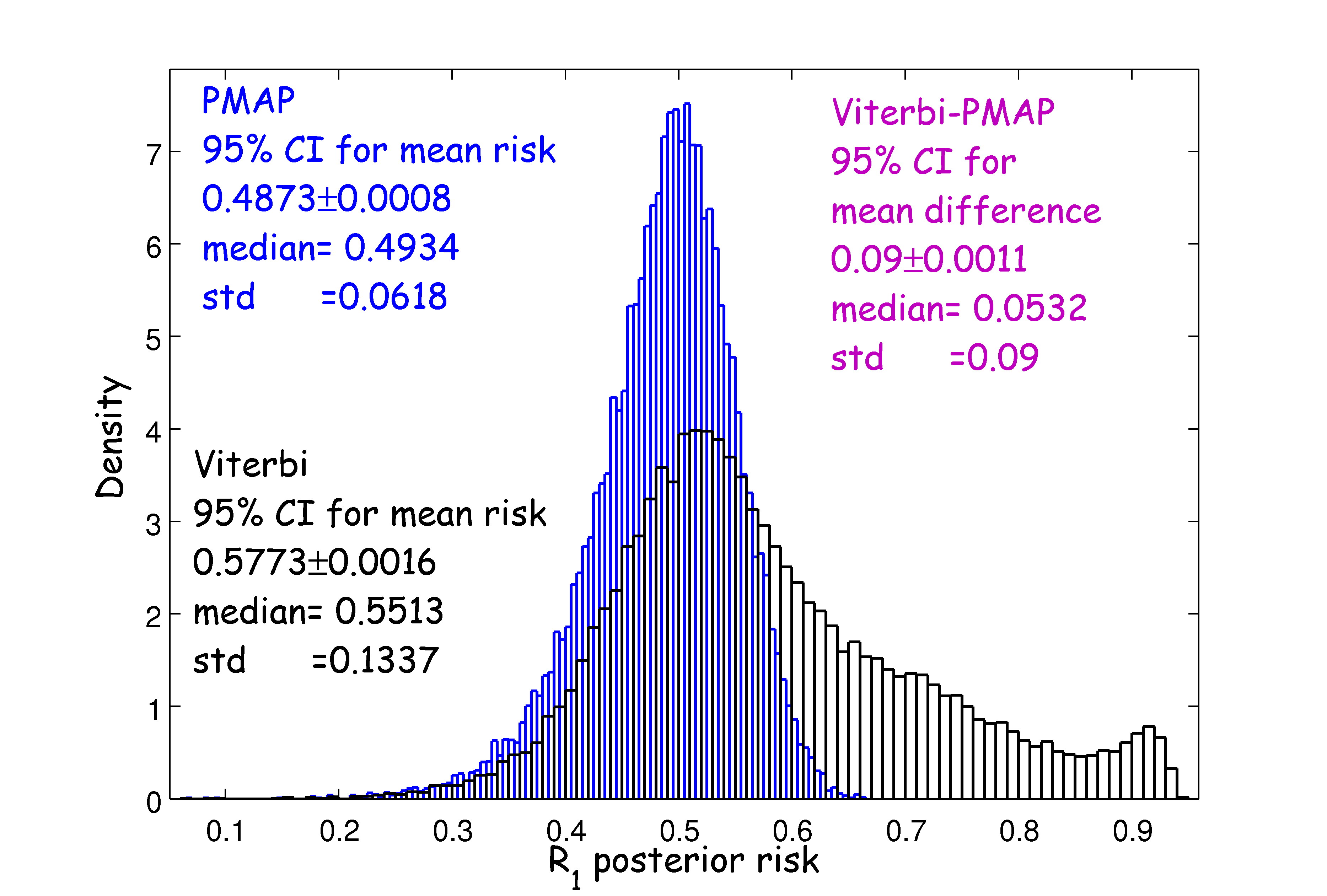}
\caption{Histograms of the $R_1(\hat y^{T(n)}|x^{T(n)}(n))$ risk
of the Viterbi (black) and PMAP (blue) decoders.  Since the first order
homogeneous HMM is only an approximation to the data source, the cross-validation
averages of 48.73\% (PMAP), 57.73\% (Viterbi), and 9\% (PMAP's gain over Viterbi) 
are likely to be biased as estimates of the respective pointwise error rates; 
see also Figure \ref{fig:histErrVitLessErrPMAP} for a model independent analysis.} \label{fig:histErrVitAndPMAP}
\end{figure}

Finally, $\hat e$ \eqref{eqn:overall} is $59.68\%$ ($\pm 0.068\%$) and $46.10\%$ ($\pm 0.047\%$) 
for the Viterbi and PMAP decoders, respectively,
and the PMAP comes out $13.58\%\pm 0.0463\%$ more accurate than the Viterbi decoder.
The above confidence intervals are, however, likely to be deflated since
the model-based simulations show little variation of $\hat e(\mathrm{Viterbi})$, 
$\hat e(\mathrm{PMAP})$, or the differences $\hat e(\mathrm{Viterbi})-
\hat e(\mathrm{PMAP})$. In fact, based on the 15 model-based simulations,  
the PMAP is only $7.46\%\pm 0.0463\%$ more accurate than the Viterbi decoder,
with the individual error rates of  $47.49\%\pm 0.047\%$ and 
$54.95\%\pm 0.068\%$  for the former and the latter, respectively. 
Finally, replacing the empirical error rates by the $R_1(\cdot|x^{T})$
risks (which are now computed exactly since the simulations are model-based), 
we obtain the difference of $8.55\%\pm 0.0213\%$.

{\it In summary, the PMAP decoder can be notably more accurate than the Viterbi decoder in scenarios with 
as few as six hidden states.}

\subsection{The $\R_\infty$ risk  of the Viterbi, PMAP and other 
decoders}\label{sec:pmapvsViterbibyprob}
Next we look at the log-posterior probability 
rates $\log(P(\hat y^T|x^T))/T=-\R_\infty(\hat y^T|x^T)$
of the PMAP, Viterbi and other decoders. In $74.14\%$ of the cases, the PMAP decoder 
returns an inadmissible path, i.e. $\log(P(\hat y^T|x^T))/T=-\infty$. 
To avoid dealing with an infinite range, we switch to the exponential scale. Thus,
Figure \ref{fig:posteriors} below displays histograms of the geometric rates $\sqrt[T]{P(\hat y^T|x^T)}$.
\begin{figure}[h!]
 \centering
 \includegraphics[width=1.0\textwidth]{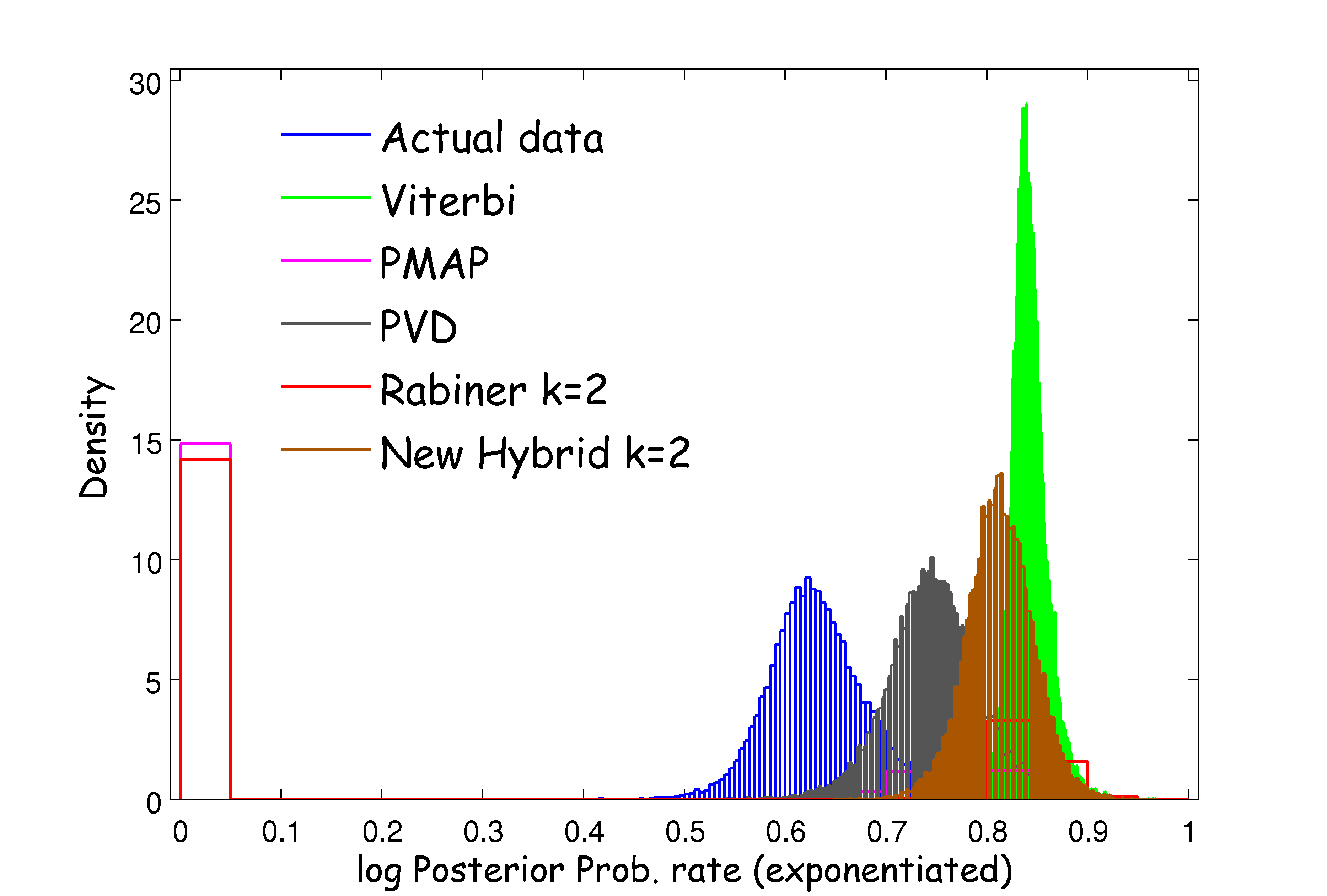}
 \caption{Histograms of the (geometric rates of the) posterior probabilities of the already known 
as well as some new decoders. The Constrained PMAP decoder is omitted as it is virtually
indistinguishable from PVD (gray). The PMAP and Rabiner 2-block (see Subsection \ref{sec:pathdiffers})  decoders 
return inadmissible paths in $74.14\%$ (magenta bar at 0) 
and $70.94\%$ (red bar at 0) of the cases, respectively. Just like the PVD and Constrained PMAP decoders, 
the new hybrid 2-block posterior-Viterbi decoder \eqref{eq:khybrids} (brown histogram) is guaranteed to produce
admissible paths, which would generally have a higher probability than those of the PVD and 
Constrained PMAP paths (at most that of the Viterbi path). } \label{fig:posteriors}
\end{figure}

The Rabiner 2-block decoder $\hat y(2)$ returns inadmissible paths in $70.94\%$ of the cases.
In $7.32\%$ of the cases this decoder gives an inadmissible path even when the PMAP path 
(for the same realization) is admissible.  This illustrates the
violation of monotonicity (see Subsection \ref{sec:pathdiffers}) in the path 
(posterior) probability when using Rabiner's suggestion to base decoding on the loss \eqref{eq:dnew}.  

We also note that the posterior probabilities of the actual hidden paths (blue histogram)  
are notably lower than those of the admissible decodings, especially the Viterbi outputs. 
However, these effects are not out of line with model-based simulations. 

\subsection{Summary of the experimental results}\label{sec:expsummary}

Figure \ref{fig:performance}
compares performance of these  and other decoders as measured by the averaged error rate
and the averaged (exponentiated) path log-posterior rate 
\begin{equation}\label{eqn:cvgeorate}
\widehat{\sqrt[T]{P(\hat y^T|x^T)}}_{CV}=\frac{1}{N}\sum_{n=1}^N\sqrt[T_{n}]{P(\hat y^{T_n}|x^{T_n}(n))}.  
\end{equation}
Recall that the family of k-block posterior-Viterbi decoders is naturally parameterized by the
block length $k$ ($k=1$ and $k\to\infty$ giving the PMAP and Viterbi decoders, 
respectively).  We have also 
included the continuous re-parameterization \eqref{k-blockrec22} 
via $k=\frac{1}{1-\alpha}$ (and $\alpha=\frac{k-1}{k}$) which embeds
these special cases into  the generalized PVD Problem \eqref{gen-problem}
via $C_1=\alpha$, $C_2=1-\alpha$, $C_3=C_4=0$.
   
Figure \ref{fig:performance} displays performance of members of the generalized PVD 
(Problem \eqref{gen-problem}) and PMAP (Problem \eqref{L1-gen-problem})
families with $C_1=\alpha$, $C_2=1-\alpha$, $C_3=C_4=0$ for a subset of values of $\alpha$ 
used in Figure \ref{fig:Case877} and Table \ref{tab:case877}. 
The point-wise maximum a priori ($C_1=C_2=C_4=0$, $C_3=1$) 
and the prior-based Viterbi ($C_1=C_2=C_3=0$, $C_4=1$) decoders are also included, showing identical performance on these data. 
Remarkably, the accuracy of
these ``data-blind'' decoders on average is still higher than that of the Viterbi (MAP) decoder.  
 
\begin{figure}[h!]
 \centering
 \includegraphics[width=1 \textwidth]{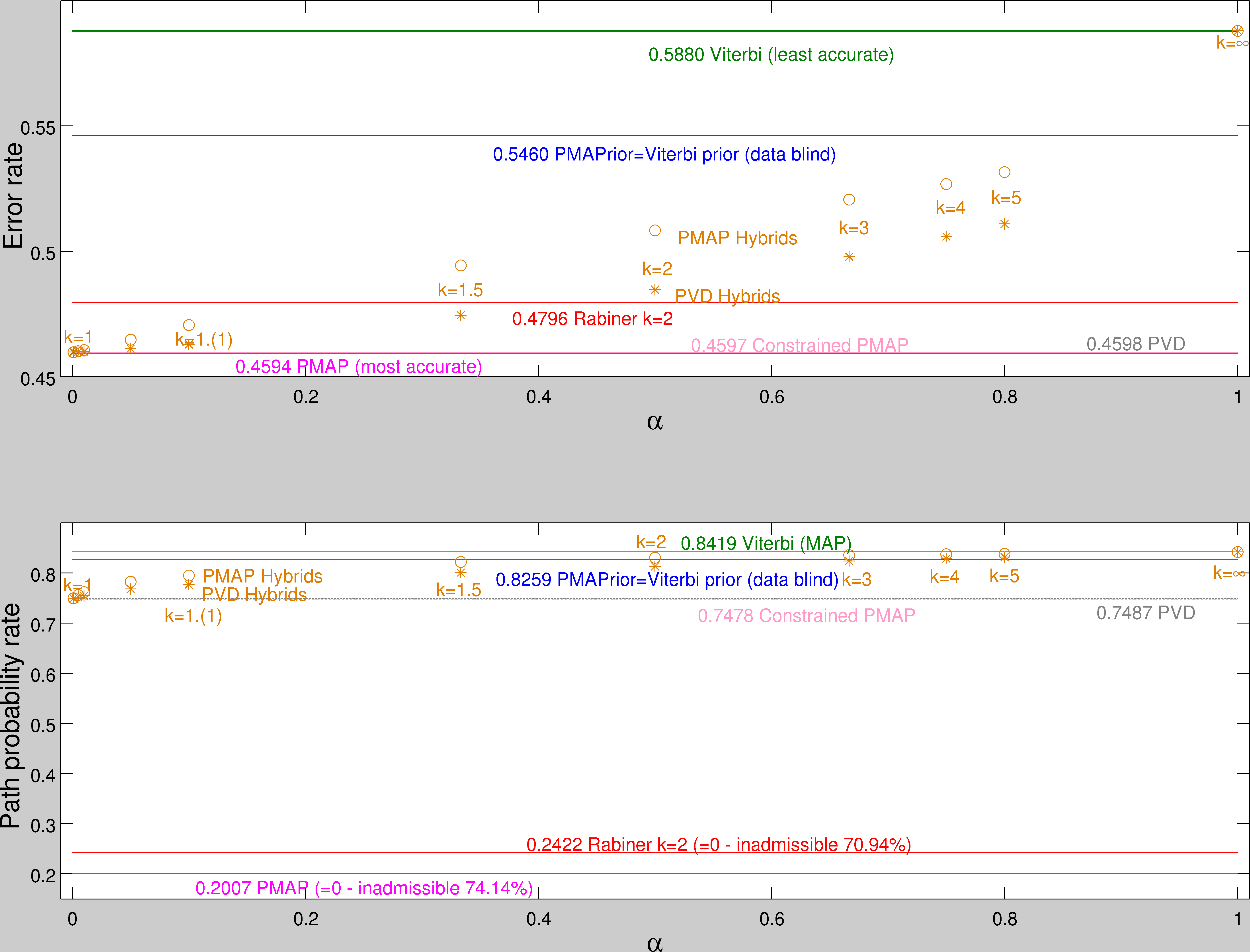}
 \caption{
Empirical error  \eqref{eq:ecv} (top) and probability \eqref{eqn:cvgeorate} (bottom) rates 
of the popular and some new members of the generalized
PVD (asterisk) and PMAP (circle) families. }

\label{fig:performance}
\end{figure}

\section{Asymptotic risks}\label{sec:asy}
Given a classifier $g$ and a risk function (which
it may have been more accurate to call a functional) $R$, the quantity
$R(g(x^T)|x^T)$ evaluates the risk when $g$ is applied
to a given sequence $x^T$. Below we will write $R(x^T)$ for the minimum risk 
$\min_{s^T}R(s^T|x^T)$, which is delivered by the Bayes
classifier $g^*$, i.e. $R(g^*(x^T)|x^T)=R(x^T)$.
Besides $R(X^T)$, we are also interested in
the random variables $R(g(X^T),X^T)$ (depending on $R$ and $g$). Thus, in \citep{kuljus},  
convergence of several risks of the Viterbi path has been considered. 
Since Viterbi paths $v(x^T,\infty)$ and $v(x^{T+1},\infty)$ may differ 
significantly, asymptotic analysis of the Viterbi decoding is far from being trivial. 
In \citep{AVTK2, AVT4, AVTproof},  we constructed  a well-defined process 
$v(X^\infty; \infty)$,  named also after Viterbi, that for a wide class of HMMs extends {\it ad infinitum} finite 
Viterbi paths $v(x^T,\infty)$ and possesses useful ergodic properties. 
Based on the asymptotic theory of Viterbi processes $v(X^\infty; \infty)$, it has
been shown in \citep{kuljus}
that under fairly general assumptions on the HMM, 
the random variables $R_k(v(X^T; \infty)|X^T)$, $\R_k(v(X^T; \infty)|X^T)$, where $k=1,2,\ldots$, and
$\R_{\infty}(v(X^T; \infty)|X^T)$,  as well as 
$\R_{\infty}(v(X^T; \infty))$ (see \eqref{eq:rinftyprior}),   $\R_{1}(v(X^T; \infty))$ (see \eqref{eqn:barR1prior}), 
and $R_1(v(X^T; \infty))$ (see \eqref{eqn:R1prior}) all
converge (as $T\to \infty$) {\it a.s.} to constant (i.e. non-random) limits. 
Convergence of these risks 
implies {\it a.s.} convergence of
$$C_1\R_{1}(v(X^T; \infty)|X^T)+C_2\R_{\infty}(v(X^T; \infty)|X^T)+C_3\R_1(v(X^T;\infty))+C_4\R_{\infty}(v(X^T;\infty)),$$
and
$$C_1R_{1}(v(X^T; \infty)|X^T)+C_2\R_{\infty}(v(X^T; \infty)|X^T)+C_3R_1(v(X^T;\infty))+C_4\R_{\infty}(v(X^T;\infty)),$$
the risks appearing in the generalized problems \eqref{gen-problem} and
\eqref{L1-gen-problem}, respectively. Actually,  convergence of
$\R_{\infty}(v(X^T; \infty), X^T)$ is also proved (and used in the proof of convergence
of $\R_{\infty}(v(X^T; \infty)|X^T)$).
Hence, the minimized risk in \eqref{gen-problem1},
evaluated at the Viterbi paths, converges as well.

The limits -- {\it asymptotic risks} -- are (deterministic) constants that
depend only on the model and help us assess the Viterbi inference in this principled way.
For example, let $R_1(k=\infty)$ be the limit (as $T\to\infty$) of $R_1(v(X^T; \infty)|X^T)$,
which is the asymptotic misclassification rate of
the Viterbi decoding. Thus, for large $T$, the Viterbi decoding makes about 
$TR_1(k=\infty)$ misclassification errors. The asymptotic risks might be, in
principle, found theoretically, but in reality can be rather difficult. However, 
since all these asymptotic results also hold in the $L_1$ sense, 
which implies convergences of expectations, 
the limiting risks can  be estimated by simulations.

In \citep{PMAP, PMAP2}, it has been also shown that under the same assumptions
$R_1(X^T)=R_1(v(X^T; 1)|X^T)$ converges to a constant limit, say
$R_1$. In \citep{kuljus}, 
$\R_1(X^T)=\R_1(v(X^T; 1)|X^T)$ has been also shown to converge. 
Clearly $R_1(k=\infty)\geq R_1(1)$, and even if their difference is
small, the total number of errors made by the Viterbi decoder
in excess of PMAP in the long run can still be significant.
\begin{sloppypar} 
Presently, we are not aware of a universal method  for proving (or improving upon) the
limit theorems for these risks. Recall that convergence of the risks of the Viterbi
decoding is possible due to the existence of the  Viterbi
process which has nice ergodic properties. The question whether infinite PMAP processes
have similar properties, is still open. Therefore, 
convergence of $R_1(X^T)$ 
 was proven with a completely
different method based on the smoothing probabilities. In fact, all of the limit
theorems obtained thus far have been proven with different methods. 
We conjecture that these different methods can be combined so that
convergence of the minimized combined risk \eqref{gen-problem} or
\eqref{L1-gen-problem} could be proven as well. In summary, as mentioned
before,  convergence of the minimized combined risks 
has thus far been obtained for trivial combinations only, i.e. 
with three  of the four constants being zero.  Note that while
convergence of the intermediate case \eqref{k-blockrec22}
with its minimizer $v(x^T; k(\alpha))$
is an open question, \eqref{vorratused} gives
\end{sloppypar}
\begin{equation*}
0\le \R_{\infty}(v(x^T; k(\alpha))|x^T)-\R_{\infty}(v(x^T; \infty)|x^T) \le {\R_1(v(x^T; \infty)|x^T)\over k-1}. 
\end{equation*}
This, together with the {\it a.s.} convergence of $\R_1(v(X^T; \infty)|X^T)$, 
implies that in the long run, for most sequences $x^T$, 
$\R_{\infty}(v(x^T; k)|x^T)$ will not exceed $\R_{\infty}(v(x^T; \infty)|x^T)$ 
by more than $\frac{1}{k-1}\lim_{T\to\infty}\R_1(v(X^T; \infty)|X^T)$.
Since this limit is finite, letting $k$ 
increase with
$T$,  we have $\R_{\infty}(v(X^T;k_T))$ approach $\lim_{T\to \infty}
\R_{\infty}(v(X^T;\infty))$ {\it a.s.}, i.e. as the intuition predicts, the
likelihood of   $v(X^T;k_T)$ approaches that of  $v(X^T;\infty)$.

Finally, in \citep{AVTproof, intech} we also outline possible applications of the above asymptotic
risk theory.  For example, if a certain number of the true labels $y_1, y_2, \ldots, y_T$ can be revealed 
(say, at some cost), the remaining labels would be computed by a constrained decoder, e.g.
the constrained Viterbi decoder. Having observed $x^T$, the user then needs to decide which positions
are ``most informative'' and then acquires their labels.  Assuming further that the HMM
is stationary, the $R_1$-like risks $P(v(X^\infty; \infty)_t \neq Y_t| X_{t-m}^{t+m}\in A)$ (for any $m\ge 1$
and any measurable set $A\in \mathcal{X}^{2m+1}$), are independent of $t$ (for $t=m+1,m+2,\ldots$), and 
could therefore be used  in the above active learning protocol for the selection of the most informative
 positions.
Specifically, if $A$ is such that $P(v(X^\infty; \infty)_t \neq Y_t| X_{t-m}^{t+m}\in A)$ is high,
then acquire labels at positions $t$ of occurrence of $A$.  Naturally, there are different ways to make
this concrete. For one simple example, suppose only a batch of $L$ labels can be acquired. 
Assuming $\mathcal{X}$ is discrete, order all the $\mathcal{X}$ words $A$ of length $q$ (i.e. $A\in \mathcal{X}^q$)
by $P(v(X^\infty; \infty)_t \neq Y_t| X_{t-m}^{t+m}\in A)$. Finally, from the 
$\mathcal{X}$ of length $q$ that occur in $x^T$, choose $L$ with the highest $P(v(X^\infty; \infty)_t \neq Y_t| X_{t-m}^{t+m}\in A)$.
The above asymptotic theory is crucial also for establishing 
$P(v(X^\infty; \infty)_t \neq Y_t| X_{t-m}^{t+m}\in A)$
as the {\it a.s.} limit of easily computable (e.g. via off-line simulations) empirical measures.  
In practice, these latter measures
would be used as estimates of $P(v(X^\infty; \infty)_t \neq Y_t| X_{t-m}^{t+m}\in A)$ and first experiments
along these lines are given in \citep[4.4]{intech}.  It is also of interest to test these ideas with 
other risks and decoders, such as members of the generalized hybrid families presented here.


\section{Discussion}\label{sec:discussion}
The point-wise symmetric zero-one loss $l(y,s)=\mathbb{I}_{\{s\neq y\}}$ in \eqref{point-loss}, \eqref{eqn:R1},
and consequently in the generalized PMAP hybrid decoding \eqref{L1-gen-problem}, 
can be easily replaced by a general loss $l(y,s)\ge 0$, $s,y\in S$ 
(assuming, without loss of generality, $\sum_{s,y\in S} l(y,s)=1$). In computational terms, 
this would require multiplying the loss matrix $(l(y,s))_{y,s\in S}$ by 
the (prior or) posterior probability vectors 
$(p_t(1|x^T),p_t(2|x^T),\ldots,p_t(K|x^T))^{'}$ to obtain the (prior or) posterior
risk $(R_t(1|x^T),R_t(2|x^T),\ldots,R_t(K|x^T))^{'}$ vectors (we use the apostrophe to
denote vector transpose).  The dynamic programming algorithm defined by \eqref{L1-gen-rec}
still stands provided $p_t(j|x^T)$ (or $p_t(j)$, or both) is replaced by $1-R_t(j|x^T)$
(or $1-R_t(j)$, or both respectively) in the definition of $\gamma_t(j)$. 
If all confusions of state $y$ are equally undesirable, i.e. $l(y,s)$ is of the form $l(y)\times \mathbb{I}_{\{s\neq y\}}$, 
then the above adjustment reduces to replacing $p_t(j|x^T)$ by $l(j)p_t(j|x^T)$ (for all $j\in S$).  

Using an asymmetric loss could be particularly valuable in practice when, for example, 
detection of a rare state or transition needs to be boosted. Similar views have been most recently expressed 
also in \citep{YauHolmes2010arXiv},
who, staying within the additive risk framework, have proposed a general asymmetric
form of the loss \eqref{eq:dnew} with $k=2$.  Hybridizing this general asymetric pairwise loss 
with the other losses considered in this work should provide additional flexibility to path
inference. A way to incorporate this loss into our generalized framework is by vectorizing
the chain $\{Y_t\}_{t\ge 1}$ as $\{(Y_t,Y_{t+1})\}_{t\ge 1}$ and then following the opening  lines of
this Section. 

Also, using a range
of perturbed versions of a loss function can help assess saliency of particular detections (``islands''). 
In fact, at the stage of data exploration
one may more generally want to use a collection of outputs produced by using
a range of different loss functions instead of a single one.

The logarithmic risks \eqref{loglikerisk}, \eqref{eqn:barr1}, \eqref{eq:rinftyprior}, 
\eqref{eqn:barR1prior} on the one hand, 
and the ordinary risks \eqref{eqn:Rinf}, \eqref{eqn:R1}, $R_\infty(s^T)=1-p(s^T)$, 
\eqref{eqn:R1prior}, on the other hand, can be respectively combined into a 
single parameter family of risks by using, for example,
the power transformation as shown below with $p$ for the moment standing for any probability distribution on $S^T$.  
\begin{eqnarray}\label{eqn:powerrisks}
 R_1(s^T; \beta)&=& \begin{cases}
                   - \frac{1}{T}\sum_{t=1}^T \frac{p_t(s_t)^\beta-1}{\beta}, &\text{if}~\beta\neq 0\\
                   - \frac{1}{T}\sum_{t=1}^T \log p_t(s_t) &\text{if}~\beta=0 
                  \end{cases}\\ \nonumber
R_\infty(s^T; \beta)&=& \begin{cases}
                   - \frac{1}{T}\frac{p(s^T)^\beta-1}{\beta}, &\text{if}~\beta\neq 0\\
                   - \frac{1}{T}\log p(s^T) &\text{if}~\beta=0 
                  \end{cases}
\end{eqnarray}
Thus,  the family of risk minimization problems  given in \eqref{supergen-problem} below
\begin{equation}\label{supergen-problem}
\min_{s^T}\Big[C_1{R}_1(s^T|x^T; \beta_1)+C_2{R}_{\infty}(s^T|x^T; \beta_2)+C_3{R}_1(s^T; \beta_3)+C_4{R}_{\infty}(s^T; \beta_4)\Big],
\end{equation}
 $C_i\ge 0$ and $\sum_{i=1}^4 C_i>0$ unifies and generalizes problem \eqref{gen-problem} ($\beta_1=\beta_2=\beta_3=\beta_4=0$)
and problem \eqref{L1-gen-problem} ($\beta_1=\beta_3=1$, $\beta_2=\beta_4=0$).
Clearly, the dynamic programming approach of Theorem~\ref{dyn} and \eqref{L1-gen-rec}
immediately applies to any member of the above family \eqref{supergen-problem} 
with $\beta_2=\beta_4=0$. Also, computations of multiple decoders from this 
family (at least with $\beta_2=\beta_4=0$) are readily parallelizable. 

Next, Theorem \ref{k-block} and Corollaries \ref{corollary0} and \ref{corollary} obviously
generalize to higher order Markov chains as can be seen from the following Proposition.
\begin{proposition}\label{prop:generalMC}
Let $p$ represent a Markov chain of order $m$,
  $1\le m\le T$, on $S^T$.
Then for any $s^T\in S^T$ and for any $k\in\{m,m+1,\ldots\}$, 
we have
$$\bar R_k(s^T)=\bar R_m(s^T)+(k-m)\bar R_\infty(s^T).$$
\end{proposition}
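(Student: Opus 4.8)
The plan is to reduce the asserted closed form to a one-step recursion and then iterate. Concretely, I would first show that for every $k\ge m+1$,
$$\U_k(s^T)=p(s^T)\,\U_{k-1}(s^T),$$
so that applying $-\frac1T\ln(\cdot)$ yields $\R_k(s^T)=\R_\infty(s^T)+\R_{k-1}(s^T)$. Iterating this recursion downwards from $k$ to the base level $m$ (where the claimed identity reads $\R_m=\R_m$ and holds trivially) then produces $\R_k=\R_m+(k-m)\R_\infty$, which is the assertion. Thus the entire content lies in the single-factor recursion, and the proof is a faithful transcription of the argument for Theorem~\ref{k-block} with ``first order'' replaced by ``order $m$''.

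To establish the recursion I would reuse the decomposition $\U_k=U_1^kU_2^kU_3^k$ from the proof of Theorem~\ref{k-block}. The two boundary factors behave exactly as there: $U_1^k=U_1^{k-1}p(s_1^{k-1})$ and $U_3^k=p(s_{T-k+2}^T)U_3^{k-1}$, and these identities are purely combinatorial, invoking no Markov assumption. For the bulk factor I apply the elementary chain rule to each full window, $p(s_{j+1}^{j+k})=p(s_{j+k}\mid s_{j+1}^{j+k-1})\,p(s_{j+1}^{j+k-1})$; the retained blocks $p(s_{j+1}^{j+k-1})$ then reassemble into $U_2^{k-1}$ precisely as in the first-order case, leaving behind the product of peeled conditionals $\prod_{t=k}^{T}p(s_t\mid s_{t-k+1}^{t-1})$.

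The only place where the order-$m$ hypothesis (and the restriction $k\ge m+1$) is used is the identification of these peeled conditionals with genuine one-step kernels. Since each conditioning window $s_{t-k+1}^{t-1}$ has length $k-1\ge m$, it contains the last $m$ coordinates $s_{t-m}^{t-1}$, so the order-$m$ property gives
$$p(s_t\mid s_{t-k+1}^{t-1})=p(s_t\mid s_{t-m}^{t-1})=p(s_t\mid s_1^{t-1}),\qquad t=k,\ldots,T.$$
Multiplying these by the factor $p(s_1^{k-1})$ salvaged from $U_1^k$ and telescoping through the ordinary chain rule collapses the product to $p(s_1^T)=p(s^T)$, which is exactly the extra factor in $\U_k=p(s^T)\U_{k-1}$.

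I expect the main obstacle to be the bookkeeping at the two ends of the path rather than any deep difficulty. One must verify that no window in $U_2^k$ conditions on fewer than $m$ states; the earliest such window, $p(s_1^k)$, conditions on $s_1^{k-1}$ of length $k-1\ge m$, so it is safe, and that the short truncated windows near $t=1$ and $t=T$ are all absorbed into $U_1^k$ and $U_3^k$. This bookkeeping also explains why the identity is anchored at $k=m$ and cannot be pushed lower: attempting the recursion at $k=m$ would force conditioning windows of length $m-1<m$, for which the order-$m$ reduction above fails, so $\R_m$ is the natural stopping point of the telescoping.
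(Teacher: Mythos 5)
Your proposal is correct and takes essentially the same route as the paper: the paper's entire proof is the single remark that the proposition is ``a straightforward extension of the proof of Theorem~\ref{k-block}'', and what you have written out is precisely that extension --- the same decomposition $\U_k=U_1^kU_2^kU_3^k$, the same boundary identities, and the peeling of one-step conditionals, with the order-$m$ property correctly invoked only to identify $p(s_t\mid s_{t-k+1}^{t-1})$ with $p(s_t\mid s_1^{t-1})$ when $k-1\ge m$, followed by telescoping down to the base case $k=m$. Your closing observation about why the recursion is anchored at $k=m$ is a useful clarification that the paper leaves implicit.
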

\begin{proof}
This is a straightforward extension of the proof of Theorem \ref{k-block}.
\end{proof}

The present risk-based discussion of HMM path inference also naturally extends
to the problem of optimal {\it labeling} or {\it annotation} (already mentioned in subsection \ref{sec:segment}).
Namely, the state space $S$ can be partitioned into subsets $S_1$, $S_2$, \ldots, $S_\Lambda$, for
some $\Lambda\le K$, in which case $\lambda(s)$ assigns label $\lambda$ to every state $s\in S_\lambda$. 
The fact that the PMAP problem is as easily solved over the label space $\Lambda^T$ as it is 
over $S^T$ has already been used in practice. Indeed, 
\citep{KallKrogh2005}, who also add the constraint of admissibility with respect to the prior distribution,
in effect average $p_t(s_t|x^T)$'s, for each $t$, within the label classes 
and then use recursions \eqref{rec1b} to obtain the {\em optimal accuracy  labeling} 
of  {\it a priori} admissible state paths. This  clearly corresponds to using the point loss 
$l(s,s')=\mathbb{I}_{\{\lambda(s)\ne \lambda(s')\}}$ in \eqref{point-loss}
when solving $\min_{s^T: p(s^T)>0}R_1(s^T|x^T)$ \eqref{eq:remove0s2}.  With our definition 
of admissibility (i.e. positivity of the posterior path probability), the same approach
(i.e. replacing $p_t(s_t|x^T)$'s by their within class average $\bar p_t(s_t|x^T)$)
extends to solve $\min_{s^T: p(s^T|x^T)>0}R_1(s^T|x^T)$ \eqref{eq:remove0s} under the same
loss $l(s,s')=\mathbb{I}_{\{\lambda(s)\ne \lambda(s')\}}$.
Clearly, the generalized problem \eqref{supergen-problem} also immediately
incorporates the above pointwise label-level loss in either the prior $R_1(\cdot; \beta_3)$ or posterior
risk $R_1(\cdot; \beta_1)$, or both. 
Since computationally these problems are essentially 
as light as  \eqref{L1-gen-rec} and since \citep{KallKrogh2005} report
their special case to be successful in practice, we believe that the above
generalizations offer yet more  possibilities that are potentially useful in practice.

Instead of using the same arithmetic averages $\bar p_t(s_t|x^T)$'s (or $\bar p_t(s_t)$'s)
for the $R_1$ risks in \eqref{supergen-problem} regardless
of $\beta$, we can gain additional flexibility by replacing $\bar p_t(s_t)^\beta$ and 
$\log  \bar p_t(s_t)$ in \eqref{eqn:powerrisks} ($\beta\neq 0$ and $\beta=0$ respectively)
with
\begin{eqnarray*}
 \bar p_t(s; \beta)&\propto&
\begin{cases}
\left(\frac{\sum\limits_{s'\in S_{\lambda(s)}}p_t(s')}{|S_{\lambda(s)}|}\right)^\beta, &\text{if}~\beta\neq 0,\\
\left(\prod\limits_{s'\in S_{\lambda(s)}}p_t(s')\right)^\frac{1}{|S_{\lambda(s)}|}, &\text{if}~\beta= 0.
\end{cases}
\end{eqnarray*}

Certainly, the choice of the basic loss functions,  inflection parameters 
$\beta_i$ and weights $C_i$ of the respective risks, is application
dependent, and can be tuned with the help of labeled data, using, for example, 
cross-validation. Finally, these generalizations are presented for the standard
HMM setting, and therefore extensions to more complex and practically more
useful HMM-based settings (e.g semi-Markov, autoregressive, etc.) could naturally
become of interest next.

\section*{Acknowledgement}
The first author has been supported  by the Estonian Science Foundation Grant nr. 9288 
and by targeted financing project SF0180015s12, which has also supported a research
visit of the second author to Tartu University. The second author has also been 
supported by UK NIHR Grant i4i II-AR-0209-10012.  The authors are also grateful to anonymous 
reviewers as well as to the action editor for their thorough reviews of this work, 
additional references, and comments and suggestions on improving this manuscript.
The authors are also very thankful to Dr Dario Gasbarra for reviewing an earlier version 
of the manuscript and pointing out a subtle mistake, as well as to Ufuk Mat for pointing
out some typing errors. 
\appendix
\section{An example of an inadmissible path of positive prior probability}
\label{sec:appendixposprior0posterior}
\begin{eqnarray*}
\pi&=&\begin{pmatrix}1 & 1 & 1 & 1 & 1 & 1 & 1 & 1 & 1 \end{pmatrix}/9, \\
P&=& \begin{pmatrix}
   5  &  0 & &  0 & 0  & 4 & 0 &  0 & 0 &  0 \\     
   1  &  1 & &  1 & 1  & 1 & 1 &  1 & 1 &  1 \\     
   0  &  0 & &  4 & 0  & 5 & 0 &  0 & 0 &  0 \\     
   1  &  1 & &  1 & 1  & 1 & 1 &  1 & 1 &  1 \\     
   0  &  1 & &  0 & 1  & 1 & 1 &  2 & 1 &  2 \\     
   1  &  1 & &  1 & 1  & 1 & 1 &  1 & 1 &  1 \\     
   3  &  3 & &  0 & 0  & 0 & 3 &  0 & 0 &  0 \\     
   1  &  1 & &  1 & 1  & 1 & 1 &  1 & 1 &  1 \\     
   0  &  0 & &  3 & 3  & 0 & 0 &  0 & 3 &  0
\end{pmatrix}/9.
 \end{eqnarray*}
To simplify the verifications, consider an emission alphabet
with only four symbols, although the idea of constructing this example readily 
extends to more alphabets with more states (in particular, to more practically
relevant situations where the emission alphabet is bigger that the hidden state
space or the emission distributions are continuous altogether). 
Then take the following emission distributions:
\begin{eqnarray*}
  \begin{pmatrix} 
    P_1     &     P_2 &  P_3& P_4 \\
      1/25  &        1/20  &         0   &        91/100   \\
       0    &         0    &        1/5  &         4/5     \\
      1/20  &        1/25  &         0   &        91/100   \\
       0    &         0    &        1/5  &         4/5     \\
      1/10  &         0    &        1/5  &         7/10    \\
       0    &         0    &        1/5  &         4/5     \\
      1/15  &        1/15  &         0   &        13/15    \\
       0    &         0    &        1/5  &         4/5     \\
      1/15  &        1/15  &         0   &        13/15 
 \end{pmatrix}.
\end{eqnarray*}
Suppose now that a sequence $x^3=(1,2,3)$ has been observed. It can then be 
verified that the (unconstrained) PMAP decoder returns any of the following
paths $(5,1,5)$, $(5,3,5)$, $(5,7,5)$, or $(5,9,5)$, all of which having
zero prior (and posterior) probabilities.   

When the decoder is subject
to the positivity constraint on the prior probabilities, it would return 
any of the following paths $(5,2,5)$, $(5,4,5)$,  $(5,5,5)$, $(5,6,5)$, 
$(5,8,5)$, which, despite being of positive prior probabilities, all have 
zero posterior probabilities. 

Finally, if the decoder is constrained
to produce paths of positive posterior probability, it would then return
any of the following paths $(5,7,2)$, $(5,7,6)$, $(3,3,5)$, $(9,3,5)$.

\section{Further details of the experiments from Section \ref{sec:example}}
\label{sec:appendix1} 
Below are the estimates of the HMM parameters obtained from the entire dataset
as described in Section \ref{sec:example}. 
\begin{eqnarray*}
& \hat \pi=& \begin{pmatrix}0.0016 & 0.0041 & 0.9929 & 0.0014 & 0.0000 & 0.0000 \end{pmatrix}, \\
\begin{matrix}
   1\\2\\3\\4\\5\\6
  \end{matrix}  &
 \widehat{\mathbb{P}}=&  \begin{pmatrix}
    0.8359 &   0.0034 &   0.1606 &        0 &        0 &        0    \\
    0.0022 &   0.8282 &   0.1668 &   0.0028 &        0 &        0    \\
    0.0175 &   0.0763 &   0.8607 &   0.0455 &        0 &        0    \\
         0 &        0 &        0 &   0.7500 &   0.2271 &   0.0229    \\
         0 &        0 &        0 &        0 &   0.8450 &   0.1550    \\
         0 &   0.0018 &   0.2481 &        0 &        0 &   0.7501    \end{pmatrix},\\
& \hat \pi_{inv}=& \begin{pmatrix}0.0511&    0.2029 &   0.4527&    0.0847 &   0.1240&    0.0847\end{pmatrix}, \\
\begin{matrix}
   A\\C\\D\\E\\F\\G\\H\\I\\K\\L\\M\\N\\P\\Q\\R\\S\\T\\V\\W\\Y
  \end{matrix}  
 & &
  \begin{pmatrix} 
    \widehat{P_1}     &     \widehat{P_2} &  \widehat{P_3}& \widehat{P_4}& \widehat{P_5} & \widehat{P_6} \\
    0.1059  &  0.0636 &   0.0643  &  0.1036 &   0.1230 &   0.1230  \\
    0.0107  &  0.0171 &   0.0135  &  0.0081 &   0.0111 &   0.0128  \\
    0.0538  &  0.0319 &   0.0775  &  0.0634 &   0.0415 &   0.0345  \\
    0.0973  &  0.0477 &   0.0620  &  0.1120 &   0.0852 &   0.0848  \\
    0.0436  &  0.0576 &   0.0330  &  0.0371 &   0.0386 &   0.0399  \\
    0.0303  &  0.0484 &   0.1133  &  0.0447 &   0.0321 &   0.0229  \\
    0.0203  &  0.0227 &   0.0259  &  0.0188 &   0.0197 &   0.0221  \\
    0.0564  &  0.1010 &   0.0372  &  0.0557 &   0.0694 &   0.0593  \\
    0.0672  &  0.0443 &   0.0574  &  0.0560 &   0.0671 &   0.0810  \\
    0.1227  &  0.1068 &   0.0674  &  0.0994 &   0.1279 &   0.1477  \\
    0.0240  &  0.0219 &   0.0181  &  0.0214 &   0.0293 &   0.0304  \\
    0.0299  &  0.0252 &   0.0561  &  0.0259 &   0.0338 &   0.0336  \\
    0.0333  &  0.0208 &   0.0757  &  0.0472 &   0.0067 &   0.0031  \\
    0.0443  &  0.0270 &   0.0330  &  0.0469 &   0.0497 &   0.0472  \\
    0.0594  &  0.0464 &   0.0470  &  0.0522 &   0.0677 &   0.0697  \\
    0.0496  &  0.0496 &   0.0744  &  0.0485 &   0.0422 &   0.0491  \\
    0.0395  &  0.0641 &   0.0572  &  0.0465 &   0.0412 &   0.0375  \\
    0.0591  &  0.1386 &   0.0473  &  0.0685 &   0.0677 &   0.0545  \\
    0.0168  &  0.0172 &   0.0111  &  0.0135 &   0.0130 &   0.0124  \\
    0.0359  &  0.0483 &   0.0286  &  0.0306 &   0.0332 &   0.0344               
\end{pmatrix}.
\end{eqnarray*}

\bibliographystyle{amsplain}
\bibliography{../ref,../../Microimage/ref,../../eur/ref,../../DWI/DTIMIX/ref,../Apples/ref}

\end{document}